\tikzset{
	events/.style={circle, minimum size=1cm, draw, align=center},
}
\tikzset{
	events/.style={circle, minimum size=1cm, draw, align=center},
}
\definecolor {processblue}{cmyk}{0.96,0.96,0,0}
\definecolor {processred}{cmyk}{0,0.96,0.96,0}
\newcommand{\nin}{\ensuremath{o_{\rho}}}
\newcommand{\equals}{\ensuremath{\approx}\xspace}
\newcommand{\nequals}{\ensuremath{\not \approx}\xspace}
\newcommand{\SUtrig}{\ensuremath{\textsf{Su}}\xspace}
\newcommand{\PRtrig}{\ensuremath{\textsf{Pr}}\xspace}
\newcommand{\SUt}{\ensuremath{\textsf{Su}_t} \xspace}
\newcommand{\PRt}{\ensuremath{\textsf{Pr}_t} \xspace}
\newcommand{\NOMt}{\ensuremath{\textsf{Nom}_t} \xspace}
\newcommand{\QueryLHS}[1]{\ensuremath{\Gamma_{#1}}\xspace} 
\newcommand{\QueryRHS}[1]{\ensuremath{\Delta_{#1}}\xspace} 
\newcommand{\GroundS}[1]{\ensuremath{N_{#1}}\xspace}
\newcommand{\InputQueryLHS}{\ensuremath{\Gamma_Q}\xspace}
\newcommand{\InputQueryRHS}{\ensuremath{\Delta_Q}\xspace}
\newcommand{\Required}[1]{\ensuremath{\Gamma_{#1}}\xspace} 
\newcommand{\Forbidden}[1]{\ensuremath{\Delta_{#1}}\xspace} 
\newcommand{\Grounding}[1]{\ensuremath{\sigma_{t}}\xspace}
\newcommand{\ruleword}[1]{\textsf{#1}}
\newcommand{\Rsystem}[1]{\ensuremath{R_{#1}}}
\newcommand{\RSystem}[2]{\ensuremath{R^{#2}_{#1}}}
\newcommand{\RequiredN}{\ensuremath{\Gamma_{o}}\xspace} 
\newcommand{\ForbiddenN}{\ensuremath{\Delta_{o}}\xspace} 
\newcommand{\HU}{\textsf{HU}}
\newcommand{\body}[1]{\textsf{Body}(#1)}
\newcommand{\rootc}{\ensuremath{v_r}\xspace} 
\newcommand{\rootSUtrig}{\ensuremath{\textsf{Su}^{r}}\xspace}
\newcommand{\rootPRtrig}{\ensuremath{\textsf{Pr}^{r}}\xspace}
\newcommand{\RuleSymbol}{\ensuremath{\Rightarrow}}
\newcommand{\Rule}[3]{\ensuremath{#1 \RuleSymbol\hspace{-0.1em} #3 \in #2}}
\newcommand{\RwRelSymbol}[1]{\ensuremath{\mathrel{\rightarrow_{#1}}}}
\newcommand{\RwRel}[3]{\ensuremath{#1 \RwRelSymbol{#2}\hspace{-0.1em} #3}}
\newcommand{\RwRelRefTransSymbol}[1]{\ensuremath{\mathrel{\overset{*}{\rightarrow}_{#1}}}}
\newcommand{\RwRelRefTrans}[3]{\ensuremath{#1 \RwRelRefTransSymbol{#2}\hspace{-0.1em} #3}}
\newcommand{\CongruenceSymbol}[1]{\mathrel{\overset{*}{\leftrightarrow}_{#1}}}
\newcommand{\Congruence}[3]{\ensuremath{#1 \CongruenceSymbol{#2}\hspace{-0.1em} #3}}
\newcommand{\II}{\ensuremath{\mathcal{I}} \xspace}
\newcommand{\JJ}{\ensuremath{\mathcal{J}} \xspace}
\newcommand{\model}[2]{\ensuremath{{#1}^{*}_{#2}} \xspace}
\newcommand{\Rmodel}[1]{\ensuremath{R_{#1}^{*}} \xspace}
\newcommand{\RModel}[2]{\ensuremath{(R^{{#2}}_{{#1}})^{*}} \xspace}
\newcommand{\nmodels}{\ensuremath{\not \models}} 
\newcommand{\ContextStructure}{\ensuremath{\mathcal{D}}\xspace}
\newcommand{\Contexts}{\ensuremath{\mathcal{V}}\xspace}
\newcommand{\Edges}{\ensuremath{\mathcal{E}}\xspace}
\newcommand{\core}[1]{\ensuremath{\textsf{core}_{#1}}\xspace}
\newcommand{\SC}[1]{\ensuremath{\mathcal{S}_{#1}}\xspace}
\newcommand{\Strategy}{\textsf{strat} \xspace }
\newcommand{\Onto}{\ensuremath{\mathcal{O}}\xspace}
\newcommand{\Atom}{\ensuremath{A}\xspace}
\newcommand{\Functions}{\ensuremath{\Sigma_{f}}\xspace}
\newcommand{\FunctionsOnto}{\ensuremath{\Sigma_{f}^{\mathcal{O}}}\xspace}
\newcommand{\Unpreds}{\ensuremath{\Sigma_{B}}\xspace}
\newcommand{\UnpredsOnto}{\ensuremath{\Sigma_{B}^{\Onto}}\xspace}
\newcommand{\Binpreds}{\ensuremath{\Sigma_{S}}\xspace}
\newcommand{\BinpredsOnto}{\ensuremath{\Sigma_{S}^{\Onto}}\xspace}
\newcommand{\AllIndiv}{\ensuremath{\Sigma_o^{\mathcal{O}}} \xspace}
\newcommand{\TrueIndiv}{\ensuremath{\Sigma_o^{\mathcal{O}}} \xspace}
\newcommand{\ostar}{\star\kern-0.42em{\circ}}
\newtheorem{theorem}{Theorem}
\newtheorem{lemma}{Lemma}
\theoremstyle{definition}
\newtheorem{definition}{Definition}
\providecommand{\bigsqcap}{%
  \mathop{%
    \mathpalette\@updown\bigsqcup
  }%
}
\newcommand*{\@updown}[2]{%
  \rotatebox[origin=c]{180}{$\m@th#1#2$}%
}
\newcommand{\sbin}{\mbox{\,\,}\hat{\in}\mbox{\,} }
\newcommand{\nsbin}{\mbox{\,\,}\hat{\not \in}\mbox{\,} }
\newcommand{\cmmnt}[1]{}
\newcommand{\napprox}{\not \approx}
\newcommand{\ab}{\textsf{a}}
\newcommand{\pred}{\textsf{p}}
\newcommand{\TOP}{\textsf{true}}
\newcommand{\SHIQ}{\ensuremath{\mathcal{SHIQ}} \xspace}
\newcommand{\ELHO}{\ensuremath{\mathcal{ELHO}} \xspace}
\newcommand{\logic}{\ensuremath{\mathcal{ALCHOIQ}} \xspace }
\newcommand{\clabel}[1]{\refstepcounter{clauses}\label{cl:#1}(\theclauses)}
\newcommand{\rlabel}[1]{\refstepcounter{rows}\label{row:#1}}
\newtheorem{corollary}{Corollary}
\crefname{equation}{}{}
\Crefname{equation}{}{}
\newlist{enumerateConditions}{enumerate}{2}
\setlist[enumerateConditions]{label={\arabic*.},ref={(\arabic*)}}
\crefname{rows}{row}{rows}
\crefname{clauses}{clause}{clauses}
\crefname{enumerateConditions}{condition}{conditions}
\newcommand{\citeA}[1]{\citeauthor{#1} [\citeyear{#1}]}
 \newcommand{\newmarkedtheorem}[1]{%
  \newenvironment{#1}
    {\pushQED{\qed}\csname inner@#1\endcsname%
     \renewcommand{\qedsymbol}{$\lhd$}\normalfont}
    {\popQED\csname endinner@#1\endcsname}%
  \newtheorem{inner@#1}%
}
\title{Consequence-based Reasoning for Description Logics with Disjunction, Inverse Roles, Number Restrictions, and Nominals}
\author{
David Tena Cucala, 
Bernardo Cuenca Grau, 
Ian Horrocks, 
\\ 
Department of Computer Science, University of Oxford  \\
first.last1[.last2]@cs.ox.ac.uk
}
\begin{document}

\maketitle

\begin{abstract}
  We present a consequence-based calculus for concept subsumption and classification in
  the description logic  $\logic$, which extends $\mathcal{ALC}$ 
  with role hierarchies, inverse roles, number restrictions, and nominals. By using standard
  transformations, our calculus extends to $\mathcal{SROIQ}$, which covers
  all of OWL 2 DL except for datatypes.
  A key feature of our calculus is its pay-as-you-go behaviour: unlike existing algorithms, our calculus is worst-case optimal
  for all the well-known proper fragments of $\logic$, albeit not for the full logic.
\end{abstract}

\section{Introduction}

Description logics (DLs) \cite{BCMNP03} are a family of knowledge representation formalisms 
which are widely used in applications.
Although the basic DL reasoning problems, such as concept satisfiability and subsumption,  
are of high worst-case complexity for expressive DLs,  different calculi
have been developed and implemented in practical systems.
 
Tableau and hyper-tableau calculi \cite{BaaderSattler-StudiaLogica,MoSH09a} are  a prominent reasoning technique
underpinning  many  DL reasoners \cite{DBLP:journals/ws/SirinPGKK07,fact++,DBLP:journals/semweb/HaarslevHMW12,ghmsw14HermiT,StLG14a}.
To check whether a concept subsumption relationship holds, 
(hyper-)tableau calculi attempt to construct a finite representation 
of an ontology model  disproving the given subsumption. The constructed models can, however, be large---a common source of performance issues;
this problem is exacerbated in classification tasks due to the large number of subsumptions to be tested. 

Another major category of DL reasoning calculi comprises methods based on first-order logic
 resolution \cite{BachmairGanzinger:HandbookAR:resolution:2001}. A common approach to
ensure both termination and worst-case optimal running time is to 
parametrise resolution to ensure that the calculus only derives a bounded number of clauses 
\cite{DeNivelleSchmidtHustadt00,DBLP:journals/jar/HustadtS02,DBLP:conf/birthday/SchmidtH13,hms08deciding,ganzinger99superposition,KaMo06a,HuMS04}. 
This technique has been implemented for instance, 
in the KAON2 reasoner for $\SHIQ$. Resolution can also be used to 
simulate model-building (hyper)tableau techniques \cite{HustadtSchmidt00a}, including blocking 
methods which ensure termination \cite{GeorgievaHustadtSchmidt03a}. 

 \emph{Consequence-based} (CB) calculi have emerged as a promising approach to DL reasoning combining 
  features of (hyper)tableau and resolution
 \cite{BaBL05,Kaza09a,KaKS12,DBLP:conf/kr/BateMGSH16}. On the one hand, similarly to resolution, they 
 derive formulae entailed by the ontology (thus avoiding the explicit construction of large models), and they 
 are typically worst-case optimal. 
 On the other hand, clauses are organised into \emph{contexts} arranged  as
 a graph structure reminiscent to that used for model construction in (hyper)tableau; this prevents
 CB calculi from drawing many 
 unnecessary inferences and yields a nice goal-oriented behaviour. Furthermore, in contrast to both resolution and (hyper)tableau,
 CB calculi  can verify a large number of subsumptions in a 
 single execution, allowing for one-pass  classification. Finally, CB calculi are very practical
 and systems based on them have shown outstanding performance.

CB calculi were first proposed for the $\mathcal{EL}$ family of 
DLs \cite{BaBL05,KaKS12},  and later extended to more expressive logics 
like Horn-$\mathcal{SHIQ}$ \cite{Kaza09a}, Horn-$\mathcal{SROIQ}$ \cite{OrRS10}, 
and $\mathcal{ALCH}$ \cite{SiKH11a}. A unifying framework for CB reasoning was 
developed in \cite{SiMH14a} for $\mathcal{ALCHI}$, introducing the notion of \emph{contexts} as a 
mechanism for constraining resolution inferences and making them goal-directed. 
The framework has been  extended to the DLs $\mathcal{ALCHIQ}$, which supports
number restrictions and inverse roles  \cite{DBLP:conf/kr/BateMGSH16}; $\mathcal{ALCHIO}$, 
which supports inverse roles and nominals \cite{DBLP:conf/dlog/CucalaGH17}, and
$\mathcal{ALCHOQ}$, supporting nominals and number restrictions \cite{DBLP:conf/dlog/KarahroodiH17}.
  
To the best of our knowledge, however, no CB calculus can handle 
DLs supporting simultaneously all Boolean connectives, inverse roles, number restrictions, and
nominals.  Such DLs, which underpin the standard ontology languages, 
pose significant challenges for consequence-based reasoning. Indeed, 
DLs lacking inverse roles, number restrictions, or
nominals enjoy a variant of the
\emph{forest model property}, which is exploited by reasoning algorithms.
However, no such property holds
when a DL simultaneously supports all the aforementioned features;
for non-Horn DLs,  this
results in a complexity jump from  \textsf{ExpTime} to \textsf{NExpTime},
and complicates the design of reasoning calculi \cite{HoSa05a}.

In this paper, we present the first consequence-based calculus for the DL 
$\logic$, which supports
 all Boolean connectives, role hierarchies, inverse roles, number restrictions, and nominals. By using well-known
transformations, our calculus extends to  $\mathcal{SROIQ}$, which covers
 OWL 2 DL except for datatypes \cite{HoKS06a}.
Following  \citeA{DBLP:conf/kr/BateMGSH16}, we encode consequences derived by the calculus as 
first-order clauses of a specific form and handle equality reasoning using a variant of
ordered paramodulation. To account for nominals, we allow for 
ground atoms in derived clauses and 
group consequences  about named individuals into a single
\emph{root context}. 
We have carefully crafted the rules of our calculus so that it exhibits
worst-case optimal performance for the known proper fragments of $\logic$.
In particular, our calculus works in deterministic exponential time for 
all of  $\mathcal{ALCHOI}$, $\mathcal{ALCHOQ}$, $\mathcal{ALCHIQ}$ and 
Horn-$\logic$. Furthermore, it works in polynomial time
for the lightweight DL $\mathcal{ELHO}$. Our calculus is, however,
not worst-case optimal for the full logic $\logic$, exhibiting similar worst-case running time as other well-known calculi for expressive DLs \cite{MoSH09a,KaMo06a}. 
The source of additional complexity is very localised and  
only manifests when disjunction, nominals, 
number restrictions and inverse roles interact simultaneously---a rare situation in practice. 
Although our results are theoretical, we believe that our calculus can be
seamlessly implemented
as an extension of the $\mathcal{SRIQ}$ reasoner Sequoia \cite{DBLP:conf/kr/BateMGSH16}.

\section{Preliminaries}

\noindent
\textbf{Many-sorted clausal equational logic. } We use
standard terminology for many-sorted first-order logic 
with equality ($\approx$) as the only predicate. 
This is w.lo.g.\  since predicates other than equality can be encoded by means of an entailment-preserving
transformation  \cite{NieuwenhuisRubio:HandbookAR:paramodulation:2001}. 

A many-sorted signature $\Sigma$ is a pair $\langle \Sigma^{S}, \Sigma^{F} \rangle$ with 
$\Sigma^{S}$ a non-empty set of sorts, and $\Sigma^{F}$ 
a countable set of function symbols. Each $f \in \Sigma^F$ is associated to a symbol type, which is an $n+1$-tuple $\langle \textsf{s}_1, \dots, \textsf{s}_{n+1} \rangle$
with each $\textsf{s}_i \in \Sigma^S$. The sort of $f$ is  
$\textsf{s}_{n+1}$ and its arity is $n$; if $n=0$, then $f$ is a constant.   
For each $\textsf{s} \in \Sigma^S$, let $X_{\textsf{s}}$ be a disjoint, countable 
set of variables. 
The set of terms
is the smallest set containing all variables in $X_{\textsf{s}}$ as terms of sort $\textsf{s}$, and  all expressions $f(t_1,\dots,t_n)$ as terms of sort $\textsf{s}_{n+1}$, where each $t_i$ is a term of sort $\textsf{s}_i$. 
A term is ground if it has no variables. We use the standard definition of a position $p$ of 
a term as an integer string identifying an occurrence of a subterm $t|_p$ and the 
standard notion of a substitution, represented as an expression $\{x_1 \mapsto t_1, \dots, x_n \mapsto t_n\}$ containing all non-identity mappings. 
We represent by $t[r]_p$ the result of replacing the subterm in position $p$ of $t$ by a term $r$ of the same sort. 

An equality is an expression  $s \approx t$ with $s$ and $t$  
terms of the same sort. An inequality is 
of the form $\neg (s \approx t)$, and is written
as $s \nequals t$. 
A literal is an equality or an inequality. A clause 
is a sentence  $\forall \vec{x}\,\, ( \Gamma \rightarrow \Delta)$, with the body
 $\Gamma$  a conjunction of equalities, the head $\Delta$ a disjunction of literals, and 
$\vec{x}$ the variables occurring in the clause. 
 The quantifier is often omitted,  conjunctions and disjunctions are treated as sets,
 and the empty conjunction (disjunction) is written as $\top$ ($\bot$). 
 
Let $\mbox{HU}^{\textsf{s}}$ be the set of ground 
terms of sort $\textsf{s}$. A Herbrand equality interpretation
 $\mathcal{I}$ is a set of ground equalities satisfying the usual properties of equality:
 (i) reflexivity, (ii) symmetry,  (iii) transitivity, and (iv) $t[s]_p \equals t \in \mathcal{I}$
 whenever $t|_p \equals s \in \mathcal{I}$, for each
 ground term $t$, position $p$, and ground term $s$.
  For any ground conjunction, ground disjunction, (not necessarily ground) clause, or a set thereof; an
  interpretation $\mathcal{I}$ satisfies it according to the usual criteria, with the difference that
  each quantified variable of sort $\textsf{s}$ ranges only over $\mbox{HU}^{\textsf{s}}$. We write $\mathcal{I} \models \alpha$
  if $\mathcal{I}$ satisfies $\alpha$, and say that $\mathcal{I}$ is a $\emph{model}$ of $\alpha$. Entailment is defined as usual. 

\noindent
\textbf{Orders} A strict (non-total) order $\succ$ on a non-empty set $S$ is a binary, irreflexive, and transitive
 relation between elements of $S$. A strict order induces a non-strict 
order $\succeq$ by taking the reflexive closure of $\succ$. A total order $>$ is a strict 
order such that for any $a,b \in S$, either $a > b$ or $b >a$. 
For any of these orders $\circ$, we write $a \circ N$, if $a \circ b$ for each $b \in N$, where $a\in S$, $N \subseteq S$.
The multiset extension of $\circ$ is defined as follows: 
for $S$-multisets $M$ and $N$, we have $M \circ N$ iff for each $a \in N \backslash M$, 
there is $b \in M \backslash N$ such that $b \circ a$, where $\backslash$ is the multiset difference operator. 
Order $\circ$ induces an order between literals by treating each equality $s \equals t$ 
as the multiset $\{s,t\}$, and each inequality $s \nequals t$ as the multiset $\{s,s,t,t\}$. 

\noindent
\textbf{Description Logics} 
DL expressions can be transformed into clauses of many-sorted equational logic in a way that preserves 
satisfiability and entailment \cite{DBLP:journals/tocl/SchmidtH07}. Following standard practice, 
we use a two-sorted signature with a sort $\ab$ representing standard FOL terms, and a sort $\pred$ for standard FOL atoms. 
The set of function symbols is the disjoint 
union of a set $\Unpreds$ of atomic concepts $B_i$ of type $\langle \ab, \pred \rangle$, 
a set $\Binpreds$ of atomic roles $S_i$ 
of type $\langle \ab, \ab, \pred \rangle$, a set $\Functions$ of functions
$f_j$ of type $\langle \ab, \ab \rangle$, and 
a set $\Sigma_o$ of named individuals $o$ of type $\langle \ab \rangle$. 
A term of the form $f_j(t)$ is an $f_j$ successor of $t$, and $t$ is its predecessor.
Our signature uses variables $\{x\} \cup \{z_i\}_{i\geq 1}$ of sort $\ab$, where $x$ is called a central
 variable, and each $z_i$ is a neighbour variable. 
A DL-\ab-term is a term of the form $z_i,x,f_i(x)$, or $o$. A 
DL-\pred-term is of the form $B_i(z_j)$, $B_i(x)$, $B_i(f_j(x))$, $B_i(o)$, $S_i(z_j,x)$, $S_i(x,z_j)$, $S_i(x,f_j(x))$, $S_i(f_j(x),x)$, $S_i(o,x)$ or $S_i(x,o)$. A DL-literal 
is either an equality of the form $A \equals \TOP$ (or just $A$) with $A$ a DL-\pred-term, 
or an (in)equality  between DL-\ab-terms. A DL-clause 
contains only body atoms of the form $B_i(x),S_i(z_j,x),$ or $S_i(x,z_j)$, and only DL-literals in the head.
To ensure completeness and termination of our calculus, we require 
that each $z_j$ in the head occurs also in the body and \textcolor{black}{that
if the body contains two or more neighbour variables $z_j$, then the clause is of the form DL4}. Note that clauses
in \cref{tab:ontology} satisfy these restrictions.
An \emph{ontology} is a finite set of DL-clauses. An ontology is $\logic$ if each DL-clause is of the form given in \cref{tab:ontology}. An $\logic$ ontology is 
$\mathcal{ALCHOI}$ if it does not contain axioms DL4 and all axioms DL2 satisfy $n = 1$;
it is $\mathcal{ALCHOQ}$ if it does not contain axioms DL6; it is 
$\mathcal{ALCHIQ}$ if it does not contain axioms  DL7-DL8. 
Furthermore, it is Horn if each axiom DL1 satisfies $n \leq m \leq n+1$ and 
each axiom DL4 satisfies $n =1$.
Finally, it is in \ELHO if it is Horn and contains only axioms DL1,  DL2 with $n = 1$, DL3, DL5, DL7, or DL8. 

\begin{table}[t]
\centering
\scalebox{0.74}{
\begin{tabular}{lrcl}
\rule{0pt}{3ex}  
DL1 & $ \displaystyle  \bigsqcap_{\substack{1 \leq i \leq n}}  B_i  \sqsubseteq \! \! \! \! \bigsqcup_{n+1 \leq i \leq m} \! \! \! \! B_i$  & $\rightsquigarrow$&$  \displaystyle \bigwedge_{1 \leq i \leq n } B_i(x) \rightarrow \displaystyle \! \! \! \! \bigvee_{n+1 \leq i \leq m} \! \! \! \! B_i(x) $  \\

  \multirow{2}{*}{DL2} & \multirow{2}{*}{$B_1 \sqsubseteq \geq n S.B_2$} & \multirow{2}{*}{$\rightsquigarrow$}  & $ B_1(x) \rightarrow B_2(f_i(x)),$~ $1 \leq i \leq n$   \\
 							 & & & $ B_1(x) \rightarrow S(x,f_i(x))$,~$1 \leq i \leq n$  \\
 							  & & & $ B_1(x) \rightarrow f_i(x) \nequals f_j(x)$, ~$ \! \!1 \leq i < j \leq n$  \\

DL3 & $\exists S. B_1 \sqsubseteq B_2$  & $\rightsquigarrow$  & $S(z,x) \wedge B_1(x) \rightarrow B_2(z) $ \\
 \multirow{2}{*}{DL4} & \multirow{2}{*}{$B_1 \sqsubseteq\, \leq \! n S.B_2$} & \multirow{2}{*}{$\rightsquigarrow$}  & $S(z_1,x) \wedge B_2(x) \rightarrow S_{B_2}(z_1,x)$  \\
 							 & & & $B_1(x) \wedge    \bigwedge_{\substack{1 \leq i \leq n+1}} S_{B_2}(x,z_i) \rightarrow$ \\
 							 & & & $\qquad \qquad \qquad  \bigvee_{1 \leq i < j \leq n+1} z_i \equals z_j$    \\
DL5 & $S_1 \sqsubseteq S_2$ & $ \rightsquigarrow$ & $S_1(z_1,x) \rightarrow S_2(z_1,x)$ \\ 
DL6 & $S_1 \sqsubseteq S^{-}_2$ & $\rightsquigarrow$ & $S_1(z_1,x) \rightarrow S_2(x,z_1)$ \\
DL7 & $\{o\} \sqsubseteq B_1$  & $ \rightsquigarrow$  & $\top  \rightarrow B_1(o)$ \\ 
DL8 & $B_1 \sqsubseteq \{o\}$  &$ \rightsquigarrow$ & $B_1(x) \rightarrow  x \equals o  $ 
\label{tab:ALCHOIQ} 
\end{tabular}}
\caption{DL axioms as clauses. Roles $S_{B_2}$ in DL4 are fresh. }
\label{tab:ontology}
\end{table}

\section{A CB Algorithm for  $\logic$}

Consequence-based reasoning combines features of both
(hyper-)tableau calculi and resolution. 
Although the presentation of CB calculi varies in the literature, all CB calculi 
that we know of share certain core characteristics.
First, they derive in a single run all consequences of a certain form
 (typically subsumptions between atomic concepts, $\top$ and $\bot$) and hence 
 they are not just refutationally complete. 
Second, like resolution, they compute a saturated set of clauses---represented either as DL-style axioms
\cite{BaBL05,Kaza09a,SiKH11a,SiMH14a} or using first-order notation \cite{DBLP:conf/kr/BateMGSH16,DBLP:conf/dlog/CucalaGH17,DBLP:conf/dlog/KarahroodiH17}---the shape of which is restricted to ensure termination.
Third, unlike resolution, 
where all  clauses are kept in a single set, CB calculi construct a graph-like \emph{context structure}
where clauses can only interact with other clauses in the same context or in neighbouring contexts, thus
guiding the reasoning process in a way that is reminiscent of
(hyper-)tableau calculi. Fourth, the expansion of the context structure during a run of the algorithm is determined by an \emph{expansion strategy}, which controls
 when and how to create or reuse contexts.

In the remainder of this section we define our CB calculus, and specify a reasoning algorithm based on it.
We then establish its key correctness and complexity properties.

\subsection{Definition of the Calculus}\label{sec:calculus}

Throughout this section we fix an arbitrary  ontology $\Onto$, and we let
$\FunctionsOnto$, $\UnpredsOnto$ and $\BinpredsOnto$ be the sets of functions, atomic concepts and atomic roles 
in $\Onto$, respectively; all our definitions and theorems are 
implicitly relative to $\Onto$.

The set of \emph{nominal labels} $\Pi$ for $\Onto$ is the smallest set containing the empty string and
every string $\rho$ of the form 
$S_1^{j_1} \cdot ... \cdot S_n^{j_n}$, with $j_k \in \mathbb{N}$, 
and $S_i \in \Binpreds$. The set of named individuals
$\AllIndiv$ for $\Onto$ is then defined as $\{o_{\rho} \;|\; o~ \text{individual in  } \Onto, \rho \in \Pi\}$.
Intuitively, the set $\AllIndiv$ consists of the individuals occurring explicitly in $\Onto$ plus a
set of \emph{additional nominals}, the introduction of which is reminiscent of 
existing (hyper-)tableau calculi \cite{HoSa05a,MoSH07a}.

Following \citeA{DBLP:conf/kr/BateMGSH16}, our calculus for $\logic$ represents all derived consequences in contexts 
as \emph{context clauses} in many-sorted equational logic, rather than DL-style axioms.
Context clauses use only variables $x$ and $y$, which carry
a special meaning. Intuitively, a context represents a set of similar elements in a model of the ontology;
when variable $x$ corresponds to such an element, $y$ corresponds to its predecessor, if it exists. This naming convention determines rule application in the calculus, and should
be distinguished from variables $x$ and $z_i$ in DL-clauses, where the latter can map, in a canonical model,
to both predecessors and successors of the elements assigned to $x$.
Context clauses are defined analogously to \cite{DBLP:conf/kr/BateMGSH16}, where the main difference
is that we allow context literals mentioning named individuals; furthermore, our calculus defines
a distinguished \emph{root context} where most inferences involving such literals take place. 
This context represents the non tree-like part of the model, and it exchanges information with other contexts using newly devised inference rules.
\begin{definition} \label{def:context-terms}
	A \textit{context \ab-term} is a term of sort $\ab$ which is either $x$, or $y$, or a named individual $o \in \AllIndiv$, 
	or of the form $f(x)$ for $f \in \FunctionsOnto$. A \textit{context \pred-term} is a term of sort $\pred$ 
	of the form $B(y)$, $B(x)$, $B(f(x))$, $B(o)$, $S(x,y)$, $S(y,x)$, $S(x,x)$, $S(x,f(x))$, $S(f(x),x)$, 
	$S(x,o)$, $S(o,x)$, $S(o,o')$, for $f \in \FunctionsOnto, o \mbox{ and } o' \in \AllIndiv$, $B \in \UnpredsOnto$, and $S \in \BinpredsOnto$. 
	A \textit{root context \ab-term} (\pred-term) is a term of sort $\ab$ (\pred) of the form $t \{x \mapsto o'\}$, with $t$ a context $\ab$-term ($\pred$-term) and $o' \in \AllIndiv$.
	A (root) \textit{context atom} is an equality of the form $\Atom \equals \TOP$, written simply as $\Atom$, with $\Atom$
	a context (root) $\pred$-term; a (root) \textit{context literal} is a (root) context atom, \textcolor{black}{an inequality $\TOP \nequals \TOP$}, or an equality or inequality between \ab-terms (\textcolor{black}{replacing $x$ by $o' \in \AllIndiv$}). 
	A (root) \textit{context clause} is a clause of (root) context atoms in the body and (root) context literals in the head.
	A \textit{query} clause has only atoms of the form $B(x)$, with $B \in \UnpredsOnto$.
\end{definition}
The kinds of information to be exchanged between adjacent contexts 
 is determined by
a set of \emph{triggers}, which are named after the rules that they activate.
\begin{definition}
The set of \textit{successor triggers} \SUtrig is the smallest set of atoms satisfying the following properties 
for each
clause $\Gamma \rightarrow \Delta$ in \Onto: \emph{(i)} $B(x) \in \Gamma$ implies $B(y) \in \SUtrig$;
\emph{(ii)} $S(x,z_i) \in \Gamma$ implies $S(x,y) \in \SUtrig$; and
\emph{(iii)}  $S(z_i,x) \in \Gamma$ implies $S(y,x) \in \SUtrig$.
The set of \textit{predecessor triggers} $\PRtrig$ is defined as the set of literals
   \begin{multline*}
         \{A\{x \mapsto y, \; y \mapsto x\} \mid A \in \SUtrig \} \cup \{B(y) \mid B \in \UnpredsOnto \} \cup \\
            \{ x \equals y \} \cup \{x \equals o \mid o \in \AllIndiv \} \cup   \{y \equals o \mid o \in \AllIndiv \}.
    \end{multline*}
The set of \emph{root successor triggers} $\rootSUtrig$ 
consists of all atoms $B(o)$, $S(y,o)$ and $S(o,y)$ with $B \in \UnpredsOnto$, $S \in \BinpredsOnto$ and $o \in \AllIndiv$. \textcolor{black}{The set of \emph{root predecessor triggers} $\rootPRtrig$ 
consists of $\rootSUtrig \cup \{B(y) \mid B \in \UnpredsOnto\} \cup  \{ y \equals o \mid o \in \AllIndiv\}$.}
\end{definition}
The definition of triggers extends that in \cite{DBLP:conf/kr/BateMGSH16} by considering 
equalities of a variable and an individual as information that should be propagated to predecessor contexts, and by
identifying a specific set of triggers for propagating information to and from the distinguished root context.

Same as in resolution calculi and other CB calculi, clauses are ordered 
using a term order $\succ$
based on a total order $\gtrdot$ on function symbols of sort \ab.
The order restricts the derived  clauses since only 
$\succ$-maximal literals can participate in inferences. The following definition specifies the conditions that $\succ$
must satisfy; although
each context can use a different $\succ$ order,  \ab-terms are compared in the same way
across all contexts since $\gtrdot$ is globally defined. 
In \cref{sec:appendix-order} we show how a context order  can be constructed once $\gtrdot$ is fixed.

\begin{restatable}{definition}{contextorder}\label{def:order}
Let $\gtrdot$ be a total order on symbols of $\FunctionsOnto$ and $\AllIndiv$ such that for every $\rho \in \Pi$, if $\rho = \rho' \cdot \rho''$, then $o_{\rho} \gtrdot o_{\rho'}$. A (root) \textit{context order} $\succ$ w.r.t. $\gtrdot$ is a strict order on (root) context atoms 
satisfying each of the following properties:
\begin{enumerateConditions}[wide, labelwidth=!, labelindent=0pt]
	\item \label{def:order:variable}$A \succ x \succ y \succ \TOP$ for each context \pred-term $A \neq \TOP$;
	\item \label{def:order:individual}$n \succ m$ for each pair $n,m \in \AllIndiv$ with $n \gtrdot m$;
	\item \label{def:order:function} $f(x) \succ g(x)$, for all $f,g \in \Sigma^{\mathcal{O}}_f$ with $f \gtrdot g$;
	\item \label{def:order:simplification}$t[s_1]_p \succ t[s_2]_p$ for any context term $t$, position $p$, and context terms $s_1,s_2$ such that $s_1 \succ s_2$;
	\item \label{def:order:subterm} $s \succ s|_p$ for each context term $s$ and proper position $p$ in $s$;
	\item \label{def:order:forbidden}$A \nsucc s$ for each atom $A \approx \textsf{true} \in \PRtrig$ \textcolor{black}{($\rootPRtrig$)} and context term $s \notin \{x,y,\textsf{true}\} \cup \AllIndiv$.
\end{enumerateConditions}
\end{restatable}
The main difference between Definition \ref{def:order} and the orderings used in prior work is the additional requirement that the global order $\gtrdot$
must satisfy on the set of nominals; this is necessary to ensure both completeness and termination.

We use a notion of
redundancy elimination analogous to that of prior work to significantly reduce the amount of clauses 
derived by the algorithm. 
\begin{definition}\label{def:redundancy-elim}
	A set of clauses $U$ contains a clause $\Gamma \rightarrow \Delta$ \textit{up to redundancy}, written
	$\Gamma \rightarrow \Delta \sbin U$ if
	\begin{enumerate}[wide, labelwidth=!, labelindent=0pt]
	\item \label{cond:redundancy:equality}  $t \approx t \in \Delta$ or $\{ t \equals s, t \nequals s \} \subseteq \Delta$  for some \ab-term $t,s$, or
	\item \label{cond:redundancy:subset} $\Gamma' \rightarrow \Delta' \in U $ for some $\Gamma' \subseteq \Gamma$ and $\Delta' \subseteq \Delta$. 
	\end{enumerate}

\end{definition}
The first condition in Definition \ref{def:redundancy-elim} captures tautological statements, whereas the second condition 
captures clause subsumption. Similarly to prior work, clauses $A \rightarrow A$ are not deemed tautological in our calculus since they imply that
atom $A$ may hold in a context.

We next define the notion of a context structure $\ContextStructure$ 
as a digraph. Each node $v$, labelled with 
a set of clauses $\SC{v}$, represents a set of ``similar'' terms in a model of $\Onto$; 
edges, labelled with a function symbol, represent connections between neighboring contexts. 
Each context $v$ is assigned
a core $ \core{v}$ specifying the atoms that must hold for all terms in the canonical model described by the context, and
 a term order $\succ_v$ that restricts the inferences applicable to $\SC{v}$; since  $\core{v}$ holds 
 implicitly in a context, the conjunction  $\core{v}$ is not included in the body of any clause in 
 $\SC{v}$.
\begin{definition}\label{def:context-structure}
A context structure for $\Onto$ is a tuple $\ContextStructure = \langle \Contexts, \Edges, \core{}, \SC{}, \gtrdot, \succ  \rangle$ 
where $\Contexts$ is a finite set of
\emph{contexts} containing the  \emph{root context} $\rootc$; 
$\Edges$ is a subset of $\Contexts \times \Contexts \times \FunctionsOnto$;
$\core{}$ is a function mapping
each context $v$ to a conjunction \core{v} of atoms of the form $B(x)$, $S(x,y)$, $S(y,x)$; 
$\SC{}$ is a function mapping each
non-root context $v$  to a set of context clauses  and
$\rootc$ to a set of root context clauses, $\gtrdot$ is a total order 
 and  $\succ$ is a function mapping each (root) context $v$ to a (root) context order $\succ_v$ w.r.t. $\gtrdot$ where
 all $\gtrdot$ and $\succ_v$ 
satisfy  Definition \ref{def:order}.
\end{definition}

We now define when a context structure is sound with respect to $\Onto$.
In prior work, soundness was defined by requiring that clauses derived by the calculus are
logical consequences of $\Onto$. 
Our calculus, however, introduces additional nominals, and clauses mentioning them are
not logical consequences of $\Onto$. 
Furthermore, not all the the additional nominals generated by our calculus will correspond to actual elements of a canonical model.
To address this difficulty, we introduce the following notions of $N$-reduction and $N$-compatibility.

\begin{definition}
Let $N$ be a (possibly empty) set of additional nominals in $\AllIndiv$. 
Interpretation $\II$ with domain $\Delta^{\II}$ is $N$-compatible if the following conditions hold for each
non-empty nominal label $\rho$, which we rewrite as $\rho = \rho' \cdot S$:
\begin{itemize}[wide, labelwidth=!, labelindent=0pt]
\item[-]  if $o_{\rho} \in N$,  $\II \nmodels S(o_{\rho'},u)$ for each $u \in \Delta^{\II}$,

\item[-]  if $o_{\rho} \not\in N$, $\II \models S(o_{\rho'},o_{\rho})$, \textcolor{black}{and there is $k_0 \in \mathbb{N}$ s.t. for each $u \in \Delta^{\II}$, $\II \models S(o_{\rho'},u)$ implies $\II \models u \equals o_{\rho' \cdot S^{k}}$ for some $k \leq k_0$, and for each $k>k_0,$ $\II \models o_{\rho' \cdot S^{k}} \equals o_{\rho' \cdot S^{1}}$'.}

\end{itemize}
The $N$-reduction $N(\Gamma \to \Delta)$ of a clause $\Gamma \to \Delta$ is empty
if an element of $N$ occurs in $\Gamma$ \textcolor{black}{or in an inequality in $\Delta$}, and the clause $\Gamma \to N(\Delta)$ otherwise, with $N(\Delta)$ 
obtained from $\Delta$ by removing all literals mentioning a term in $N$. 
\end{definition}
Intuitively, given $N$ and a model $\II$ of $\Onto$, the notion of $N$-reduction tests whether it is possible to map 
the additional nominals not in $N$ occurring in derived clauses  to actual domain elements of $\II$ without
invalidating the model.  This intuition leads to the following notion of soundness.

\begin{definition}\label{def:sound-context-structure}
A context structure $\ContextStructure \! = \! \langle \Contexts, \Edges,\! \core{}, \!\SC{}, \! \gtrdot,\! \succ \rangle$
is \newline \textit{sound} if, for every model $\II$ of $\Onto$, there exists a (possibly empty) set $N$ of additional nominals
and an $N$-compatible conservative extension $\JJ$ of $\II$ satisfying the following clauses:
\emph{(i)} $N(  \core{v} \wedge \Gamma  \rightarrow  \Delta ) $ for each $v \in \Contexts$ and 
each $\Gamma \rightarrow \Delta$ in $\SC{v}$;  and
\emph{(ii)}  $N(\core{u} \rightarrow \core{v} \{x \mapsto f(x), y \mapsto x\})$ for each  $\langle u, v, f \rangle \in \Edges$. 
\end{definition}

As in existing CB calculi, the rules of our calculus 
are parameterised by an \textit{expansion strategy}  used to decide 
whether to create new contexts or re-use already existing ones. 

\begin{definition}
An expansion strategy \Strategy is a polynomially computable function which takes as 
input a triple $(f,K_1,\ContextStructure)$, where $f \in \FunctionsOnto$, $K_1 \subseteq \SUtrig$, $\ContextStructure = \langle \Contexts, \Edges, \SC{}, \core{}, \gtrdot, \succ \rangle$ is a context structure, and returns a triple $(v,\core{},\succ)$ such that $\core{} \subseteq K_1$, $\succ$ is a context order w.r.t $\gtrdot$, and either $v \notin \mathcal{V}$ or otherwise $v \neq v_r$, $\core{}= \core{v}$ and $\succ\ =\ \succ_v$.
\end{definition}

Three expansion strategies are typically considered in practice (see \cite{SiKH11a} for details).
The \textit{trivial} strategy pushes all inferences to a single context $v_{\top}$ with empty core and always returns $(v_{\top},\top)$. 
The \textit{cautious} strategy only creates contexts for concept names in existential restrictions; it returns $(v_{\top},\top)$ 
unless $f$ occurs in \Onto in exactly 
 one atom $B(f(x))$ with $B \in \Sigma_B^{\mathcal{O}}$ and $B(x) \in K_1$, in which case
 it returns $(v_B,B(x))$. Finally, the \textit{eager} strategy creates a new context for each conjunction $K_1$ by returning $(v_{K_1},K_1)$.

\begin{table}
\centering
\scalebox{0.8}{
\begin{tabular}{c|l}
\hline \hline
\multirow{2}{*}{\rotatebox{90}{\textsf{Core}}}  
& If 1. $A \in \core{v}$ \\
  & then add $\top \rightarrow A$ to $\SC{v}$. \\ \hline 
 
\multirow{3}{*}{\rotatebox{90}{\textsf{Hyper}}} 
& If 1. $\bigwedge_{i=1}^n A_i \rightarrow \Delta \in \Onto$ and $\sigma(x)\!=\!x$ if $v \neq v_r$ or $\sigma(x)\! \in  \AllIndiv$ o.w.\ \\
 & $\phantom{\mbox{If}}$ 2. and $\Gamma_i \rightarrow \Delta_i \vee A_i \sigma \in \SC{v}$ with $\Delta_i  \nsucceq_v A_i \sigma$, for $1 \leq i \leq n$, \\
 & then add $\bigwedge_{i=1}^n \Gamma_i \rightarrow \bigvee_{i=1}^n \Delta_i \vee \Delta \sigma$ to $\SC{v}$.\\ \hline \hline
  
\multirow{5}{*}{\rotatebox{90}{\textsf{Eq}}}
 & If 1. $\Gamma_1 \rightarrow \Delta_1 \textcolor{black}{\vee} s_1 \approx t_1 \in \SC{v}$ with $t_1 \not \succeq_v s_1$ and $\Delta_1 \not \succeq_v s_1 \approx t_1$, \\
 & $\phantom{\mbox{If}}$ 2.  $\Gamma_2 \rightarrow \Delta_2 \textcolor{black}{\vee} s_2 \bowtie t_2 \in \SC{v}$ with $\bowtie\, \, \in \! \{\approx, \napprox\}$, \\
 & $\phantom{\mbox{If 3.}}$  $t_2 \not \succ_v s_2$, $\Delta_2 \nsucceq_v s_2 \bowtie t_2$, and $s_2|_{p}$ is not a variable, \\ 
 & \textcolor{black}{$\phantom{\mbox{If 3.}}$ and if $s_2|_{p} \in \AllIndiv$, then $s_2$ contains no function symbols.} \\
 & then add $\Gamma_1 \wedge \Gamma_2 \rightarrow \Delta_1 \vee \Delta_2 \vee s_2[t_1]_p \bowtie t_2$ to $\SC{v}$. \\ \hline

\multirow{2}{*}{\rotatebox{90}{\textsf{Ineq}}}
 & If 1. $\Gamma \rightarrow \Delta \vee t \napprox t \in \SC{v}$, \\
 & then  add $\Gamma \rightarrow \Delta$ to $\SC{v}$.\\ \hline

\multirow{3}{*}{\rotatebox{90}{\textsf{Fact}}}
 & If 1. $\Gamma \rightarrow \Delta \vee s \approx t \vee s \approx t' \in \SC{v}$, \\
 & $\phantom{\mbox{If}}$ 2. with $\Delta \cup \{s \approx t\} \nsucceq_v s \approx t'$ and $t' \not \succ_v s$,\\
 & then  add $\Gamma \rightarrow \Delta \vee t \napprox t' \vee s \approx t'$ to $\SC{v}$. \\ \hline 

\multirow{3}{*}{\rotatebox{90}{\textsf{Elim}}}  
 & If 1. $\Gamma \rightarrow \Delta \in \SC{v}$ \\
 &  $\phantom{\mbox{If}}$ 2. and $\Gamma \rightarrow \Delta \sbin \SC{v}\backslash (\Gamma \rightarrow \Delta)$ \\
 &  then remove  $\Gamma \rightarrow \Delta$ from $\SC{v}$. \\ \hline  \hline
 
 \multirow{7}{*}{\rotatebox{90}{\textsf{Pred}}}  
& If 1. $\bigwedge_{i=1}^{n} A_i \wedge  \bigwedge_{i=1}^{m} C_i   \rightarrow \bigvee_{i=1}^{k} L_i \in \SC{v}$ for $v \neq v_r$,\\
& $\phantom{\mbox{If 1. }}$ where each $C_i$ is ground, and each $A_i$ is nonground\\
& $\phantom{\mbox{If}}$ 2. $L_i \in \PRtrig$ for each nonground $L_i$,  \\
& $\phantom{\mbox{If}}$ 3. and there is $\langle u,v,f \rangle \in \mathcal{E}$ such that \\
& $\phantom{\mbox{If}}$ 4. for each $A_i$, there is $\Gamma_i \rightarrow \Delta_i \vee A_i \sigma \in \mathcal{S}_u$ with $\Delta_i  \nsucceq_u A_i \sigma$; \\
& $\phantom{\mbox{If}}$ 5. $\sigma\!\!=\!\! \{ y\!  \mapsto\!  x, \! x \mapsto \! f(x)\!  \}$ \textcolor{black}{if $\! u \! \neq\!  v_r$, $\sigma = \{ y \mapsto o, x\!  \mapsto\!  f(o) \}$ o.w.},  \\
& then add $\bigwedge_{i=1}^{n} \Gamma_i \wedge  \bigwedge_{i=1}^{m} C_i   \rightarrow \bigvee_{i=1}^n \Delta_i \, \vee  \bigvee_{i=1}^{k} L_i \sigma $ to $\SC{v}$, \\ \hline

\multirow{11}{*}{\rotatebox{90}{\textsf{Succ}}} 
 & If 1. $\Gamma \rightarrow \Delta \vee A \in \mathcal{S}_u$ where $\Delta  \nsucceq_u A$  \\
 & $\phantom{\mbox{If 1.}}$  and $A$ contains $f(x)$ if $u \neq v_r$ or $f(o)$ for some $o \in \AllIndiv$ o.w.,  \\
 & $\phantom{\mbox{If}}$ 2. there is no $\langle u, v, f\rangle \in \mathcal{E}$ s.t. $A' \rightarrow A' \sbin \SC{v}$ $\forall$  $A' \in K_2 \backslash \core{v}$ \\
 & then 1. let $\langle v, {\core{}}' , \succ' \rangle = \textsf{strat}(f,K_1,\mathcal{D})$ and if $v \notin \mathcal{V}$, then let \\
 & $\phantom{\mbox{then}}$ 2. $\mathcal{V} = \mathcal{V} \, \cup \{v\}$, and $\textsf{core}_v = \textsf{core}'$, $\succ_v \, = \, \succ'$, and $\SC{v} = \emptyset$. \\
 & $\phantom{\mbox{then}}$ 3. Add the edge $\langle u, v ,f \rangle$ to $\mathcal{E}$. \\
 & $\phantom{\mbox{then}}$ 4. Add $A' \rightarrow A'$ to $\SC{v}$ for each $A' \in K_2 \backslash \textsf{core}'_v$, where  \\ 
 & $\phantom{\mbox{then}}$ 5. $\sigma =\{ y \mapsto x, x \mapsto f(x)\}$ if $u \neq v_r$, or \\
 & $\phantom{\mbox{then}}$ 6. $\sigma = \{ y \mapsto o, x \mapsto f(o)\}$ if $u = v_r$, and \\  
 & $\phantom{\mbox{then}}$ 7.  $K_1 = \{A' \in \SUtrig\;|\; \top \rightarrow A' \sigma \in \mathcal{S}_u\}$, and \\
 & $\phantom{\mbox{then}}$ 8.  $K_2 = \{A' \in \SUtrig \;|\; \Gamma' \rightarrow \Delta' \vee A' \in \mathcal{S}_u \}$  and $\Delta' \nsucceq_u A' \sigma $.\\ \hline \hline

\end{tabular}}
\caption{Revised inference rules for the $\mathcal{ALCHIQ}$ calculus. }
\label{tab:oldrules}
\end{table}

\begin{table}
\centering
\scalebox{0.8}{
\begin{tabular}{c|ll}
\hline \hline

\multirow{5}{*}{\rotatebox{90}{\textsf{Join}}} 
& If 1. $A \wedge \Gamma \rightarrow \Delta \in \SC{v}$, with $A$ ground and $o$ occurring in $A$, and \\
& $\phantom{\mbox{If}}$ 2. $\Gamma' \rightarrow \Delta' \vee \Delta'' \vee A \in \SC{v}$, with $\Delta' \cup \Delta'' \nsucceq_v A$, or \\
& $\phantom{\mbox{If}}$ 3. $\Gamma' \rightarrow \Delta' \vee A' \in \SC{v}$, with $\Delta'  \nsucceq_v A'$, $A'\{x\mapsto o\}=A$, and \\
& $\phantom{\mbox{If 3.}}$ $\Gamma' \rightarrow \Delta'' \vee x \equals o \in \SC{v}$, $\Delta''  \nsucceq_v x \equals o$, $\Gamma'=\top$,\\
& then  add $\Gamma \wedge \Gamma' \rightarrow \Delta \vee \Delta' \vee \Delta''$ to $\SC{v}$.\\ \hline 

\multirow{6}{*}{\rotatebox{90}{$r$-\textsf{Succ}}} 
 & If 1. $\Gamma \rightarrow \Delta \vee A\sigma \in \mathcal{S}_u$ where $\Delta  \nsucceq_u A \sigma$, with $u \neq v_r$, \\
 & $\phantom{\mbox{If}}$ 2. $A  \in \rootSUtrig$, with $o \in \AllIndiv$ occurring in $A$, $\sigma =\{ y \mapsto x \}$, and \\

 & $\phantom{\mbox{If}}$ 3. there is no $\langle u, v_r, o \rangle \in \mathcal{E}$ s.t. $A \rightarrow A \sbin \SC{v_r}$, and \\

 & $\phantom{\mbox{If}}$ 4. (*) there is no $\Gamma'' \to \Delta'' \vee \bigvee_{i=1}^n L_i \in \SC{u}$ with $\Gamma''\subseteq \Gamma$,   \\ 
 & $\phantom{\mbox{If 4.}}$ $\Delta'' \subseteq \Delta$, \textcolor{black}{and $L_i$ of the form $x \equals o_i, y \equals o_i, x \equals y$}, \\ 
 & then add the edge $\langle u, v_r ,o \rangle$ to $\mathcal{E}$ and $A \rightarrow A $ to $\SC{v_r}$. \\ \hline

\multirow{6}{*}{\rotatebox{90}{$r$-\textsf{Pred}}} 
& If 1. $\bigwedge_{i=1}^{n} A_i \wedge \bigwedge_{i=1}^{m} C_i \rightarrow \bigvee_{i=1}^{k} L_i \in \SC{v_r}$, where \\
& $\phantom{\mbox{If 2.}}$ $L_i \in \textcolor{black}{\rootPRtrig}$ for each nonground $L_i$, each $C_i$ is ground,  \\ 
& $\phantom{\mbox{If 3.}}$ $A_i \in \rootSUtrig$, and $o_i$ is the named individual in $A_i$; and\\
& $\phantom{\mbox{If}}$ 2. there is $\langle u,v_r,o_i \rangle \in \mathcal{E}$ for each $o_i$ such that  \\
& $\phantom{\mbox{If 5.}}$ $\Gamma_i \rightarrow \Delta_i \vee A_i \sigma \in \mathcal{S}_u$ verifies (*), $\Delta_i  \nsucceq_u A_i \sigma $, $\sigma(y)=x$,\\
& then add $\bigwedge_{i=1}^{n} \Gamma_i \wedge \bigwedge_{i=1}^m C_i \rightarrow \bigvee_{i=1}^{n} \Delta_i \vee \bigvee_{i=1}^{k} L_i \sigma$ to $\SC{u}$. \\ \hline 

\multirow{5}{*}{\rotatebox{90}{\textsf{Nom}}}
 & If 1. $\bigwedge_{i=1}^n A_i \rightarrow \bigvee_{i=1}^{m} L_i \vee \bigvee_{i=m+1}^{k} L_i \in \Onto$, with $L_i$ $\ab$-equalities \\
 & $\phantom{\mbox{If}}$ 2. $\Gamma_i \rightarrow \Delta_i \vee A_i \sigma \in \SC{v_r}$, with $\Delta_i  \nsucceq_{v_r} A_i \sigma$ and $\sigma(x) = o$, and\\

 & $\phantom{\mbox{If}}$ 3. $L_i \sigma$ is of the form $y \equals y$ or $y \equals f_i(o_i)$ iff $m+1 \leq i \leq k$, \\
 & \textcolor{black}{then add $\Gamma \to \Delta \vee \bigvee_{i=1}^K y \equals o'_{\rho \cdot S^i}$ to $\SC{v_r}$, where  $\Gamma = \bigwedge_{i=1}^n \Gamma_i$ ,} \\
 & \textcolor{black}{$\phantom{\mbox{then 1.}}$ $\Delta = \bigvee_{i=1}^n \Delta_i \vee  \bigvee_{i=1}^{m} L_i\sigma $, and $K\!\!+\!\!1 = \max ( \textcolor{black}{i} \mid \{ z_i \mbox{ in } \Onto\})$ } \\ 

  \hline \hline
\end{tabular}}
\caption{Novel rules for reasoning with nominals in $\mathcal{ALCHOIQ}$. }
\label{tab:newrules}
\end{table} 

The inference rules of our calculus are specified in
 Tables $\ref{tab:oldrules}$ and $\ref{tab:newrules}$. As in prior work,
a rule is not triggered if the clauses that would be derived are already 
contained up to redundancy in the corresponding contexts.
 Rules in Table \ref{tab:oldrules} are a simple generalisation of those in
  \cite{DBLP:conf/kr/BateMGSH16} for $\mathcal{ALCHIQ}$ to take into account that certain rules
  can be applied to the distinguished root context and that
  clauses propagated to predecessor contexts
  may contain ground atoms.
  The \ruleword{Core} rule ensures that all atoms in a context's core hold. 
  The \ruleword{Hyper} rule performs hyperresolution
  between clauses in a context and ontology clauses; in prior work, variable $x$ had to map to
  itself in the $\sigma$ used in the rule, but now $x$ maps to an individual if the rule is applied on 
  the root context.
  The \ruleword{Eq}, \ruleword{Ineq} and \ruleword{Fact} rules implement equality
  reasoning, and the \ruleword{Elim} rule performs redundancy elimination as in prior work.
  The  \ruleword{Pred} rule performs hyperresolution between a context and a predecessor context. 
  The rule does not apply to the root context, but the calculus provides another rule for that. Ground and nonground body atoms are treated differently;
   the latter are simply copied to the body of the derived clause.
  Finally, the \ruleword{Succ} rule extends the context structure using the expansion strategy as in prior work.
  As in the \ruleword{Hyper} rule, variable $x$ maps to a named individual on the root context.

The rules in Table  \ref{tab:newrules}  handle reasoning with nominals. Rule \ruleword{Join} 
corresponds to a resolution step between two ground atoms of different clauses in the same context. 
Rule $r$-\ruleword{Succ}  
complements \ruleword{Succ} by dealing with information propagation from any non-root context to the root context;
in turn, $r$-\ruleword{Pred} complements \ruleword{Pred} in an analogous way. In contrast to  previous calculi, rules
 $r$-\ruleword{Succ} and $r$-\ruleword{Pred} can be used to exchange information between
 the root context and 
 any other context, not just a neighboring one; this is due to the fact that nominal reasoning
 is intrinsically non-local.
Finally, rule \ruleword{Nom} introduces additional 
nominals when an anonymous element of the canonical model may become arbitrarily interconnected. The \ruleword{Nom} rule does not apply if the input ontology 
lacks either inverse roles, or nominals, or number restrictions.

\subsection{The Reasoning Algorithm and its Properties}

We can obtain a sound and complete reasoning algorithm for $\logic$ by exhaustively applying
the inference rules in Tables \ref{tab:oldrules} and \ref{tab:newrules} on a suitably initialised context
structure. 
This follows from the calculus satisfying two properties analogous to those required by CB calculi in prior work.
The \emph{soundness property}
ensures that the application of an inference rule to a sound context structure yields another
sound context structure. The \emph{completeness property} ensures that any query clause entailed by $\Onto$ will be contained up to redundancy in 
a suitably initialised context of a saturated context structure. 

\begin{restatable}[Soundness]{theorem}{soundness}
Given a context structure \ContextStructure which is sound for \Onto, and an arbitrary expansion strategy, 
the application of a rule from Table \ref{tab:oldrules} or Table \ref{tab:newrules} to \ContextStructure with 
respect to \Onto yields a context structure which is sound for \Onto. 
\end{restatable}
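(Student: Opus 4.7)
The plan is a case analysis on the inference rule applied. Fix an arbitrary model $\II$ of $\Onto$; by soundness of the input context structure $\ContextStructure$, let $N$ and $\JJ$ be a set of additional nominals and a corresponding $N$-compatible conservative extension of $\II$ witnessing soundness. For every rule except \ruleword{Nom}, I would keep $N$ and $\JJ$ unchanged and verify that $\JJ$ satisfies the $N$-reductions of the newly derived clauses and of any newly imposed core conditions; for \ruleword{Nom}, $N$ may need to be enlarged and $\JJ$ correspondingly adjusted.

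For the rules that only derive clauses inside an existing context, namely \ruleword{Core}, \ruleword{Hyper}, \ruleword{Eq}, \ruleword{Ineq}, \ruleword{Fact}, \ruleword{Elim}, and \ruleword{Join}, the arguments mirror standard ordered paramodulation and hyperresolution, adapted to $N$-reduction. The key observation is that $N$-reduction interacts well with these rules: deleting head literals mentioning terms of $N$ and annihilating clauses with such terms in the body or in head inequalities both preserve logical entailment in the needed direction. Thus \ruleword{Hyper} produces a clause whose $N$-reduction is satisfied by $\JJ$ because $\JJ \models \Onto$ and $\JJ$ already satisfies the reductions of the premises; \ruleword{Eq}, \ruleword{Fact}, and \ruleword{Ineq} preserve satisfaction under $N$-reduction by the standard equational arguments, using that the literals touched are not in $N$ by the order conditions; \ruleword{Elim} preserves everything because the removed clause is subsumed up to redundancy; and \ruleword{Join} derives a ground consequence that is already entailed by $\JJ$.

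For the edge-creating and cross-context rules \ruleword{Succ}, \ruleword{Pred}, $r$-\ruleword{Succ}, and $r$-\ruleword{Pred}, one additionally has to track condition (ii) of \cref{def:sound-context-structure}. In \ruleword{Succ}, the side condition that every core atom of the new context $v$ already appears in the predecessor $u$ under the substitution $\{y \mapsto x, x \mapsto f(x)\}$ (or $\{y \mapsto o, x \mapsto f(o)\}$ in the root case) gives $\JJ \models N(\core{u} \to \core{v}\{x \mapsto f(x), y \mapsto x\})$, while the seeded clauses $A' \to A'$ are tautologies under $N$-reduction. For \ruleword{Pred} and $r$-\ruleword{Pred}, the nonground head literals of the propagated clauses lie in $\PRtrig$ or $\rootPRtrig$, which is precisely the class whose truth in the successor under the edge substitution transfers back to the predecessor; an unfolding of the definition of $N$-reduction then shows the derived clause is satisfied by $\JJ$. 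The root-context variants are handled by replacing $x$ by a nominal in the substitution and using $N$-compatibility to carry the equalities $x \equals o$ across the edge.

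The main obstacle is \ruleword{Nom}, the only rule that may introduce nominals $o'_{\rho \cdot S^{i}}$ not already present. The plan is, when the rule fires, to inspect the element $d$ of $\JJ$ which witnesses the hypothesis $\sigma(x)^{\JJ}$: if $d$ coincides with $o_{\rho \cdot S^{i}}^{\JJ}$ for some $i \le K$ already anchored in $\JJ$, then $\JJ$ satisfies one disjunct $y \equals o'_{\rho \cdot S^{i}}$ of the conclusion and neither $N$ nor $\JJ$ needs to change; otherwise we extend $N$ to $N'$ by adding those $o'_{\rho \cdot S^{i}}$ whose interpretation is not forced by $\JJ$, and extend $\JJ$ to an $N'$-compatible $\JJ'$ using the bound $k_0$ supplied by $N$-compatibility for labels outside $N$ (which lets us collapse the infinite $S$-chain into finitely many named representatives). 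The subtle bookkeeping, and the hard part of the proof, is to verify using the constraints of \cref{def:order} on the order of new nominals that no previously satisfied $N$-reduction is invalidated by enlarging $N$ to $N'$: a head literal newly mentioning an element of $N'$ is simply dropped from the reduction, while no clause is newly annihilated because the introduced nominals are fresh for the existing premises. Reconciling this enlargement of $N$ with the soundness conditions simultaneously across all contexts is where the difficulty concentrates.
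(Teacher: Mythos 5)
Your overall architecture --- fix a model $\II$, take the witnessing pair $(N,\JJ)$ from the soundness of the input structure, and do a case analysis keeping $N$ and $\JJ$ unchanged for all rules that do not introduce nominals --- matches the paper, and your sketches for \ruleword{Core}, \ruleword{Hyper}, \ruleword{Eq}, \ruleword{Ineq}, \ruleword{Fact}, \ruleword{Elim}, \ruleword{Join}, \ruleword{Succ}, \ruleword{Pred}, $r$-\ruleword{Succ} and $r$-\ruleword{Pred} are the standard arguments the paper carries out in detail (the paper additionally checks that every derived clause is syntactically a (root) context clause, which you omit but which is routine). The genuine gap is in your treatment of \ruleword{Nom}. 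The paper never enlarges $N$: since $\AllIndiv$ already contains $o_\rho$ for \emph{every} nominal label $\rho$, and $N$-compatibility is a property of the model alone quantified over all labels, the pair $(N,\JJ)$ chosen once for $\II$ already determines the status of every nominal that \ruleword{Nom} could ever introduce. The proof then observes that the premise forces $\JJ \models S(o'_{\rho}, y\tau)$, so by the first $N$-compatibility condition the nominals $o'_{\rho\cdot S^{i}}$ cannot lie in $N$; the second condition collapses every $S$-successor of $o'_{\rho}$ onto some $o'_{\rho\cdot S^{k}}$ with $k \le k_0$, and the at-most restriction in the ontology clause (the head equalities $z_i \equals z_j$) bounds the number of distinct such successors by $K$, which is exactly what yields one of the disjuncts $y \equals o'_{\rho\cdot S^{i}}$. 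You give neither the "cannot be in $N$" observation nor this counting step.

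Your alternative --- enlarging $N$ to $N'$ when \ruleword{Nom} fires --- would require re-establishing the $N'$-reductions of \emph{all} previously derived clauses, and your argument for why this is harmless is backwards: removing a head literal that mentions a new element of $N'$ makes $N'(\Delta)$ a \emph{smaller} disjunction, hence the reduced clause is logically \emph{stronger}, so a clause that $\JJ$ satisfied only via such a literal would no longer have its reduction satisfied. Likewise, the claim that "the introduced nominals are fresh for the existing premises" need not hold, since $o'_{\rho\cdot S^{i}}$ may already occur in clauses produced by earlier \ruleword{Nom} applications with the same $\rho$ and $S$, or in clauses propagated by \ruleword{Pred} and $r$-\ruleword{Pred}. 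As written, the \ruleword{Nom} case of your proof does not go through; the fix is to adopt the paper's stance of committing to $(N,\JJ)$ up front for the given $\II$ and using the two $N$-compatibility conditions to discharge the rule.
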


\begin{restatable}[Completeness]{theorem}{completeness}
\label{theorem:completeness}
	Let \ContextStructure be a context structure
	which is sound for \Onto and such that no rule of  Table \ref{tab:oldrules} or Table \ref{tab:newrules} can be applied to it. 
	Then, for each query clause $ \InputQueryLHS \to \InputQueryRHS$
	and each context $q \in \Contexts$ such that all of the following conditions hold, we have that
	$\InputQueryLHS \rightarrow \InputQueryRHS \sbin \SC{q}$ also holds.
	\vspace*{-0.5em}
\begin{enumerateConditions}[label={C\arabic*.},ref={C\arabic*},leftmargin=2.5em]
\item \label{theorem:completeness:query} $ \Onto \models \InputQueryLHS \to \InputQueryRHS $ .
\item \label{theorem:completeness:RHS}  For each context atom $A \in \InputQueryRHS$ and each  $A'$ of the form $B(x)$ such that $A \succ_q A'$, we have $A' \in \InputQueryRHS$.
\item \label{theorem:completeness:LHS}  For each $A \in \InputQueryLHS$, we have $\InputQueryLHS \rightarrow A \sbin \SC{q}$.
\end{enumerateConditions} 
\vspace*{-0.5em}
\end{restatable}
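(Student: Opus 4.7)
The plan is to argue by contrapositive: assume $\InputQueryLHS \to \InputQueryRHS \nsbin \SC{q}$ and construct a Herbrand interpretation $\II$ with $\II \models \Onto$ yet $\II \nmodels \InputQueryLHS \to \InputQueryRHS$, which contradicts~\ref{theorem:completeness:query}. The construction adapts the ordered model-building technique used in \cite{DBLP:conf/kr/BateMGSH16,DBLP:conf/dlog/CucalaGH17}, extending it with new machinery to accommodate the root context $\rootc$ and the additional nominals $o_\rho$ introduced during saturation.

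First I would define, at each context $v$ reachable from $q$ in a top-down traversal of $\ContextStructure$, a local Herbrand interpretation $\II_v$ via the standard ordered model-building procedure applied to $\SC{v}$: clauses are processed in $\succ_v$-increasing order of their maximal head literal, and each yet-unsatisfied clause contributes that literal to $\II_v$. At $q$ I would seed the construction with a ground substitution for the query variables making each atom of $\InputQueryLHS$ true---which is possible by~\ref{theorem:completeness:LHS}---while keeping every literal of $\InputQueryRHS$ false; condition~\ref{theorem:completeness:RHS} is what prevents the model-building procedure from later producing any atom of $\InputQueryRHS$, so the refutation at $q$ persists. Saturation of \ruleword{Eq}, \ruleword{Ineq}, \ruleword{Fact}, and \ruleword{Elim} then guarantees that $\II_v$ is consistent and satisfies every clause in $\SC{v}$.

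Second I would stitch the local interpretations along the edges of $\ContextStructure$. Saturation of \ruleword{Succ} and \ruleword{Pred} guarantees that whenever $\II_v$ forces a trigger $A \in \SUtrig$ involving $f(x)$, there is an edge $\langle v,v',f\rangle$ such that $\II_v$ and $\II_{v'}$ agree, modulo the renaming $\{y\mapsto x,\ x\mapsto f(x)\}$, on all shared atoms. The analogous property with respect to $\rootc$ is supplied by $r$-\ruleword{Succ} and $r$-\ruleword{Pred}, which may operate between any context and $\rootc$ because nominal reasoning is inherently non-local. Within $\SC{\rootc}$, \ruleword{Join} resolves pairs of ground atoms sharing a named individual, while \ruleword{Nom} introduces additional nominals $o_{\rho'\cdot S^k}$ whenever an anonymous $S$-chain below $o_{\rho'}$ would otherwise exceed the bound enforced by a DL4 axiom. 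The ordering constraint on $\gtrdot$ in~\cref{def:order}, which ranks these nominals by the length of $\rho$, is exactly what keeps the folding step compatible with the ordered model-building procedure and what ultimately guarantees termination of the construction.

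The main obstacle is the final verification that the glued $\II$ is a model of $\Onto$, rather than merely of the context clauses. For every clause $\Phi \in \Onto$ and every ground substitution into the domain of $\II$, a lifting argument must identify the context $v$ holding the image of the central variable $x$, route the images of the neighbour variables $z_i$ through \ruleword{Pred} or $r$-\ruleword{Pred}, and conclude from saturation of \ruleword{Hyper} that some head literal is forced into $\SC{v}$ and hence into $\II$. The delicate case is when a DL4 clause is instantiated across $\rootc$ and an anonymous subtree, because the required variable merger can alter the ordered selection at a context whose $\II_v$ has already been fixed; here I would appeal to the $N$-compatibility notion of~\cref{def:sound-context-structure} together with the specific shape of \ruleword{Nom}'s conclusion to argue that any necessary merger can be absorbed by remapping additional nominals, so the refutation at $q$ is preserved.
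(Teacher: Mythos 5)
Your overall architecture does match the paper's: argue by contrapositive, build a candidate countermodel by an ordered model construction driven by the saturated clause sets, glue the pieces along the edges using saturation of \ruleword{Succ}/\ruleword{Pred} and their root variants, and finish with a lifting argument through \ruleword{Hyper}. (The paper realises the ``local interpretation'' step as a ground rewrite system $R_t$ built per \emph{domain element} $t$ of a Herbrand universe rooted at a fresh constant $c$ --- not per context, since one context typically describes many elements --- and proves Church--Rosser-ness, irreducibility of $t$ and its predecessor, satisfaction, and compatibility for each fragment.) The first genuine gap is that you never say how the \emph{ground} part of the model --- the atoms built solely from named and additional nominals --- is kept globally consistent across all fragments; this is precisely where the construction goes beyond $\mathcal{ALCHIQ}$. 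The paper fixes $\Gamma_o = \Omega \cap R_c^*$ and $\Delta_o = \Omega \setminus R_c^*$ once and for all from the query fragment and then carries the invariants $\Required{t}\cap\Omega=\Gamma_o$ and $\Forbidden{t}\cap\Omega=\Delta_o$ (plus a congruence condition on nominal equalities) through a structural induction; showing these invariants are maintainable is exactly what the saturation of \ruleword{Join} and of $r$-\ruleword{Pred} with an empty trigger set buys, via dedicated lemmas. Without something playing this role the glued interpretation need not even be well defined on ground atoms.

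The second gap is your resolution of the ``delicate case'': appealing to $N$-compatibility and $N$-reduction here is a category error. Those notions belong to the \emph{soundness} invariant --- they describe how derived clauses relate to arbitrary models of $\Onto$ --- and play no role in the paper's completeness argument, which constructs one concrete model directly from the saturated structure. The actual mechanism is different: the conclusion of \ruleword{Nom} forces the predecessor $y$ of a root element to equal one of boundedly many additional nominals $o'_{\rho\cdot S^i}$, so the offending anonymous element becomes \emph{named} and its whole neighbourhood is absorbed into the ground part $\Omega$; the unfolding strategy then simply declines to build a fragment for any term (including any additional nominal) that an earlier fragment's rewrite system already reduces, and the ordering of nominals by label length together with the last condition of the context-order definition guarantees this collapse is decided early enough not to disturb fragments that have already been fixed. ``Remapping additional nominals'' after the local interpretations are built is not an available move, so as written this step of your plan would not go through.
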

	\vspace*{-0.15em}
To test whether $\Onto$ entails a query clause $ \InputQueryLHS \to \InputQueryRHS$, an algorithm can proceed as follows. 
In \emph{Step 1}, create
an empty context structure $ \ContextStructure $,
and fix an expansion strategy. 
In \emph{Step 2}, introduce a context $q$ into $\ContextStructure$, 
set its core to $\InputQueryLHS$, and initialise the order $\succ_q$ in a way that is consistent with Condition C2 in 
Theorem \ref{theorem:completeness}. Finally, in \emph{Step 3}, saturate $\ContextStructure$ over the inference rules of the calculus and check
whether  $ \InputQueryLHS \to \InputQueryRHS$ is contained up to redundancy in $\SC{q}$.
Such algorithm generalises to check in a single run a set of input query clauses by 
initialising in Step 2 a context $q$ for each query clause.

Our algorithm may not terminate if the expansion strategy introduces infinitely many contexts. 
Termination is, however, ensured for strategies introducing finitely many contexts,
such as those  discussed in section 3.1.

\begin{restatable}{proposition}{termination}\label{prop:termination-complexity}
    The algorithm consisting of Steps 1--3 terminates if the  expansion strategy
introduces finitely many contexts \textcolor{black}{and rule \ruleword{Join} is applied eagerly}. 
    If the 
    expansion strategy introduces at most exponentially many contexts, the algorithm runs in  triple exponential time in the size of $\Onto$.
\end{restatable}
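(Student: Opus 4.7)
The plan is to follow the standard termination and complexity pattern for consequence-based calculi, adapting it to the new ingredients in our calculus, namely the root context, the triggers $\rootSUtrig$ and $\rootPRtrig$, and the \ruleword{Nom} rule which can introduce fresh nominals. The main obstacle is bounding the number of distinct nominals ever produced: unlike in previous CB calculi, $\AllIndiv$ is a priori infinite and \ruleword{Nom} creates nominals with ever longer labels of the form $\rho \cdot S^i$.

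For termination, I would establish three finiteness facts in sequence. First, there are only finitely many contexts, by the hypothesis on $\Strategy$. Second, only finitely many distinct nominals ever appear in derived clauses. The key ingredients are the blocking condition $(*)$ in $r$-\ruleword{Succ}, which prevents propagation to the root context as soon as the source context already carries a clause of the right shape containing an equality literal $x \equals o_i$, $y \equals o_i$, or $x \equals y$; and the eager application of \ruleword{Join}, which ensures that any derivable equality-forcing clause is produced in time. The ordering requirement in Definition~\ref{def:order} that $o_\rho \gtrdot o_{\rho'}$ whenever $\rho$ extends $\rho'$ then supports a pigeonhole argument: along any chain of \ruleword{Nom} applications producing nominals of strictly increasing label length, two source clause sets must share the same ``blocking signature'' (a finite object depending only on clauses over the already-existing nominals), at which point $(*)$ fires and the chain terminates. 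Third, with a finite nominal set and finite $\FunctionsOnto$, the set of context atoms is finite; together with \ruleword{Elim} and Definition~\ref{def:redundancy-elim}, the set of non-redundant context clauses in each context is finite, guaranteeing a saturation fixpoint.

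For the complexity bound, suppose $\Strategy$ introduces at most exponentially many contexts. Inspecting the blocking argument above, the length of nominal labels is bounded by the number of distinct blocking signatures per context, which is exponential in $|\Onto|$; hence the number of distinct nominals is doubly exponential, and so is the number of distinct context atoms. The number of non-redundant context clauses per context, being essentially the number of subsets of literals over these atoms, is then triply exponential. Since every rule application not blocked by redundancy strictly enlarges some context's clause set, the total number of inferences is triply exponential; each is performed in time polynomial in the current (triply exponential) state, so the overall running time is triply exponential.

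The hard part is step (ii): making the informal ``blocking plus pigeonhole'' argument precise, and in particular showing that the interaction among \ruleword{Nom}, $r$-\ruleword{Succ}, and eagerly-applied \ruleword{Join} actually produces the equality literals that trigger $(*)$, rather than merely allowing them to be derived at some arbitrarily later stage. This is where eager \ruleword{Join} is essential: without it, longer and longer nominal labels could be introduced while the witnessing equality-forcing clause remains perpetually postponed, breaking termination. The remaining steps are largely a bookkeeping adaptation of the termination and complexity proofs of \citeA{DBLP:conf/kr/BateMGSH16} to account for the new root-context rules, the enlarged trigger sets, and the ground atoms that now appear in predecessor clauses.
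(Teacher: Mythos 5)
Your proposal follows essentially the same route as the paper: the paper also reduces everything to bounding the length of nominal labels via a pigeonhole argument over ``types'' (conjunctions $\Gamma \subseteq \SUtrig$, which play exactly the role of your ``blocking signatures''), using the blocking condition $(*)$ of $r$-\ruleword{Succ} together with eager \ruleword{Join} to force the blocking clause to exist in time, and then obtains the doubly exponential bound on nominals and the triply exponential bound on clauses and inferences by the same accounting you give. The step you flag as ``the hard part'' is precisely what the paper's clause-chain lemma supplies --- an induction on nominal depth constructing, for each unblocked clause with maximal literal $S(o,x)$, a chain of pairwise type-incomparable, ground-compatible clauses of the form $\Gamma_i \to \Delta_i \vee x \equals o_i$, so that coincidence of two types immediately yields a blocking clause --- so to complete your argument you would need to prove that lemma.
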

Our algorithm is not worst-case optimal for $\logic$ (an $\textsc{NEXPTIME}$-complete logic), as 
it can generate a doubly exponential number of additional nominals and a number of clauses per context
that is exponential in the size of the relevant signature; thus, each context can contain a triple exponential number of clauses
in the size of $\Onto$.
We can show, however,
worst-case optimality for the well-known fragments of $\logic$, and thus
obtain pay-as-you-go
behaviour.

\begin{restatable}{proposition}{payasyougo}\label{prop:worst-case-el}
For any expansion strategy introducing at most exponentially many contexts, 
the algorithm consisting of Steps 1--3 runs in worst-case exponential time in the size of $\Onto$
if $\Onto$ is either $\mathcal{ALCHIQ}$, or $\mathcal{ALCHOQ}$, or $\mathcal{ALCHOI}$, or
if it is Horn.
Furthermore, for \ELHO ontologies, 
the algorithm runs in polynomial time in the size of $\Onto$ with
either the cautious or the eager strategy.
\end{restatable}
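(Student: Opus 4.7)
The plan is a case analysis on the fragment. For each fragment I would bound (i) the effective size of $\AllIndiv$, i.e., the named individuals appearing in derived clauses; (ii) the number of distinct context \pred-terms, and hence the number of possible atoms per context; (iii) the number of derivable clauses per context; and combine these with the assumed bound on the number of contexts. Each rule application is polynomial-time checkable and, thanks to the redundancy criterion of Definition~\ref{def:redundancy-elim}, each successful inference adds at least one genuinely new clause; so the running time is proportional to the total number of clauses stored across all contexts.

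The central observation is that \ruleword{Nom} is the only rule that enlarges $\AllIndiv$, and its premises require an ontology clause with head literals $L_i$ that become, under a substitution $\sigma$ with $\sigma(x) = o$, $\ab$-equalities of the shape $y \equals y$ or $y \equals f_i(o_i)$; these shapes in turn demand disjunctive head equalities $z_i \equals z_j$ (from DL4, hence number restrictions), nominals (for $o$), and inverse roles (to populate the root context with atoms relating a successor-term $f_i(o)$ to a predecessor $y$). By a rule-by-rule inspection one verifies that whenever the ontology lacks nominals ($\mathcal{ALCHIQ}$), inverse roles ($\mathcal{ALCHOQ}$) or number restrictions ($\mathcal{ALCHOI}$), the rule \ruleword{Nom} cannot fire at all, whence $|\AllIndiv| = O(|\Onto|)$. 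Under this bound the context-term signature is polynomial in $|\Onto|$, each context contains at most an exponential number of clauses in $|\Onto|$, and multiplying by the assumed exponential bound on the number of contexts yields a singly exponential total; for $\mathcal{ALCHIQ}$ this coincides with the bound of Bate et al.\ (2016).

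For Horn-$\logic$, \ruleword{Nom} is not ruled out outright, but the Horn restriction forces $K+1$ (the maximum $z_i$-index in $\Onto$) to equal $2$, so \ruleword{Nom} can introduce only a single new disjunct per application; the other rules preserve the Hornness of derived clauses. A careful counting argument would then show that the chains $\rho \to \rho\cdot S^i$ reachable by iterated application of \ruleword{Nom}, $r$-\ruleword{Succ} and $r$-\ruleword{Pred} are of polynomial depth, so $|\AllIndiv|$ remains polynomial and the per-context clause count polynomial; combined with the exponential number of contexts this still gives a singly exponential total. For \ELHO the analysis is immediate: the logic has neither DL4, nor DL6, nor DL2 with $n\geq 2$, so \ruleword{Nom} is inapplicable, the derivable-atom signature is polynomial, and Hornness forces polynomially many clauses per context; the cautious strategy creates at most one context per atomic concept in an existential restriction, and, under the eager strategy, Hornness guarantees that the triggers $K_1$ arising in any application of \ruleword{Succ} are essentially determined by a single body concept $B_1$, so only polynomially many distinct contexts are ever created.

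I expect the Horn-$\logic$ case to be the most delicate, since \ruleword{Nom} is not ruled out by a simple syntactic check. A tight polynomial bound on the number of nominals introduced must be compatible with the global order $\gtrdot$ of Definition~\ref{def:order}, which forces $o_\rho \gtrdot o_{\rho'}$ whenever $\rho = \rho'\cdot\rho''$; one would likely exploit the Horn shape of derived clauses together with a pigeonhole argument over the finite signature of context atoms to bound the total number of distinct nominal labels that can arise, ruling out the kind of recursive label growth that would otherwise compound into a super-exponential total.
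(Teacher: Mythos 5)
Your overall strategy---rule out \ruleword{Nom} (and the other nominal rules) syntactically in each fragment, bound the effective signature, and then count clauses per context times contexts---is exactly the paper's strategy, and your case analyses for $\mathcal{ALCHIQ}$, $\mathcal{ALCHOQ}$, $\mathcal{ALCHOI}$ and \ELHO identify the same rule-blocking mechanisms the paper uses (no nominals at all; no $S(o,x)$ atoms ever arise so $r$-\ruleword{Succ}, hence $r$-\ruleword{Pred} and \ruleword{Nom}, never fire; no head equalities between neighbour variables so \ruleword{Nom}'s precondition 3 is unsatisfiable). These cases are fine modulo the rule-by-rule invariant checks you defer.

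The gap is in the Horn case, in two places. First, your claim that $|\AllIndiv|$ ``remains polynomial'' does not follow from what you argue. Bounding the \emph{depth} of the chains $\rho \to \rho\cdot S^i$ polynomially bounds the \emph{length} of each nominal label, but at each position a label can branch over all roles $S$ and all indices $i$, so a polynomial length bound yields only an \emph{exponentially} large $\AllIndiv$. This is in fact all the paper establishes and all that is needed: with $|\AllIndiv|$ exponential and the per-context clause count polynomial in the extended signature, the total is still singly exponential. Your stated intermediate claim is stronger than what your argument (or the paper's) delivers. Second, and more importantly, the ``careful counting argument'' you defer is the actual content of the Horn case, and the ingredient you are missing is a \emph{structural invariant on derived clauses}: the paper proves that under the eager strategy every clause in a non-root context stays of the form $\top \to A$ and every root-context clause of the form $\bigwedge_i S_i(o,y) \to A$. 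From this it follows that cores can only be $B(x)$, $S(y,x)$, or $B(x)\wedge S(y,x)$, hence only polynomially many contexts and---crucially---only polynomially many \emph{types} in the sense of the termination argument. The bound on label length then comes from the paper's blocking/pigeonhole theorem (label depth is at most the number of types), so it is the Horn-preservation invariant, not the observation that $K=1$ in \ruleword{Nom}, that makes the depth bound polynomial. Your proposal correctly notes $K+1=2$ and gestures at a pigeonhole over context atoms, but without the invariant on clause shape the number of types is still exponential and the pigeonhole only gives an exponential depth bound, which would leave you with a doubly exponential signature and no improvement over the general case.
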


Note that the strategies discussed in section 3.1 introduce at most exponentially many contexts. We conclude this section with an example illustrating the 
application of our calculus.

\begin{example}
Let $\Onto_1$ contain the following clauses.
 
\newcounter{clauses}

\scalebox{0.98}{\parbox{\linewidth}{
\centering
		\begin{tabular}{llll}
		$A(x) \to R(x,f(x))$& \clabel{onto1} &  $A(x) \to B_1(f(x))$ & \clabel{onto2} \\
		$A(x) \to R(x,g(x))$& \clabel{onto4} &  $A(x) \to B_2(g(x))$ & \clabel{onto5} \\
		$B_1(x) \to S(o,x)$ & \clabel{onto3} & $B_2(x) \to S(o,x)$ & \clabel{onto6}  \\
		\multicolumn{3}{c}{\, \, \, \,    \, \, $S(x,z_1) \wedge S(x,z_2) \to z_1 \equals z_2$} &  \clabel{onto7} \\
		\multicolumn{3}{c}{$\! \! \! \! \! \! R(z_1,x)\wedge  B_1(x) \wedge B_2(x)   \to C(z_1)$} &  \clabel{onto8} \\
		\end{tabular}	
}}

We  check whether $\Onto_1 \models A(x) \to C(x)$ using the eager expansion strategy. 
Figure \ref{fig:example1} summarises the inferences relevant to deriving the query clause. \Cref{cl:a1,cl:a2,cl:a3,cl:a4,cl:a5,cl:a6,cl:a7,cl:a8,cl:a9,cl:a10,cl:a12} are attached to context $v_A$ having core $A(x)$, \cref{cl:1b1,cl:1b2,cl:1b3,cl:1b4}
 to $v_{B_1}$ with core $B_1(x) \wedge S(y,x)$,  \cref{cl:2b1,cl:2b2,cl:2b3,cl:2b4} to $v_{B_2}$ with core $B_2(x) \wedge S(y,x)$, and the remaining clauses
to the root context $v_r$ with empty core.

\begin{figure}
\scalebox{0.8}{\parbox{\linewidth}{
{{
\hspace*{-2.7em}
	\begin {tikzpicture}[-latex ,auto ,node distance = 2cm ,on grid ,
	semithick ,
	state/.style ={ circle ,top color =white , bottom color = processblue!20 ,
		draw,processblue , text=black , minimum width =1 cm},
	stat/.style ={ circle ,top color =white , bottom color = processred!40 ,
			draw,processred , text=black , minimum width =0.5 cm}]
	\node[state,label={[align=left] 90:  \begin{tabular}{llllll} 
			$\top \rightarrow A(x)$  & \clabel{a1} &   $\top \rightarrow B_1(f(x))$ & \clabel{a2} & $\top \rightarrow R(x,f(x))$ & \clabel{a3}\\
		    $\top \rightarrow B_2(g(x))$  & \clabel{a4} & $\top \rightarrow R(x,g(x))$  & \clabel{a5} &  $\top \to f(x) \equals o'$ & \clabel{a6} \\ 
			$\top \to B_1(o')$  & \clabel{a7} &      $\top \to R(x,o')$   & \clabel{a8} &  $\top \to g(x) \equals o'$ & \clabel{a9}   \\ 
		     $\top \to B_2(o')$ & \clabel{a10} & \multicolumn{4}{l}{\phantom{$ B_1(o') \wedge B_2(o') \to C(x)$} $ $ $\top \rightarrow C(x) \, \, \quad \quad \quad $  \clabel{a12}} \\
	   \end{tabular}  }](A) {$v_{A}$};
	
	\node[state,label={[label distance = -1.5cm]180: \begin{tabular}{lll} \\ \\ $\top \rightarrow R(y,x)$ & \clabel{1b1} & $\phantom{AAA}$ \\ $\top \rightarrow B_1(x)$ & \clabel{1b2}  \\ $\top \rightarrow S(o,x)$  & \clabel{1b3} \\ $\top \rightarrow x \equals o'$  & \clabel{1b4} \\  \end{tabular}  
		}] (B) [ left = 2cm of A] {$v_{B_1}$};
	 
	\node[state,label={[label distance = -1.5cm]0:\begin{tabular}{lll}\\ \\ $\phantom{AAA}$ & $\top \rightarrow R(y,x)$ & \clabel{2b1} \\ & $\top \rightarrow B_2(x)$ & \clabel{2b2}   \\ & $\top \rightarrow S(o,x)$  & \clabel{2b3}   \\ & $\top \rightarrow x \equals o'$  & \clabel{2b4}  \\ \end{tabular}
		} ] (C) [ right = 2cm of A] {$v_{B_2}$};
		
	{\node[stat,label={[label distance = -0.7cm] 270 :\begin{tabular}{llllllll}   
	 &	$S(o,y) \rightarrow S(o,y)$ & \clabel{z1}  & & $\phantom{AAAAa}$ & $\quad \quad \quad \, \, S(o,y) \rightarrow y \equals o'$   & \clabel{z2} & \\ 
	 & $R(y,o') \rightarrow R(y,o')$ & \clabel{z4}  &  & &$\quad \quad \quad \, \, B_1(o') \to B_1(o')$  & \clabel{z5}  \\	  
	 & $B_2(o') \to B_2(o')$ &  \clabel{z6} & \multicolumn{5}{c}{$ \,\, \,R(y,o') \wedge B_1(o') \wedge B_2(o') \to C(y)$ \quad \clabel{z7}} 	   \end{tabular}
			 }] (o) [ below  = 1.5cm  of A] {$v_{r}$};}

	\path (A) edge node[above] {\scriptsize $f$} (B);
	{\path (A) edge node[above] {\scriptsize $g$} (C);}
	{\draw [->] (B) to    node[above] {\scriptsize $o$}  (o);}
	{\draw [->] (C) to   node[above] {\scriptsize $o$} (o);}
		{\draw [->] (A) to   node[right] {\scriptsize $o'$} (o);}

\end{tikzpicture}
} 
}}}
\vspace*{-1em}
\caption{Calculus execution for Example 3. $o'$ stands for $o_{S^{1}}$.}
\label{fig:example1}
\end{figure}
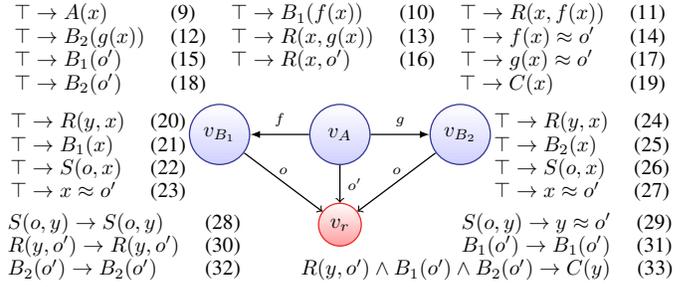

Consider the state of the context structure once \cref{cl:a1,cl:a2,cl:a3,cl:a4,cl:a5,cl:1b1,cl:1b2,cl:1b3} have 
been derived as in the calculus from \citeA{DBLP:conf/kr/BateMGSH16}. We  apply  $r$-\ruleword{Succ} to \cref{cl:1b3}, which 
creates an $o$-labelled edge to $v_{r}$ and adds \cref{cl:z1}; 
rule \ruleword{Nom}  derives \cref{cl:z2} in $v_{r}$, with
$o' = o_{S^{1}}$ an additional nominal. This clause   
can be back-propagated with $r$-\ruleword{Pred} to yield
\cref{cl:1b4} in $v_{B_1}$; in turn, this can be back-propagated with \ruleword{Pred} 
to yield \cref{cl:a6} in $v_A$. Two applications of \ruleword{Eq} yield \cref{cl:a7,cl:a8}. 
Proceeding analogously in context $v_{B_2}$, we derive \cref{cl:2b1,cl:2b2,cl:2b3,cl:2b4} and then \cref{cl:a9,cl:a10}. 
Next, we  apply $r$-\ruleword{Succ} to \cref{cl:a7,cl:a8,cl:a10} to derive \cref{cl:z4,cl:z5,cl:z6}. 
The \ruleword{Hyper} rule can  be applied to these clauses to derive \cref{cl:z7}. 
The head of this clause is in $\rootPRtrig$, so 
we can back-propagate using $r$-\ruleword{Pred} 
with the edge from  $v_{A}$ and \cref{cl:a7,cl:a8,cl:a10}
 to derive our target, \cref{cl:a12}.  
\end{example}

 \section{Conclusion and Future Work}

We have presented the first CB reasoning algorithm for 
a DL  featuring all Boolean operators, role hierarchies, inverse roles, nominals, and 
number restrictions. 
We see many challenges for future work. First, our algorithm runs in triple exponential time, when it should be possible 
to devise a doubly exponential time algorithm; we believe, however, that deriving such tighter upper bound
would require a significant modification of our approach.
Second, our algorithm should be extended with datatypes in order to cover all of OWL 2 DL.
Finally, we are implementing our algorithm as an extension of the Sequoia system \cite{DBLP:conf/kr/BateMGSH16}.
 We expect good performance from the resulting system, as our calculus only steps beyond pay-as-you-go behaviour in the rare situation where
 disjunctions, nominals, number restrictions and inverse roles interact simultaneously. 
 
 \vspace{-0.10in}
 \section*{Acknowledgments}

Research  supported by the SIRIUS  centre for Scalable Data Access,  and
the EPSRC projects DBOnto,
MaSI$^3$, and ED$^3$.

\newpage


\bibliographystyle{named}
\bibliography{consolidated}
 
\appendix

\onecolumn

\section{Context orders}
\label{sec:appendix-order}

\contextorder*

Given an order $\gtrdot$ verifying the properties in \cref{def:order}, 
we can obtain a context term order $\succ$ as follows.
Extend $\gtrdot$ on variables so that $x \gtrdot y$. Then,
extend it (arbitrarily) to symbols of sort $\pred$.
Next, we let $\succ$  be the \emph{lexicographic path order} (LPO)
\cite{BaN98} over context $\ab$- and $\pred$-terms induced by
$\gtrdot$. The well-known properties of LPOs ensure that $\succ$  is a
total simplification order on all context terms that satisfies
\cref{def:order:variable,def:order:individual,def:order:function,def:order:simplification,def:order:subterm}
in \cref{def:order}.

To also satisfy \cref{def:order:forbidden}, we relax $\succeq$ by 
dropping all ${A \succ s}$ where ${A \in
\PRtrig}$ and ${s \not\in \{ x,y,\TOP \} }$, as well as any of the form $y \succ o$, $o \succ y$, $x \succ o$, $o \succ x$.
\Cref{def:order:variable} is satisfied after this step because we have not removed
any of the orderings of that form, and all of them were satisfied before taking this step.
\Cref{def:order:individual,def:order:function} remain satisfied
because the relaxation step does not eliminate any ordering involving only \ab-terms except 
$y \succ o$, $o \succ y$, $x \succ o$, $o \succ x$, which are not
of the form given in this condition. 
For condition \cref{def:order:simplification}, we need to see if some of the eliminated orderings correspond to orderings of the form
$t[s_2]_p \succeq t[s_1]_p$ with $s_2 \succ s_1$. Observe that the only elements in $\PRtrig$ that could be of the form $t[s_1]_p$
for some $s_1$ such that $s_1 \succ s_2$ and $t[s_2]_p$ is a valid context term, in non-root contexts, are those when either $x$ or $y$ are replaced by 
a nominal (not those where $x$ is replaced by $y$). But even those orderings do not trigger this condition, because we also eliminate orderings of the form
$y \succ o$, $o \succ y$, $x \succ o$, $o \succ x$. Hence, none of the eliminated orderings would trigger the premise of this condition. 
In root contexts, elements in $\PRtrig$ of the form $A \equals \TOP$ can only be of the form $B(y)$, $S(y,y)$, or $S(y,o)$ or $S(o,y)$, so a similar argument applies
as every ordering of the form $y \succ o$ or $o \succ y$ is removed.
For condition \cref{def:order:subterm}, since we only eliminate orderings of the form $A \succ s$ where $A$ is in $\PRtrig$, $A$ contains no function symbols;
since in order to trigger this condition, $s$ must be a subterm of $A$, then it can only be $x$, $y$, or in $\AllIndiv$. However, it is precisely
these orderings that we do not remove, as we already discussed. 
\newpage
\section{Proof of soundness}

\soundness*

Remember that we say that a structure $\II$ satisfies a set of clauses $\mathcal{S}$ if and only if it satisfies every clause in the set. In particular, if $\SC{} = \emptyset$, then any $\II$ satisfies $\SC{}$. If $\SC{}$ is a set of clauses and $\tau$ a grounding on the universe of $\JJ$, then $\SC{} \tau$ is the set obtained by applying the homomorphism $\tau$ to every clause of $\SC{}$. Moreover, given a clause $C$, we use $\body{C}$ to represent the body of $C$.

\begin{proof}
Let $\Onto$ be an ontology, and let $\ContextStructure$ be a context structure $\langle \Contexts, \Edges, \SC{}, \core{}, \gtrdot, \succ \rangle$ be a context structure that is sound for $\Onto$. We next show that applying an inference rule from \cref{tab:oldrules} or \cref{tab:newrules} to $\ContextStructure$ using an arbitrary expansion strategy produces a sound context structure. In particular, we show that each derived clause is a context clause according to \cref{def:context-terms} and satisfies the conditions in \cref{def:sound-context-structure}. We make use of the following intermediate result:

\begin{lemma}\label{lem:sound-maximal-satisfaction}
Consider $n$ arbitrary clauses: 
\begin{align*} C_1 = \Gamma_1 \to \Delta_1 \vee L_1, \\ 
 C_2 = \Gamma_2 \to \Delta_2 \vee L_2, \\ 
\cdots \phantom{AAAAA}\\ 
C_n = \Gamma_n \to \Delta \vee L_n,
\end{align*}
where literals $L_i$ are atoms; consider the clause $$C=\bigwedge_{i=1}^n \Gamma_i \to \bigvee_{i=1}^n \Delta_i \vee \Delta.$$ For every set of additional nominals $N$, every $N$-compatible structure $\II$ such that \begin{itemize}
\item $N(C) \neq \emptyset$.
\item $\II \models N(C_i)$ for all $i$,
\end{itemize}
and every grounding $\tau$ such that \begin{itemize}
\item $\II \models \body{N(C) \tau}$,
\item $\II \nmodels N(\Delta_i)\tau$ for any $i$,
\end{itemize}
we have that $N(L_i)=L_i$ for every $i$ and $\II \models L_i \tau$ for each $i$.
\end{lemma}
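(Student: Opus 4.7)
The plan is to exploit the assumption $N(C) \neq \emptyset$ to propagate constraints to the individual clauses $C_i$, then use the hypothesis $\II \models N(C_i)$ together with the grounding $\tau$ to force each $L_i$ to be satisfied. The key point is that the body of $C$ is exactly $\bigwedge_{i=1}^n \Gamma_i$, so the body of each $C_i$ appears entirely inside the body of $C$; similarly, the head of $C$ contains all $\Delta_i$ as disjuncts. Thus any ``contamination'' by a term in $N$ inside some $\Gamma_i$ or inside an inequality of some $\Delta_i$ would already empty $N(C)$, contradicting the hypothesis.

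First, I would unpack the definition of $N$-reduction to establish the following bookkeeping facts from $N(C) \neq \emptyset$: (i) no element of $N$ occurs in any $\Gamma_i$, hence $\body{N(C_i)} = \Gamma_i$ and $N(C_i) \neq \emptyset$ for all $i$; and (ii) no element of $N$ occurs in an inequality inside any $\Delta_i$, hence $N(\Delta_i \vee L_i)$ is obtained simply by removing from $\Delta_i \vee L_i$ those literals (necessarily equalities) that mention a term in $N$. In particular, $N(\Delta_i \vee L_i) = N(\Delta_i) \vee N(L_i)$, where $N(L_i)$ is either $L_i$ itself or the empty disjunct (since $L_i$ is an atom).

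Next, I would apply the universal instantiation by $\tau$ to each hypothesis $\II \models N(C_i)$, using that $\II$ is $N$-compatible so $\tau$ can be restricted to the Herbrand domain in the usual way. Because $\II \models \body{N(C)\tau} = \bigwedge_{i=1}^n \Gamma_i\tau$, in particular $\II \models \Gamma_i\tau = \body{N(C_i)\tau}$; hence some literal of the head $N(\Delta_i)\tau \vee N(L_i)\tau$ must be satisfied by $\II$. By assumption $\II \nmodels N(\Delta_i)\tau$, so the satisfied literal must come from $N(L_i)\tau$. This forces $N(L_i) \neq \emptyset$, i.e.\ $N(L_i) = L_i$, and then $\II \models L_i\tau$, as desired.

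The only subtle point, which is where I expect the bulk of the bookkeeping to live, is the careful treatment of the $N$-reduction on disjunctions of literals: one must check that removing equalities mentioning $N$ from $\Delta_i \vee L_i$ commutes with the decomposition into $\Delta_i$ and $L_i$, and that the absence of inequalities involving $N$ anywhere in the combined head of $C$ really does exclude the possibility that some $N(C_i) = \emptyset$. Once this is done the rest is direct propositional reasoning, so I would not expect any real mathematical obstacle beyond keeping the definitions straight.
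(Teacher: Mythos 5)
Your proposal is correct and follows essentially the same route as the paper's own proof: both extract from $N(C)\neq\emptyset$ that the bodies $\Gamma_i$ are $N$-free and that no $\Delta_i$ contains an inequality mentioning $N$ (so, since each $L_i$ is an atom, $N(C_i)\neq\emptyset$), then instantiate $\II\models N(C_i)$ with $\tau$ and use $\II\nmodels N(\Delta_i)\tau$ to force $\II\models N(L_i)\tau$ and hence $N(L_i)=L_i$. The bookkeeping point you flag about $N$-reduction commuting with the split of the head into $\Delta_i$ and $L_i$ is exactly the (implicit) step the paper relies on, so there is no gap.
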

Observe that the lemma implies that if there is at least one triple of $N$, $\II$, and $\tau$ verifying the conidtions of the lemma, then $N(L_i)=L_i$ for every $i$.

\begin{proof}

Since $N(C) \neq \emptyset$, we have that $N \left ( \bigwedge_{i=1}^n \Gamma_i \right ) = \bigwedge_{i=1}^n \Gamma_i$, and hence since $\II \models \body{N(C) \tau}$, we have $\II \models \left ( \bigwedge_{i=1}^n \Gamma_i \right ) \tau$. Thus, $\II \models \Gamma_i \tau$ for each $i$. Observe that $\Delta_i$ contains no inequality with elements of $N$, since otherwise $N(C)= \emptyset$ as that inequality would be in the head of $C$. Moreover, $L_i$ is not an inequality for any $i$, so we conclude $N(C_i) \neq \emptyset$ and hence $N(C_i)= \Gamma_i \to N(\Delta_i \vee L_i)$. But since $\II \models N(C_i)$ and $\II \models \Gamma_i \tau$ for each $i$, we then have $\II \models  N(\Delta_i \vee L_i) \tau$ for each $i$. However, we know $\II \nmodels N(\Delta_i) \tau$, and hence we conclude $\II \models N(L_i) \tau$, which implies $N(L_i) =L_i$. \end{proof}

Another helpful auxiliary tool in this proof will be an explicit list of every possible context and root-context literal, given next:
\newcounter{rows}

\begin{table}[H]

\begin{tabular}{c|c|c|c|c|c|c|c|c|c|c|c|c|c|}
\hline 
\rlabel{DL-bows}$\ab$-DL & \multicolumn{13}{c}{\multirow{2}{*}{$z_i \bowtie z_j$} $\quad$  \multirow{2}{*}{$z_i \bowtie f(x)$} $\quad$ \multirow{2}{*}{$z_i \bowtie x$} $\quad$ \multirow{2}{*}{$z_i \bowtie o$} $\quad$ \multirow{2}{*}{$x \bowtie f(x)$} $\quad$ \multirow{2}{*}{$f(x) \bowtie g(x)$} $\quad$ \multirow{2}{*}{$x \bowtie o$} $\quad$ \multirow{2}{*}{$o \bowtie o'$} $\quad$ \multirow{2}{*}{$f(x) \bowtie o$}} \\
literals \\ \hline
\rlabel{p-atoms}Context  & \multicolumn{13}{c}{\multirow{2}{*}{$B(y)$} $\,$ \multirow{2}{*}{$B(x)$} $\,$  \multirow{2}{*}{$B(f(x))$} $\,$  \multirow{2}{*}{$B(o)$} $\,$  \multirow{2}{*}{$S(x,y)$} $\,$  \multirow{2}{*}{$S(y,x)$} $\,$  \multirow{2}{*}{$S(x,f(x))$} $\,$  \multirow{2}{*}{$S(f(x),x)$} $\,$  \multirow{2}{*}{$S(x,x)$} $\,$ \multirow{2}{*}{$S(x,o)$} $\,$  \multirow{2}{*}{$S(o,x)$} $\,$  \multirow{2}{*}{$S(o,o')$}} \\
\pred-terms \\ \hline
\rlabel{p-bows}Other & \multicolumn{13}{c}{\multirow{2}{*}{$f(x) \bowtie g(x)$} $\quad$  \multirow{2}{*}{$f(x) \bowtie x$} $\quad$ \multirow{2}{*}{$f(x) \bowtie y$} $\quad$ \multirow{2}{*}{$f(x) \bowtie o$} $\quad$  \multirow{2}{*}{$x \bowtie y$} $\quad$  \multirow{2}{*}{$x \bowtie o$} $\quad$  \multirow{2}{*}{$y \bowtie o$}} \\
literals \\ \hline
\rlabel{r-atoms}$r$-Context  & \multicolumn{13}{c}{\multirow{2}{*}{$B(y)$} $\quad$ \multirow{2}{*}{$B(o)$} $\quad$ \multirow{2}{*}{$B(f(o))$} $\quad$ \multirow{2}{*}{$S(o,y)$} $\quad$ \multirow{2}{*}{$S(y,o)$} $\quad$  \multirow{2}{*}{$S(y,y)$} $\quad$ \multirow{2}{*}{$S(o,f(o))$} $\quad$ \multirow{2}{*}{$S(f(o),o)$} $\quad$ \multirow{2}{*}{$S(o',o)$} } \\
\pred-terms \\ \hline
\rlabel{r-bows}Other & \multicolumn{13}{c}{\multirow{2}{*}{$f(o) \bowtie g(o)$} $\quad$ \multirow{2}{*}{$f(o) \bowtie o$} $\quad$ \multirow{2}{*}{$f(o) \bowtie y$} $\quad$  \multirow{2}{*}{$f(o) \bowtie o'$} $\quad$ \multirow{2}{*}{$o \bowtie y$} $\quad$  \multirow{2}{*}{$o \bowtie o'$} }  \\
literals \\ \hline
\end{tabular}
\caption{Listing of context literals and root context literals.}
\label{tab:sound:context-literals}
\end{table}

Consider all inference rules from \cref{tab:oldrules,tab:newrules}. We assume the ontology is satisfiable, as otherwise the result follows trivially. We also let $\II$ be an arbitrary model of $\Onto$ and $(N,\JJ)$ the corresponding set of individuals and conservative extension of $\II$ verifying the conditions of \cref{def:sound-context-structure}  for $\ContextStructure$. For each possible application of a rule, we show how to generate $(N',\JJ')$ for $\ContextStructure'$ verifying the same conditions; in fact, we always choose $N' = N$ and $\JJ' = \JJ$. 

\medskip

(\ruleword{Core}) For each ${A \in \core{v}}$, $N(\core{v} \rightarrow A) = \core{v} \rightarrow A$ since cores do not contain named individuals or inequalities. Since $A \in \core{v}$, the clause is trivially satisfied by every structure, and in particular by $\JJ'$.  

(\ruleword{Hyper}) Observe that $ \JJ  \models \bigwedge_{i=1}^n A_i \rightarrow \Delta $ since $\JJ$ is a conservative extension of $\II$, and $\II$ is a model of $\Onto$ and the clause is an axiom of $\Onto$. Moreover, by the soundness of $\ContextStructure$, we have that for each $1 \leq i \leq n$, $\JJ \models N(\core{v} \wedge \Gamma_i \rightarrow \Delta_i \vee A_i \sigma)$. We will show that $\JJ$ satisfies 
\begin{equation}\label{eq:sound:hyper}
N(C) = N(\core{v} \wedge \bigwedge_{i=1}^n \Gamma_i \rightarrow \Delta\sigma \vee \bigvee_{i=1}^n \Delta_i).
\end{equation}
Suppose \cref{eq:sound:hyper} $\neq \emptyset$, as otherwise the result is trivial. We need to show that if $\JJ$ satisfies the body of \cref{eq:sound:hyper} for some grounding $\tau$, then it also satisfies the grounding of its head by $\tau$. So we assume $\JJ \models \body{N(C) \tau}$,  and we also assume $\JJ \not \models N(\Delta_i) \tau$ for every $i$, as otherwise the result follows immediately. We are therefore in the conditions of lemma \ref{lem:sound-maximal-satisfaction}, as $A_i \sigma \tau$ are always atoms. Thus, we have that  $\JJ \models A_i \sigma \tau$. But then, since $\JJ \models \bigwedge_{i=1}^n A_i \rightarrow \Delta $, we have $\JJ \models \Delta \sigma \tau$. The lemma also gives us $N(A_i \sigma)=A_i \sigma$; from this, we have that $\sigma$ does subsitute any variable of $\Delta$ (every variable in $\Delta$ must appear in some $A_i$ by definition of DL-clauses) by an element of $N$, so there are no occurrences of elements from $N$ in $\Delta \sigma$, so $N(\Delta \sigma)= \Delta \sigma$, and hence $\JJ' \models N(\Delta \sigma)$, which is what we wanted to show. 

To see that the clause generated is a context clause, consider the possible forms of body DL-atoms: $B_i(x)$, $S(z_i,x)$, or $S(x,z_i)$. For non-root contexts, since $\sigma(x)=x$, $z_i$ can be mapped only to $y$, $x$, $f(x)$, or $o$, so $\sigma(z_i)$ can only take values among these terms. By looking at the forms of DL-literals, and taking into account the possible values of $\sigma(z_i)$, we have that generated DL-literals can be of the form: $B(y)$, $B(f(x))$, $B(o)$, $B(x)$, $S(y,x)$, $S(f(x),x)$, $S(o,x)$, $S(x,y)$, $S(x,f(x))$, $S(x,o)$, $S(x,f(x))$,$S(f(x),x)$, or an equality or inequality between context \ab-terms. All these correspond to context literals. For the root context, we have $\sigma(x)=o$ for some $o \in \AllIndiv$, and $z_i$ can only be mapped to $y,f(o),o'$ for some $o' \in \AllIndiv$ (possibly equal to $o$), so the generated literals can be of the form $B(y)$, $B(f(o))$, $B(o)$, $S(y,o)$, $S(f(o),o)$, $S(o',o)$, $S(o,y)$, $S(o,f(o))$, $S(o,o')$, which are root context literals, or an \ab-equality obtained by substituting $x$ by $o$ and $z_i$ by $y$,$f(o)$, or $o'$, in \cref{row:DL-bows} of \cref{tab:sound:context-literals}, and it is fairly easy to check that all these are elements of \cref{row:r-bows} of \cref{tab:sound:context-literals}. 

(\ruleword{Eq}) We will show that $\JJ$ satisfies 
\begin{equation}\label{eq:sound:eq}
N(C) = N(\core{v} \wedge \Gamma_1 \wedge \Gamma_2 \to \Delta_1 \vee \Delta_2  \bigvee_{i=1}^n s_2[t_1]_p \bowtie s_1 ).
\end{equation}
In particular, we need to show that if $\JJ$ satisfies the body of $N(C)$ for some grounding $\tau$, then it also satisfies the grounding of its head by $\tau$. Suppose \cref{eq:sound:hyper} $\neq \emptyset$, as otherwise the result is trivial, so $N(\Gamma_1 \wedge \Gamma_2) = \Gamma_1 \wedge \Gamma_2$. So we assume $\JJ \models (\Gamma_1 \wedge \Gamma_2) \tau$,  and we also assume $\JJ \not \models N(\Delta_1 \vee \Delta_2) \tau$, as otherwise the result follows immediately.

By the soundness of $\ContextStructure$, we have that $\JJ \models N(\core{v} \wedge \Gamma_i \rightarrow \Delta_i \vee L_i)$ for $i \in \{1,2\}$; for $\Gamma_1 \to \Delta_1 \to s_1 \equals t_1$, $N(C) \neq \emptyset$ implies that $\Gamma_1$ contains no element of $N$ and $\Delta_1$ contains no inequalities with elements of $N$; since $L_1$ is an equality, we can safely conclude $N(C_1) \neq \emptyset$, and since $\JJ \models \Gamma_1 \tau$, we have $\JJ \models N(\Delta_1) \tau \vee (s_1 \equals t_1) \tau$. By our assumption that  $\JJ \not \models N(\Delta_1) \tau$, we obtain $\JJ \models (s_1 \equals t_1) \tau$ and that $N(s_1 \equals t_1) = (s_1 \equals t_1)$ so neither $s_1$ nor $t_1$ contain elements of $N$. 

For $\Gamma_2 \to \Delta_2 \to s_2 \bowtie t_2$, using an argument similar to the one above, and noting that if $s_2 \bowtie t_2$ is an inequality that contains some element of $N$, given that $s_1 \equals t_1$ contains no such elements, we have that $s_2[t_1]_p \bowtie t_2$ still contains an element of $N$, which contradicts that $N(C) \neq \emptyset$, we conclude that  $N(s_2[t_1]_p \bowtie t_2)$ $=$ $s_2[t_1]_p \bowtie t_2$, and $\JJ \models s_2[t_1]_p \bowtie t_2 \tau$ because $s_2[t_1]_p \bowtie t_2 \tau$ is a logical consequence of $s_2 \bowtie s_1 \tau $ and $t_2 \bowtie t_1 \tau$, which are both satisfied by $\JJ$. 

Now, for non-root contexts, one can see that applying this rule with $s_2 \bowtie t_2$ as any element of \cref{row:p-atoms} or \cref{row:p-bows} of \cref{tab:sound:context-literals} and $s_1 \bowtie t_1$, as any element \cref{row:p-bows} of \cref{tab:sound:context-literals} yields another element of \cref{row:p-atoms} or \cref{row:p-bows} of \cref{tab:sound:context-literals}. An analogous argument can be made for root contexts, using \cref{row:r-atoms} and \cref{row:r-bows} of \cref{tab:sound:context-literals}, and noting that rule \ruleword{Eq} never replaces an individual in an atom with function symbols. 

(\ruleword{Ineq}) Since we eliminate a literal which could not have been satisfied by any model, the result follows trivially from the soundness of $\ContextStructure$ for $\Onto$.

(\ruleword{Factor}) We will show that $\JJ$ satisfies 
\begin{equation}\label{eq:sound:factor}
N(C) = N(\core{v} \wedge \Gamma \to \Delta \vee t \nequals t' \vee s \equals t').
\end{equation}
In particular, we need to show that if $\JJ$ satisfies the body of $N(C)$ for some grounding $\tau$, then it also satisfies the grounding of its head by $\tau$. We suppose \cref{eq:sound:hyper} $\neq \emptyset$, as otherwise the result is trivial, so $N(\Gamma) = \Gamma$. Hence, assuming $\JJ \models \body{N(C)} \tau$ means $\JJ \models \Gamma \tau$. 

By the soundness of $\ContextStructure$, we have that $\JJ \models \core{v} \wedge \Gamma \to N(\Delta \vee s\equals t \vee s \equals t')$, and since $\JJ \models (\core{v} \wedge \Gamma) \tau$, we have $\JJ \models N(\Delta \vee s\equals t \vee s \equals t') \tau$. We assume $\JJ \nmodels N(\Delta \vee s \equals t') \tau$, as otherwise the result follows immediately. Thus, $\JJ \models (s \equals t) \tau$.

Observe that since $\JJ \nmodels (s \nequals t') \tau$, we have $\JJ \models (t \nequals t') \tau$. Furthermore, $N(t \nequals t')= t \nequals t'$, since we assumed $N(C) \neq \emptyset$. Thus, we verify $\JJ \models  N(t \nequals t') \tau$, and hence the result follows. 

Now, for non-root contexts, one can see that applying this rule with $s \equals t$ and $s \equals t'$ any two elements of this form in \cref{row:p-atoms} or \cref{row:p-bows} of \cref{tab:sound:context-literals} yields a $t \nequals t'$ which is either of the form $\TOP \nequals \TOP$, which is a context literal, or some element of row \cref{row:r-bows} of \cref{tab:sound:context-literals}. The argument for root atoms is analogous. 

(\ruleword{Elim}) The resulting context structure contains a subset of the clauses from $\ContextStructure$, so it is clearly sound for $\Onto$.

(\ruleword{Pred}) We have to prove:
\begin{equation}\label{eq:sound:pred}
\JJ \models N(C) \quad \mbox{ with } \quad N(C) = N(\core{u} \wedge \bigwedge_{i=1}^{n} \Gamma_i \wedge  \bigwedge_{i=1}^{m} C_i   \rightarrow \bigvee_{i=1}^n \Delta_i \, \vee  \bigvee_{i=1}^{k} L_i \sigma ).
\end{equation}
Once again, we assume $N(C) \neq \emptyset$, as otherwise this is trivial, so $N(\Gamma_i) = \Gamma_i$, and $N(C_i) = C_i$. Suppose $\JJ \models \body{C} \tau$ for some $\tau$. By soundness of $\ContextStructure$, we have $\JJ \models \core{u} \wedge \Gamma_i \to N(\Delta_i \vee A_i \sigma)$ for each $i$, so we have $\JJ \models N(\Delta_i \vee A_i \sigma) \tau$. Suppose $\JJ \nmodels N(\Delta_i) \tau$, as otherwise the result is trivial. Then $N(A_i \sigma) = A_i \sigma$, and also $\JJ \models A_i \sigma \tau$.

By soundness of $\ContextStructure$, we also have $\JJ \models \core{u} \models \core{v} \sigma$, so because $\JJ \models \core{u} \tau$, then $\JJ \models \core{v} \sigma \tau$. But then, again by soundness of $\ContextStructure$, we have $\JJ \models N(\core{v} \wedge A_i \wedge  \bigwedge_{i=1}^{m} C_i   \rightarrow \bigvee_{i=1}^{k} L_i)$, and we have that since $\core{v}$ does not contain elements of $N$, nor does $A_i$ since $N(A_i \sigma) = A_i \sigma$, and $N(C_i) = C_i$, and $L_i$ contains no inequality with elements of $N$, as otherwise we contradict $N(C) \neq \emptyset$, then $\JJ \models  \core{v} \wedge A_i \wedge  \bigwedge_{i=1}^{m} C_i   \rightarrow N(\bigvee_{i=1}^{k} L_i)$, so because $\JJ \models \core{v} \sigma \tau$, and $\JJ \models A_i \sigma \tau$ and $\JJ \models C_i \tau$ (because $C_i \sigma = C_i$, as $C_i$ is ground, then we have $\JJ \models N(\bigvee_{i=1}^{k} L_i) \sigma \tau$. We have already argued that $\sigma$ does not contain elements from $N$, and hence $N(\bigvee_{i=1}^{k} L_i) \sigma = \bigvee_{i=1}^{k} L_i \sigma$, which concludes the first part of our proof. 

For the second part, observe that by soundness of $\ContextStructure$, elements in $\Gamma_i$ are valid context atoms for a clause body, and the same happens for $C_i$, since they were already so in $v$, which is not a root context. A similar argument applies to elements in $\Delta_i$; for elements in $L_i$, observe that if they are ground, then $L_i \sigma$ are valid context literals in the head of a clause, since they already were so in $v$, which is not a root context; if $L_i$ are not ground, then they must belong to \PRtrig, and we can easily check that applying $\sigma$ to any element in \PRtrig yields a context literal.  

(\ruleword{Succ}) Every tautology added by this rule is trivially satisfied, so we only need to prove $$\JJ \models \core{u} \to \core{v} \sigma$$.
Observe that for each element $L_i$ in $\core{v}$, there is a clause $\top \to L_i \sigma$ in $\SC{u}$, with $\sigma$ defined as in the rule. Thus, by soundness of $\ContextStructure$ and the fact that cores do not contain elements in $N$, we directly obtain $\JJ \models \core{u} \to \core{v} \sigma$. For the second clause, observe that $A' \sigma$ does not contain elements of $N$. Since the added clauses contain exclusively elements in \SUtrig, and these are valid context atoms, the generated clauses are trivially context clauses. 

(\ruleword{Join}) We have to prove that 
\begin{equation}\label{eq:sound:join}
\JJ \models N(C) \quad \mbox{ with } \quad N(C) = N(\core{v} \wedge \Gamma \wedge \Gamma' \to \Delta \vee \Delta' \vee \Delta'' ).
\end{equation}

We assume $N(C) \neq \emptyset$, otherwise the result is trivially achieved. Observe that by soundness of $\ContextStructure$, we have $$\JJ \models \core{v} \wedge A \wedge \Gamma \to N(\Delta),$$ and $\JJ \models \core{v} \Gamma' \to N(\Delta' \vee \Delta'' \vee A)$; so if for some $\tau$ we have $\JJ \models \Gamma' \tau \wedge \Gamma \tau \wedge \core{v} \tau$, and we assume $\JJ \nmodels N(\Delta' \vee \Delta'') \tau$, as otherwise the result is trivially satisfied, we have $\JJ \models A \tau$, but then, we obtain $\JJ \models N(\Delta) \tau$, which is what we wanted to prove.

For the second type of the application of this rule, we have that  by soundness of $\ContextStructure$, we have $\JJ \models \core{v} \to N(\Delta' \vee A)$ as well as $\JJ \models \core{v}  \to N(\Delta'' \vee x \equals o)$; so if for some $\tau$ we have $\JJ \models \Gamma \tau \wedge \core{v} \tau$, and we assume $\JJ \nmodels N(\Delta' \vee \Delta'') \tau$, as otherwise the result follows trivially, then we have $\JJ \models N(A') \tau$ and $\JJ \models N(x \equals o) \tau$; we thus conclude that $N(A')= A'$, and therefore $A'$ does not contain any occurrences of elements in $N$; similarly, $N(x \equals o) = x \equals o$, so we have $\JJ \models A' \{x \mapsto o\} \tau$, as this is a logical conclusion of the previous two literals; but $A' \{ x \mapsto o\} = A$, so $\JJ \models A \tau$. Thus, since by soundness of $\ContextStructure$, we have $$\JJ \models \core{v} \wedge A \wedge \Gamma \to N(\Delta),$$ and we are assuming $\JJ \models \Gamma \tau \wedge \core{v} \tau$, we conclude $\JJ \models N(\Delta) \tau$, as we wanted to prove.

The literals in the added clause exist in the same positions in other clauses in $\ContextStructure$, so they are clearly context literals, and the added clause is therefore a context clause.

($r$-\ruleword{Succ}) The first part of this rule is trivially satisfied, as we only add tautologies to the root context. Moreover, these tautologies contain elements in $\rootSUtrig$, which are all valid context atoms, and therefore the resulting tautology is a context clause. Moreover, since the core of the root context is empty, $\core{u} \to \core{v_r} \sigma$ is trivially satisfied (remember that here this represents an implication from a conjunction to an (empty) conjunction).

($r$-\ruleword{Pred}) We have to prove:
\begin{equation}\label{eq:sound:rpred}
\JJ \models N(C) \quad \mbox{ with } \quad N(C) = N(\core{u} \wedge \bigwedge_{i=1}^{n} \Gamma_i \wedge  \bigwedge_{i=1}^{m} C_i   \rightarrow \bigvee_{i=1}^n \Delta_i \, \vee  \bigvee_{i=1}^{k} L_i \sigma ).
\end{equation}
Once again, we assume $N(C) \neq \emptyset$, as otherwise this is trivial, so $N(\Gamma_i) = \Gamma_i$, and $N(C_i) = C_i$. Suppose $\JJ \models \body{C} \tau$ for some $\tau$. By soundness of $\ContextStructure$, we have $\JJ \models \core{u} \wedge \Gamma_i \to N(\Delta_i \vee A_i \sigma)$ for each $i$, so we have $\JJ \models N(\Delta_i \vee A_i \sigma) \tau$. Suppose $\JJ \nmodels N(\Delta_i) \tau$, as otherwise the result is trivial. Then $N(A_i \sigma) = A_i \sigma$, and also $\JJ \models A_i \sigma \tau$.

By soundness of $\ContextStructure$, we also have $\JJ \models \core{u} \models \core{v} \sigma$, so because $\JJ \models \core{u} \tau$, then $\JJ \models \core{v} \sigma \tau$. But then, again by soundness of $\ContextStructure$, we have $\JJ \models N(\core{v} \wedge A_i \wedge  \bigwedge_{i=1}^{m} C_i   \rightarrow \bigvee_{i=1}^{k} L_i)$, and we have that since $\core{v}$ does not contain elements of $N$, nor does $A_i$ since $N(A_i \sigma) = A_i \sigma$, and $N(C_i) = C_i$, and $L_i$ contains no inequality with elements of $N$, as otherwise we contradict $N(C) \neq \emptyset$, then $\JJ \models  \core{v} \wedge A_i \wedge  \bigwedge_{i=1}^{m} C_i   \rightarrow N(\bigvee_{i=1}^{k} L_i)$, so because $\JJ \models \core{v} \sigma \tau$, and $\JJ \models A_i \sigma \tau$ and $\JJ \models C_i \tau$ (because $C_i \sigma = C_i$, as $C_i$ is ground, then we have $\JJ \models N(\bigvee_{i=1}^{k} L_i) \sigma \tau$. We have already argued that $\sigma$ does not contain elements from $N$, and hence $N(\bigvee_{i=1}^{k} L_i) \sigma = \bigvee_{i=1}^{k} L_i \sigma$, which concludes the first part of our proof. 

For the second part, observe that by soundness of $\ContextStructure$, elements in $\Gamma_i$ are valid context atoms for a clause body, and the same happens for $C_i$, since they were already so in $v$, which is not a root context. A similar argument applies to elements in $\Delta_i$; for elements in $L_i$, observe that if they are ground, then $L_i \sigma$ are valid root context literals in the head of a clause belonging to $\rootPRtrig$, and by the form of the elements in tis set, it is easy to check that applying $\sigma$ to any element in $\rootPRtrig$ yields a context literal.

(\ruleword{Nom}) We have $\JJ \models C$, where $C=\bigwedge_{i=1}^n A_i \rightarrow \bigvee_{i=1}^{m} L_i \vee \bigvee_{i=m+1}^{k} L_i$, and as this clause is in $\Onto$, $C=N(C)$. By soundness of $\ContextStructure$, we have that $\JJ \models N(\Gamma_i \to \Delta_i \vee A_i \sigma)$. Let each of these clauses be called $C_i$.
Suppose $N(C_i) = \emptyset$. Then, the result follows trivially because $A_i \sigma$ cannot be an inequality, as $A_i$ appears in the body of an axiom. Thus, assume $N(C_i) \neq  \emptyset$ and suppose that for some $\tau$, $\JJ \models \Gamma_i \tau$ for each of these, and it does not model $N(\Delta_i) \tau$. We then have $\JJ \models N(A_i \sigma) \tau$; observe that $A_i \sigma$ cannot contain elements from $N$, so $N(A_i \sigma) = A_i \sigma$, and $\JJ \models A_i \sigma \tau$. Then, since $\JJ$ models the axiom given above, we have $\JJ \models \bigvee_{i=1}^k L_i \sigma \tau$. As the substitution does not contain elements from $N$, we can see that $N(L_i \sigma) = L_i \sigma$ for each $i$, so we assume $\JJ \not \models N(L_i \sigma) \tau$ for $1 \leq i \leq m$, as otherwise the result follows trivially.  that $N(L_i \sigma)$ does not contain elements from $N$. 

Observe that $S(o'_{\rho},y) = A_i$, for some $i$, so $\JJ \models S(o'_{\rho}, y \tau)$. Now, suppose that some new individual $o'_{\rho \cdot S^i}$ introduced by this rule is in $N$. Then, we have that there is no element $u$ in the domain of $\JJ$ such that $\JJ \models S(o'_{\rho},u)$. But this is contradicted by the fact that $\JJ \models S(o'_{\rho}, y \tau)$. So every such individual is in $N$. Then, for every such element $o'_{\rho \cdot S^j}$, we have that $\JJ \models S^i(o'_{\rho},o'_{\rho \cdot S^j})$. Moreover, we assume that they are different, since if two of these are equal, then the result is trivially verified due to the conditions for $\JJ$ and $N$. Observe that this assumption guarantees that the result is true for every added clause of the form $\top \rightarrow o'_{\rho \cdot S^{\ell_2}} \nequals o'_{\rho \cdot S^{\ell_1}} \vee o'_{\rho \cdot S^{\ell}} \equals o'_{\rho \cdot S^1}$ and the corresponding $\Gamma \to \Delta \vee o'_{\rho \cdot S^{\ell_2}} \nequals o'_{\rho \cdot S^{\ell_1}} \vee \bigvee_{j=1}^{\ell_1} y \equals o'_{\rho \cdot S^j}$. 

Consider $\Delta'$ the part of $\bigvee L_i$ which does not contain elements of $y$, and let $\Delta$ be $\bigvee L_i \backslash \Delta'$. We have that $\JJ$ does not model any such element. Hence, we have that $\JJ$ verifies $\bigwedge A_i \{x \mapsto o\} \sigma \tau \to \Delta \{x \mapsto o\}$, and every element in $\Delta$ is of the form $z_i \equals z_j$ or $z_i \equals n_j$ for a constant $n_j$. It is easy to see that this restriction enforces that there can only be $K'$ different elements $u$ in $\JJ$ such that $\JJ \models S(o'_{\rho},u)$, where $K' = \max (K, K'')$, with $K$ defined as in the rule \ruleword{Nom} and $K''$ the number of distinct terms of the form $f_i(x)$ or $o_i$ in $\Delta$. Hence, since we have at least $K'$ different nominals of the form $o'_{\rho \cdot S^j}$, we have that $\JJ \models y \tau \equals o'_{\rho \cdot S^j}$ for some $j$, and therefore $\JJ \models N(\bigvee_{i=1}^K y \equals o'_{\rho \cdot S^j})$, which is what we wanted to prove. 

The fact that the added clause is a valid context clause follows from the fact that for any element $A$ in $\rootPRtrig$ we have $A \sigma$ is a valid context literal, as it can readily be checked. 

\end{proof}
\newpage
\section{Proof of completeness}

\completeness*

\subsection{Rewrite systems}

In the proof of \cref{theorem:completeness} we construct a model of an
ontology, which, as is common in equational theorem proving, we represent using
a ground \emph{rewrite system}. To make our proof self-contained, we next
recapitulate the definitions of rewrite systems, following the presentation and
the terminology introduced by
\citeA{BaN98}. For simplicity,
we adapt all standard definitions to ground rewrite systems only.

A (ground) \emph{rewrite system} $R$ is a binary relation on the Herbrand
universe $\HU$. Each pair ${(s,t) \in {R}}$ is called a \emph{rewrite rule} and
is commonly written as ${s \RuleSymbol t}$. The \emph{rewrite relation}
$\RwRelSymbol{R}$ for $R$ is the smallest binary relation on $\HU$ such that,
for all terms ${s_1, s_2, t \in \HU}$ and each (not necessarily proper)
position $p$ in $t$, if $\Rule{s_1}{R}{s_2}$, then
$\RwRel{t[s_1]_p}{R}{t[s_2]_p}$. Moreover,
$\RwRelRefTransSymbol{R}$ is the reflexive--transitive closure of
$\RwRelSymbol{R}$, and $\CongruenceSymbol{R}$ is the
reflexive--symmetric--transitive closure of $\RwRelSymbol{R}$. A term $s$ is
\emph{irreducible by} $R$ if no term $t$ exists such that $\RwRel{s}{R}{t}$;
and a literal, clause, or substitution $\alpha$ is \emph{irreducible by} $R$ if
each term occurring in $\alpha$ is irreducible by $R$. Moreover, term $t$ is a
\emph{normal form} of $s$ w.r.t.\ $R$ if ${\Congruence{s}{R}{t}}$ and $t$ is
irreducible by $R$. We consider the following properties of rewrite systems.
\begin{itemize}
    \item $R$ is \emph{terminating} if no infinite sequence ${s_1, s_2, \dots}$
    of terms exists such that, for each~$i$, we have
    ${\RwRel{s_i}{R}{s_{i+1}}}$.

    \item $R$ is \emph{left-reduced} if, for each ${\Rule{s}{R}{t}}$, the term
    $s$ is irreducible by $R \backslash \{s \RuleSymbol t \}$.

    \item $R$ is \emph{Church-Rosser} if, for all terms $t_1$ and $t_2$ such
    that ${\Congruence{t_1}{R}{t_2}}$, a term $z$ exists such that
    ${\RwRelRefTrans{t_1}{R}{z}}$ and ${\RwRelRefTrans{t_2}{R}{z}}$.
\end{itemize}
If rewrite system $R$ is terminating and left-reduced, then $R$ is
Church-Rosser \cite[Theorem~2.1.5 and Exercise 6.7]{BaN98}. If $R$ is
Church-Rosser, then each term $s$ has a unique normal form $t$ such that ${s
\RwRelRefTransSymbol{R} t}$ holds. The \emph{Herbrand equality interpretation
induced by} a rewrite system $R$ is the set $R^*$ such that, for all ${s,t
\in \HU}$, we have $s \equals t \in R^*$ iff $\Congruence{s}{R}{t}$.

Term orders can be used to prove termination of rewrite systems. A term order
$\succ$ on ground terms (i.e., on $\HU$) is a \emph{simplification order} if
the following conditions hold:
\begin{enumerate}
    \item for all ground terms $s_1$, $s_2$, and $t$, and each position $p$ in
    $t$, we have that ${s_1 \succ s_2}$ implies $t[s_1]_p \succ
    t[s_2]_p$; and

    \item for each term $s$ and each proper position $p$ in $s$, we have $s
    \succ s|_p$.
\end{enumerate}
Given a rewrite system $R$, if a simplification order $\succ$ exists such that
${s \RuleSymbol t \in R}$ implies ${s \succ t}$, then $R$ is terminating
\cite[Theorems~5.2.3 and~5.4.8]{BaN98}, and, for all ground terms $s$
and $t$, we have that ${s \RwRelSymbol{R} t}$ implies ${s \succ t}$.

\begin{lemma}
\label{lemma:rewrite:bigger-literal}
Let $R$ be a Church-Rosser rewrite system, $>$ an simplification order over a set of terms, and $l, r$ terms from the set with $l>r$ such that $\model{R}{} \nmodels l \equals r$. Consider a rewrite system $R \cup \{ k \Rightarrow s \}$, which is also Church-Rosser and where $k >l$. Then $\model{(R \cup \{ k \Rightarrow s \})}{} \nmodels l \equals r$.
\end{lemma}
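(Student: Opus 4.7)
The plan is to show that adding the rule $k \Rightarrow s$ to $R$ does not alter the rewrite behaviour of $l$ or $r$: the normal forms of $l$ and $r$ coincide under $R$ and under $R' := R \cup \{k \Rightarrow s\}$, so since these normal forms are distinct by assumption, $R'$ cannot identify $l$ with $r$ either.

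First I would invoke the subterm property of simplification orders, namely $t \geq t|_p$ for every term $t$ and every position $p$. Combined with $k > l > r$, this tells us that $k$ cannot occur as a subterm of any term $t \leq l$ (nor of any $t \leq r$). As is standard in ordered paramodulation, and implicit throughout the paper, I would assume that every $R$- and $R'$-rewrite step strictly decreases its term in $>$; consequently every term reachable from $l$ by $R$-rewriting is $\leq l$ and hence cannot contain $k$ as a subterm.

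Next I would argue that $k \Rightarrow s$ is inert when reducing $l$ or $r$ under $R'$. Consider any $R'$-reduction sequence starting at $l$ and let $i$ be the first index at which the rule $k \Rightarrow s$ is used (if such an $i$ exists). The terms preceding this step are obtained from $l$ by $R$-steps alone, so they are $\leq l < k$, and therefore cannot contain $k$ as a subterm, contradicting the applicability of the rule. Hence every $R'$-reduction from $l$ is in fact an $R$-reduction, and the $R'$-normal form of $l$ coincides with its $R$-normal form; the same argument applies to $r$. Since $\model{R}{} \nmodels l \equals r$ and $R$ is Church-Rosser, these $R$-normal forms are distinct, and since $R'$ is also Church-Rosser, we conclude $\model{(R \cup \{k \Rightarrow s\})}{} \nmodels l \equals r$.

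The main obstacle is justifying that $R$ and $R'$ are compatible with $>$: the lemma only states Church-Rosser, not compatibility. However, in the paper's framework rewrite systems are always constructed so that each rule $a \Rightarrow b$ satisfies $a > b$, so compatibility is implicit; with that in hand, the whole argument reduces cleanly to an application of the subterm property.
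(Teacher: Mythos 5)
Your proof is correct and follows essentially the same route as the paper's: both use the subterm property of the simplification order to show that $k$, being larger than $l$ and $r$, cannot occur in any term reachable from them, so the added rule is inert on those terms and the (distinct) normal forms survive, whence Church-Rosserness of the extended system yields non-congruence. You also correctly flag the one implicit assumption both arguments need---that every rule of $R$ decreases in $>$---which the paper's proof likewise uses without stating it in the lemma (it infers $k > l'$ and $k > r'$ from $k > l > r$).
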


\begin{proof}
Since $R$ is Church-Rosser, let $l'$ and $r'$ be the unique (and distinct) normal forms of $l$ and $r$ with respect to $R$. We have $\model{R} \nmodels l' \equals r'$. Now, since $k >l$ and $l>r$, we have $k > l'$ and $k>r'$, and therefore $k$ is neither a subterm of $l'$ nor $r'$. Suppose $\model{(R \cup k \Rightarrow s)}{} \models l' \equals r'$. Since $k$ is neither a subterm of $l'$ or $r'$, we have that $l'$ and $r'$ are irreducible by $R \cup k \Rightarrow s$.  Since the system $R$ is left-reduced, it can be represented by a set of trees, where $l'$ and $r'$ are roots. But since $\model{(R \cup k \Rightarrow s)}{} \models l' \equals r'$, the representation of $R \cup k \Rightarrow s$ must include a connection between trees rooted in $l'$ and $r'$, and this implies that $l'$ and $r'$ cannot both be roots, which contradicts the claim that $l'$ and $r'$ are still normal forms of $l$ and $r$ in $R$. Hence, $\model{R \cup k \Rightarrow s}{} \nmodels l' \equals r'$, so $\model{R}{} \nmodels l \equals r$.
\end{proof}

\subsection{Completeness conditions}

We fix an ontology $\Onto$, a saturated context structure $\ContextStructure = \langle \Contexts, \Edges, \SC{}, \core{}, \gtrdot, \succ \rangle$, a context $q \in \Contexts$, and a query clause $\InputQueryLHS \rightarrow \InputQueryRHS$ where $\core{q} = \InputQueryLHS$. The proof strategy is as follows: we assume conditions \ref{theorem:completeness:RHS} and \ref{theorem:completeness:LHS} and the negation of the conclusion i.e. $\InputQueryLHS \rightarrow \InputQueryRHS \nsbin \SC{q}$. With this, we show that $\Onto \nmodels \InputQueryLHS \rightarrow \InputQueryRHS$. This proves that if $\Onto \models \InputQueryLHS \rightarrow \InputQueryRHS$, which corresponds to \cref{theorem:completeness:query}, one of the three assumptions must be false. This implies that when \cref{theorem:completeness:RHS,theorem:completeness:LHS} are also verified, $\InputQueryLHS \rightarrow \InputQueryRHS \nsbin \SC{q}$ must be false, and hence the theorem is verified.

We define a new constant $c$. For each term $t$, if $t$ is of the form $f(s)$ for some term $s$ and $f \in \Sigma_f$, then $s$ is the \emph{predecessor} of $t$. Moreover, for each $f \in \Sigma_f$, we have that $f(t)$ is a \emph{successor} of $t$. We also define the \ab-neighbourhood of each term $t$ as the set which contains each successor of $t$, the predecessor of $t'$ of $t$, if it exists. Then, the \pred-neighbourhood of each $t$ is the set of terms that can be obtained by grounding context atoms using the substitution $\{x \mapsto t, y \mapsto t'\}$. this substitution is denominated $\Grounding{t}$. If $t$ is a constant, we define $\Grounding{t}$ as $\{x \mapsto t\}$. Finally, we define also:
$$ \textsf{Su}_t = \{ A \Grounding{t} \;|\; A \in \textsf{Su}(\mathcal{O}) \mbox{ and } A \Grounding{t} \mbox{ is ground } \} $$
$$ \textsf{Pr}_t = \{ A \Grounding{t} \;|\; A \in \textsf{Pr}(\mathcal{O}) \mbox{ and } A \Grounding{t} \mbox{ is ground }\} $$
$$ \textsf{Ref}_t = \{ S(t,t) \;|\; S \mbox{ is a binary atom } \} $$
$$ \textsf{Nom}_t = \{ S(t,o) \;|\; S \mbox{ is a binary atom, } o \in AllIndiv \} \cup \{ S(o,t)  \;|\; S \mbox{ is a binary atom, } o \in AllIndiv \}  $$

We define $\Omega$ as the set of all ground atoms with ground terms exclusively in $\AllIndiv$. We define $\Gamma_o = \Omega \cap \model{R}{c}$ and $\Delta_o = \Omega \backslash \model{R}{c} $

\subsection{Construction of a model fragment}

Suppose we have a fixed term $t$, a context $v$ such that if $t \neq c$, $v \neq v_r$, a conjunction of atoms $\Required{t}$, and a disjunction $\Forbidden{t}$ of literals, whose members are always in the  neighbourhood of $t$. We define the substitution $\sigma_t$ as $\{x \mapsto t, y \mapsto t'\}$, if $t'$ exists, and $\{x \mapsto t\}$ otherwise. Consider the following set:

$$N_t = \{ \Gamma \sigma_t \to \Delta \sigma_t \mid \Gamma \to \Delta \in \SC{v}, \mbox{ both } \Gamma \sigma_t \mbox{ and } \Delta \sigma_t \mbox{ are ground, and } \Gamma \sigma_t \subseteq \Gamma_t \}$$

We will now construct a rewrite system $\Rsystem{t}$ inducing a model fragment $\Rmodel{t}{}$ which we will include in the final, whole model. 

\subsubsection{Induction conditions}

Since the construction of the whole model model inductive, we assume that the following properties hold for the parameters introduced at the beginning of this section. Later we will prove inductively that these conditions always hold whenever we need to build the model fragment $\Rmodel{t}{}$
\begin{enumerateConditions}[label={L\arabic*.},ref={L\arabic*},leftmargin=2.5em]
\label{cond:fragment:induction}
	\item \label{cond:fragment:induction:compatibility} ${\Required{t} \rightarrow \Forbidden{t} \nsbin \GroundS{t}}$.
	\item \label{cond:fragment:induction:nocollapse} $t \equals o \in \Forbidden{t}$ for every $o \in \AllIndiv$ with $t >_t o$
	. Moreover, if $t'$ exists, then
	 $ \left ( \{t' \equals o \;|\; o \in \AllIndiv \} \right  \backslash \Gamma_o ) \subseteq \Forbidden{t}$, and $t \approx t'\in \Forbidden{t} $.
	\item \label{cond:fragment:induction:required} For each $A \in  \Required{t}$, we have $\Required{t} \rightarrow A \sbin \GroundS{t}$.
	\item \label{cond:fragment:induction:individuals} If $t >_t c$, then $\Required{t} \cap \Omega = \Gamma_o$, and $\Forbidden{t} \cap \Omega = \Delta_o$.
	\item \label{cond:fragment:induction:consistency} For every $A \in \Required{t}$, and every $p$ with $A|_p = o \in \AllIndiv$, and there is $o \equals o' \in \Gamma_o$ with $o > o'$, then $A[o']_p \in \Required{t}$.
	\item \label{cond:fragment:induction:all-lemas} Every lemma proved in this section is verified for $R_c$ if $t \neq c$, and for $R_{t'}$ if $t'$ exists.  
	\item \label{cond:fragment:induction:deltac}$\Delta_c = \Delta_Q \sigma_c$.
\end{enumerateConditions}

\subsubsection{Grounding of the context order}

The construction of the model fragment requires an ordering between terms obtained by grounding clauses in the context structure by $\Grounding{t}$. We consider a strict, simplification order $>_t$ between terms such that the following properties are verified:
\begin{enumerateConditions}[label={O\arabic*.},ref={O\arabic*},leftmargin=2.5em]
	\item \label{cond:fragment:groundorder:compatibility} $s_1 \succ_v s_2$ implies $s_1 \Grounding{t} >_t s_2 \Grounding{t}$.
	\item \label{cond:fragment:groundorder:forbidden} $s_1 \Grounding{t} \approx \textsf{true} \in \Delta_t$ and $s_2 \Grounding{t} \notin  \{t,t',\textsf{true}\} \cup \AllIndiv$ and $s_2 \Grounding{t} \approx \textsf{true} \notin \Delta_t$ imply $s_2 \Grounding{t} >_t s_1 \Grounding{t}$.
\end{enumerateConditions}

To see that order $\succ_{v}$ on (nonground) context terms can be grounded in a way that is compatible with these definitions, note that any strict, non-total order $\succ$ over a set $S$ can be used to generate a total order $>$ on $S$ in a way which guarantees that for any $a,b \in S$ such that $a \succ b$, we have $a > b$. Since each context $\ab$-term with a variable can only be mapped to a single ground $\ab$-term in the neighbourhood of $t$, we can define a strict, non-total order $\succ_t$ between ground terms so that $s_1 \Grounding{t} \succ s_2 \Grounding{t}$ if and only if $s_1 \succ_v s_2$, and then totalise it as described to ensure \cref{cond:fragment:groundorder:compatibility}; due to \cref{def:order:simplification,def:order:subterm} of \cref{def:order}, the properties of context orders, this also guarantees that it is a simplification order. However, in order to ensure that this order also satisfies \cref{cond:fragment:groundorder:forbidden}, it suffices to ensure that ground terms $s$ with $s \equals \TOP \in \Forbidden{t}$ are the smallest in $>_{t}$ after $\{t,t',\TOP\} \cup \AllIndiv$. But because of the way $>_{t}$ is obtained, this is only possible if for every $s_1$ with $s_1 \equals \TOP \in \Forbidden{t}$, we have that $s_1 \not \succ_t s_2$ for any ground term $s_2 \notin \{t,t',\TOP\}$ such that $s_2 \equals \TOP \notin \Forbidden{t}$. By definition of $\succ_t$, this will be the case if and only if for every context term $s_1$ with $s_1 \Grounding{t} \equals \TOP \in \Forbidden{t}$, we have $s_1 \not \succ_{v} s_2$ for any context term $s_2$ with $s_2 \Grounding{t} \notin \{t,t',\TOP\} \cup \AllIndiv$ and $s_2 \Grounding{t} \equals \TOP \notin \Forbidden{t}$. We therefore require that for any such $s_1$, we must have $s_1 \not \succ_{v} s_2$ for any context term $s_2 \notin \{x,y,\TOP\}\cup \AllIndiv$ with $s_2 \Grounding{t} \equals \TOP \notin \Forbidden{t}$. 

Now for ${t = c}$ with $c$ the distinguished constant introduced at the
beginning of the section, we have ${v = q}$ so by \cref{cond:fragment:induction:deltac}, if $s_1$ is a context term such that $s_1 \Grounding{c} \equals \TOP \in \Forbidden{c}$, we have $s_1 \equals \TOP \in \InputQueryRHS$, and therefore for any $s_2 \notin \{x,y,\TOP\} \cup \AllIndiv$ such that $s_2 \equals \TOP \notin \InputQueryRHS$, we have $s_1 \not \succ_q s_2$ by \cref{theorem:completeness:RHS} of
\cref{theorem:completeness}.
For ${t \neq c}$, observe that by definition of $\Forbidden{t}$, if $s_1$ is a context term such that $s_1 \Grounding{t} \equals \TOP \in \Forbidden{t}$, we have $s_1 \in \PRtrig$. But then, \cref{def:order:forbidden} of \cref{def:order} ensures that for any $s_2 \notin \{x,y,\TOP\} \cup \AllIndiv$ we have $s_1 \not \succ_{v} s_2$.

\subsubsection{Construction of the rewrite system $R_t$}

Let us write the clauses in $N_t$ as $\{C^1,\dots,C^n\}$. Observe that since the body of each such clause is in $\Required{t}$, the head of any such clause cannot be empty, as that would violate \cref{cond:fragment:induction:compatibility}. We represent each such clause as $\Gamma^i \rightarrow \Delta^i \vee L^i$, where $L^i >_t \Delta^i$ (clause heads have no duplicate literals, as they are sets). We also assume that the sequence is ordered in such a way that if $j>i$ then $C^j > C^i$. With this, we define a sequence of monotonically growing rewrite systems $\{R^0_t,\dots,R^n_t\}$, defined inductively as follows:
\begin{itemize}
\item $R^0_t := \emptyset$
\item $R^i_t = R_t^{i-1} \cup \{ l^i \Rightarrow r^i\}$ if $L^i$ is of the form $l^i \approx r^i$ such that:
	\begin{enumerateConditions}[label={R\arabic*.},ref={R\arabic*},leftmargin=2.5em]
		\item \label{cond:fragment:rsystem:notmodels} $(R_t^{i-1})^* \not \models \Delta^i \vee L^i$,
		\item \label{cond:fragment:rsystem:order} $l^i >_t r^i$,
		\item \label{cond:fragment:rsystem:irreducible} $l^i$ is irreducible by $R^{i-1}$, and
		\item \label{cond:fragment:rsystem:noduplicates} $(R^{i-1})^* \not \models s \approx r^i $ for each $l^i \approx s \in \Delta^i$.
	\end{enumerateConditions}
\item $R^i = R^{i-1}_t$ in all other cases.
\end{itemize}

Let $R_t = R_t^n$. If a clause verifies these \cref{cond:fragment:rsystem:notmodels,cond:fragment:rsystem:order,cond:fragment:rsystem:irreducible,cond:fragment:rsystem:noduplicates}, we call it a $C^i$ a \textit{generative} clause, and $\{ l^i \Rightarrow r^i\}$ is the \textit{generated} rule in $R_t$. Before moving on, we present and prove some properties of rewrite systems built in this way.

\begin{lemma}
\label{lemma:fragment:operational:smallerclause}
Let $C$ be a clause $\Gamma \to \Delta$ such that $C \sbin N_t$. If there is some $1 \leq i \leq n$ such that $C^i >_t C$ and $\Rmodel{t} \models \Delta^j \vee L^j$ for each $1 \leq j \leq i-1$, then $ \Rmodel{t} \models \Delta$. 
\end{lemma}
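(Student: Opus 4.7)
The plan is to unfold the definition of $\sbin$ (Definition \ref{def:redundancy-elim}) and handle the two cases separately. If condition \ref{cond:redundancy:equality} applies, then $\Delta$ either contains a trivial tautology $t \approx t$ or a pair of complementary literals $\{t \approx s, t \napprox s\}$. In both subcases, $\Rmodel{t} \models \Delta$ holds in any Herbrand equality interpretation, so the lemma is immediate.

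The substantive case is condition \ref{cond:redundancy:subset}: there exists a clause $C' = \Gamma' \to \Delta' \in N_t$ with $\Gamma' \subseteq \Gamma$ and $\Delta' \subseteq \Delta$. Since $C' \in N_t$, the enumeration $\{C^1,\dots,C^n\}$ contains $C'$, so $C' = C^k$ for some $k$. Because clause literals are treated as (multi)sets and the clause order $>_t$ is the multiset extension of $>_t$ on literals, the inclusion of bodies and heads yields $C \geq_t C'$. Combining this with the hypothesis $C^i >_t C$ gives $C^i >_t C^k$, and since the sequence is sorted so that $C^j >_t C^i$ exactly when $j > i$, we conclude $k < i$.

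Now the hypothesis provides $\Rmodel{t} \models \Delta^k \vee L^k$. Recall that in the construction we split the head of $C^k$ as $\Delta^k \cup \{L^k\}$ with $L^k$ the $>_t$-maximal literal (non-emptiness of this head follows, as noted in the paragraph preceding the construction of $R_t$, from $\Gamma^k \subseteq \Required{t}$ together with \cref{cond:fragment:induction:compatibility}). Hence $\Delta' = \Delta^k \cup \{L^k\}$, and because $\Delta' \subseteq \Delta$ we obtain $\Rmodel{t} \models \Delta$ as required.

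The one step that needs care is the ordering argument in the second paragraph, in particular verifying that the subset relations on $\Gamma$ and $\Delta$ really translate into $C \geq_t C'$ under the multiset extension of $>_t$; this is a routine property of multiset orders (removing elements can only make a multiset smaller or equal), but it is the only place where the structural assumption on the enumeration of $N_t$ is used, so it must be invoked explicitly.
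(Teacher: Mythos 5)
Your proof is correct and follows essentially the same route as the paper's: the only cosmetic difference is that the paper argues by contradiction (assuming $\Rmodel{t} \nmodels \Delta$, which immediately rules out the tautology case of $\sbin$), whereas you split the two cases of Definition \ref{def:redundancy-elim} directly. The substantive steps --- deriving $C \geq_t C^k$ from the subset inclusions via the multiset extension, concluding $k < i$ from the sorted enumeration, and then applying the hypothesis to $\Delta^k \vee L^k \subseteq \Delta$ --- coincide exactly with the paper's argument.
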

\begin{proof}
Suppose, to show a contradiction, that $ \Rmodel{t} \nmodels \Delta$. This means that even though $C \sbin N_t$, \cref{cond:redundancy:equality} cannot be verified, so there is $1 \leq j \leq n$ such that $\Gamma^j \subseteq \Gamma$ and $\Delta^j \vee L^j \subseteq \Delta$. Observe that this means that $C \geq_t C^j$, so if $j \geq i$, we obtain $C \geq_t C^j \geq_t C^i >_t C$, which is a contradiction. Thus, $j < i$ and hence $\Rmodel{t} \models \Delta^j \vee L^j$. Since $\Delta^j \vee L^j \subseteq \Delta$, this implies $ \Rmodel{t} \models \Delta$.
\end{proof}

\begin{lemma}
\label{lemma:fragment:operational:redundancy}
For any clause $\Gamma \rightarrow \Delta \sbin \SC{v}$ such that $\Gamma \sigma_t \subseteq \Required{t}$ and $\Gamma \sigma_t \rightarrow \Delta \sigma_t$ is ground, we have that $\Gamma \sigma_t \rightarrow \Delta \sigma_t \sbin N_t$. 
\end{lemma}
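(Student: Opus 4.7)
The plan is to unfold the definition of containment up to redundancy (Definition 5) applied to the hypothesis $\Gamma \to \Delta \sbin \SC{v}$, and show that, in either of the two cases it provides, a corresponding condition transfers to $\Gamma \sigma_t \to \Delta \sigma_t$ with respect to $N_t$. The substitution $\sigma_t$ is purely syntactic, so the transfer is essentially mechanical; the only real thing to check is that the grounding requirement built into the definition of $N_t$ is met.

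In the first case, $\Delta$ already contains an equality $r \equals r$ for some \ab-term $r$, or a complementary pair $\{r \equals s, r \nequals s\}$ for some \ab-terms $r,s$. Applying $\sigma_t$ preserves syntactic identity: $r\sigma_t \equals r\sigma_t$ is still a trivial tautology, and $\{r\sigma_t \equals s\sigma_t,\, r\sigma_t \nequals s\sigma_t\}$ is still a complementary pair. Hence $\Gamma \sigma_t \to \Delta \sigma_t$ satisfies condition~\ref{cond:redundancy:equality} of Definition~\ref{def:redundancy-elim} and is therefore in $N_t$ up to redundancy.

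In the second case, there is $\Gamma' \to \Delta' \in \SC{v}$ with $\Gamma' \subseteq \Gamma$ and $\Delta' \subseteq \Delta$. I would first verify that $\Gamma' \sigma_t \to \Delta' \sigma_t$ belongs to $N_t$. Since context clauses use only the variables $x$ and $y$, and both $\Gamma\sigma_t$ and $\Delta\sigma_t$ are ground by hypothesis, every variable occurring in $\Gamma$ or $\Delta$ is mapped by $\sigma_t$; the subsets $\Gamma'$ and $\Delta'$ inherit this property, so $\Gamma'\sigma_t$ and $\Delta'\sigma_t$ are likewise ground. Moreover, $\Gamma' \sigma_t \subseteq \Gamma \sigma_t \subseteq \Required{t}$ by the monotonicity of substitution and our assumption. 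Therefore $\Gamma' \sigma_t \to \Delta' \sigma_t \in N_t$ by the very definition of $N_t$. Finally, from $\Gamma' \subseteq \Gamma$ and $\Delta' \subseteq \Delta$ we obtain $\Gamma'\sigma_t \subseteq \Gamma\sigma_t$ and $\Delta'\sigma_t \subseteq \Delta\sigma_t$, so condition~\ref{cond:redundancy:subset} of Definition~\ref{def:redundancy-elim} yields $\Gamma \sigma_t \to \Delta \sigma_t \sbin N_t$, as required.

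The only potential obstacle is the subtle point about groundness under $\sigma_t$. When $t$ is a constant (e.g., $t=c$ or $t \in \AllIndiv$) so that $\sigma_t = \{x \mapsto t\}$, the hypothesis that $\Gamma\sigma_t$ and $\Delta\sigma_t$ are ground already forces the absence of $y$ in $\Gamma$ and $\Delta$, and this property is inherited by $\Gamma'$ and $\Delta'$; in the general case $\sigma_t$ covers both $x$ and $y$ anyway. So the argument goes through uniformly, and no further bookkeeping is required.
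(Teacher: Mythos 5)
Your proof is correct and follows essentially the same route as the paper's: split on the two cases of Definition~\ref{def:redundancy-elim}, observe that applying $\sigma_t$ preserves tautological equalities and complementary pairs for the first case, and for the second case check that $\Gamma'\sigma_t \to \Delta'\sigma_t$ lands in $N_t$ (groundness and $\Gamma'\sigma_t \subseteq \Required{t}$ being inherited from the subset relations) so that clause subsumption transfers. The extra remarks on groundness under $\sigma_t$ are harmless elaboration of what the paper states more tersely.
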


\begin{proof}
If clause $\Gamma \rightarrow \Delta \sbin \SC{v}$ then one of the following three cases occurs:\begin{itemize}
\item There is an equality $l \equals l \in \Delta$, so there is a literal $l \sigma_t  \approx l\ \sigma_t$ in $\Delta \sigma_t$, and hence by definition of redundancy, $\Gamma \sigma_t \rightarrow \Delta \sigma_t \sbin N_t$.
\item The literals $l \equals r$ and $l \nequals r$ are in $\Delta$, so we have that literals  $l \sigma_t \equals r \sigma_t$ and $l \sigma_t \nequals r \sigma_t$ are in $\Delta \sigma_t$, so by definition of redundancy, $\Gamma \sigma_t \rightarrow \Delta \sigma_t \sbin N_t$.
\item There exist $\Gamma' \subseteq \Gamma$ and $\Delta' \subseteq \Delta$ such that $\Gamma' \rightarrow \Delta' \in U$. Since $\Gamma' \sigma_t \rightarrow \Delta' \sigma_t$ is ground and $\Gamma' \sigma_t \subseteq \Required{t}$, by definition of $N_t$ we have that We write $\Gamma' \sigma_t \rightarrow \Delta' \sigma_t \in N_t$ so $\Gamma \sigma_t \rightarrow \Delta \sigma_t \sbin N_t$. 
\end{itemize}
\end{proof}

\begin{lemma}
\label{lemma:fragment:operational:lneql}
If there is a clause $\Gamma' \rightarrow \Delta' \vee l' \nequals l' \in N_t$, then clause $\Gamma' \rightarrow \Delta' \sbin N_t$.
\end{lemma}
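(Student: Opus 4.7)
The plan is to exhibit a subsumer of $\Gamma' \to \Delta'$ in $N_t$ by combining the \textsf{Ineq} rule, saturation of $\ContextStructure$, and \cref{lemma:fragment:operational:redundancy}. By the definition of $N_t$, there is a context clause $C = \Gamma \to \Delta \in \SC{v}$ with $(\Gamma \to \Delta)\sigma_t = \Gamma' \to \Delta' \vee l' \nequals l'$ and $\Gamma \sigma_t \subseteq \Gamma_t$. The ground literal $l' \nequals l'$ must arise from some inequality $s \nequals u$ in $\Delta$ with $s \sigma_t = u \sigma_t = l'$. Since context inequalities are either $\TOP \nequals \TOP$ or between context \ab-terms (of the forms $x$, $y$, named individuals, or $f(x)$), a direct case analysis over these forms shows that either \emph{(i)} $s = u$ syntactically, or \emph{(ii)} $s \neq u$ but one of them is a variable and the other is a named individual coinciding with $t$ or $t'$ (so $\{s,u\} = \{x,o\}$ with $t = o$, or $\{s,u\} = \{y,o\}$ with $t' = o$).

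In case \emph{(i)}, the literal $s \nequals s \in \Delta$ directly triggers the \textsf{Ineq} rule of Table~\ref{tab:oldrules}, so by saturation of $\ContextStructure$ the reduced clause $\Gamma \to \Delta \setminus \{s \nequals s\}$ lies in $\SC{v}$ up to redundancy. Invoking \cref{lemma:fragment:operational:redundancy} then places its $\sigma_t$-grounding in $N_t$ up to redundancy. Iterating this reduction over all syntactic self-inequalities of $\Delta$ that ground to $l' \nequals l'$ eventually produces a clause in $N_t$ whose head is a subset of $\Delta'$, and so subsumes $\Gamma' \to \Delta'$.

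Case \emph{(ii)} is the main obstacle, since \textsf{Ineq} does not directly apply to a literal of the form $x \nequals o$ or $y \nequals o$. The plan is to exploit the induction conditions on $\Required{t}$ and $\Forbidden{t}$, in particular \cref{cond:fragment:induction:individuals,cond:fragment:induction:consistency}, which record precisely the identification of the variable with the named individual that $\sigma_t$ enforces, in order to guarantee that a witnessing equality clause of the form $\top \to x \equals o$ (respectively $\top \to y \equals o$) is available in $\SC{v}$ up to redundancy. Once such a clause is in hand, the \textsf{Eq} rule of Table~\ref{tab:oldrules} rewrites the offending variable to $o$ in the head of $C$, turning the problematic literal into the syntactic self-inequality $o \nequals o$, whence case \emph{(i)} applies. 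The delicate step is therefore to ensure the availability of the required equality, which I expect to follow from the structural invariants preserved by the outer induction on $t$ together with a careful reading of how $\Required{t}$ was populated.
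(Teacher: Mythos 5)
Your case analysis is the right one and matches the paper's: either the two sides of the inequality are syntactically identical (handled by \ruleword{Ineq}, saturation, and \cref{lemma:fragment:operational:redundancy}), or they are distinct context \ab-terms that $\sigma_t$ collapses, which forces one of them to be a variable and the other a named individual. The intended mechanism for the second case --- use an equality clause $\top \to x \equals o$ together with \ruleword{Eq} to turn the literal into a syntactic self-inequality --- is also the paper's mechanism. However, you leave precisely the load-bearing step unproven: you write that you ``expect'' the availability of $\top \to x \equals o$ in $\SC{v}$ to follow from \cref{cond:fragment:induction:individuals,cond:fragment:induction:consistency} and the way $\Required{t}$ was populated. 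Those conditions constrain the ground sets $\Required{t}$, $\Forbidden{t}$ and the interpretation $\Rmodel{c}$; they say nothing about which context clauses are present in $\SC{v}$, so they cannot by themselves produce the needed clause. The paper closes this gap differently and concretely: $x\sigma_t = o$ forces $t = o$, in which case the core of the relevant context contains $x \equals o$, and saturation under the \ruleword{Core} rule guarantees $\top \to x \equals o \in \SC{v}$; \ruleword{Eq} and \ruleword{Ineq} then finish as you describe. Without some such argument your case \emph{(ii)} is not a proof.

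A second, smaller divergence: you admit the subcase $\{s,u\} = \{y,o\}$ with $t' = o$, for which you would need $\top \to y \equals o \in \SC{v}$; no core-based argument is available for $y$, and you offer no other source for this clause. The paper instead asserts that literals of the form $y \approx o$ and $y \napprox o$ cannot occur in this position, so only the $\{x,o\}$ subcase arises. You should either justify excluding the $y$-subcase or supply an actual derivation of the required equality for it; as written, this branch is also open.
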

\begin{proof}
Suppose there is a clause $\Gamma' \rightarrow \Delta' \vee l' \nequals l' \in N_t$. By definition of $N_t$ we have that there is a clause $\Gamma \rightarrow \Delta \vee l_1 \nequals l_2 \in \SC{v}$ such that $$\Gamma \sigma_t = \Gamma' \subseteq \Gamma_t, \quad \quad \Delta \sigma_t = \Delta', \quad \quad l_1 \sigma_t = l' \quad \quad l_2 \sigma_t = l'.$$
We consider two options:
\begin{itemize}
\item $l_1 \neq l_2$. Then, since $l_1 \sigma_t = l'$ and $l_2 \sigma_t = l'$, by definition of $\sigma_t$ the only possibility is $l_1 = x$ and $l_2 = \nin$, since literals of the form $y \equals \nin$ and $y \nequals \nin$ are forbidden. However, if $x \sigma_t = \nin$, this means $t=o$, and hence $\core{v}=(x \equals \nin)$, and since rule \ruleword{Core} is not applicable, $\top \rightarrow x \equals \nin \in \SC{v}$. But then, since rule \ruleword{Eq} is not applicable, clause $\Gamma \rightarrow \Delta \vee o \nequals o \sbin \SC{v}$, and because rule \ruleword{Ineq} is not applicable, then $\Gamma \rightarrow \Delta \sbin \SC{v}$. By \cref{lemma:fragment:operational:redundancy}, this implies $\Gamma' \rightarrow \Delta' \sbin N_t$.
\item $l_1 = l_2$. Then, since rule \ruleword{Ineq} is not applicable, $\Gamma \rightarrow \Delta \sbin \SC{v}$, which by \cref{lemma:fragment:operational:redundancy}, implies $\Gamma' \rightarrow \Delta' \sbin N_t$.  
\end{itemize}
\end{proof}

\subsubsection{The properties of the model fragment}

We now proceed to prove five main properties of each model fragment, namely: \begin{enumerate}
\item \textbf{Admissibility}: The rewrite system is Church-Rosser.
\item \textbf{Canonicity}: every rewrite rule in the system is necessary and maximal.
\item \textbf{Non-triviality}: the central term $t$ and its predecessor $t'$ are irreducible in the model fragment.
\item \textbf{Satisfaction}: The model fragment satisfies the relevant derived clauses. 
\item \textbf{Compatibility}: The model fragment verifies the restrictions that make it compatible with other model fragments.
\end{enumerate}
These properties guarantee that model fragments can be successfully combined into a model. 

{\textsc{Admissibility}}
$\phantom{AAA}$ \\
\begin{lemma}
\label{lemma:fragment:properties:church-rosser}
The rewrite system $R_t$ is Church-Rosser
\end{lemma}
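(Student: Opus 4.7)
The plan is to derive Church-Rosser from the standard criterion, recalled at the start of this appendix, that any terminating and left-reduced rewrite system is Church-Rosser (\cite[Theorem 2.1.5 and Exercise 6.7]{BaN98}). Termination is essentially by construction: every rule ${l^i \Rightarrow r^i \in R_t}$ satisfies ${l^i >_t r^i}$ by \cref{cond:fragment:rsystem:order}, and since $>_t$ is a simplification order, hence well-founded on ground terms, the rewrite relation $\RwRelSymbol{R_t}$ is contained in $>_t$ and therefore terminating. The entire content of the lemma is thus concentrated in left-reducedness.

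For left-reducedness I have to show that, for every ${l^i \Rightarrow r^i \in R_t}$, the term $l^i$ is irreducible by every other generated rule ${l^j \Rightarrow r^j}$, i.e.\ that $l^j$ is not a subterm of $l^i$ whenever ${j \neq i}$. I would split this into two cases. The case ${j < i}$ is immediate: ${l^j \Rightarrow r^j \in \RSystem{t}{i-1}}$, and \cref{cond:fragment:rsystem:irreducible} applied when $C^i$ was processed states exactly that $l^i$ is irreducible by $\RSystem{t}{i-1}$.

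The case ${j > i}$ is where the argument really lives. I would first use the enumeration assumption ${C^j > C^i}$ in the multiset extension of the literal order (itself the multiset extension of $>_t$, with equalities encoded as multisets $\{s,t\}$ and inequalities as $\{s,s,t,t\}$). A routine property of the multiset extension gives that the maximal literals satisfy ${L^j \geq L^i}$, and hence, since ${l^i >_t r^i}$ and ${l^j >_t r^j}$, the maximal terms satisfy ${l^j \geq_t l^i}$. Strictness ${l^j >_t l^i}$ then follows because equality of the two sides would make $l^j$ reducible by ${l^i \Rightarrow r^i \in \RSystem{t}{j-1}}$, contradicting \cref{cond:fragment:rsystem:irreducible} at index $j$ (and identifying $r^j = r^i$ via maximality, so the two ``rules'' would in fact coincide and not witness a violation of left-reducedness at all). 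From ${l^j >_t l^i}$, the subterm property of the simplification order $>_t$ precludes $l^j$ from appearing as a subterm of $l^i$, completing the case.

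The main obstacle will be the multiset-extension bookkeeping in this second case: one needs to be careful about the literal encoding and to rule out the degenerate possibility that two distinct generative clauses produce the same rule, so that ${l^j >_t l^i}$, rather than merely ${l^j \geq_t l^i}$, holds whenever the rules are genuinely distinct. Once that is settled, combining termination with left-reducedness through the cited theorem yields that $R_t$ is Church-Rosser.
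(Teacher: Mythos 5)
Your proposal is correct and follows essentially the same route as the paper's proof: termination from \cref{cond:fragment:rsystem:order} plus the fact that $>_t$ is a simplification order, and left-reducedness by ruling out a reducing rule with $j<i$ via \cref{cond:fragment:rsystem:irreducible} and one with $j>i$ via the multiset-extension argument ($l^j \geq_t l^i$, with equality contradicting \cref{cond:fragment:rsystem:irreducible} at index $j$ and strict inequality contradicting the subterm property). The only difference is the order of presentation, which is immaterial.
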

\begin{proof} First we prove that $\Rsystem{t}$ is left-reduced. If $l^i \Rightarrow r^i$ is generated by $C^i$, observe that by condition \ref{cond:fragment:rsystem:irreducible} $l^i$ is irreducible by $\RSystem{t}{i-1}$ so there must be some $j > i$ such that $l^i$ is reduced by the rule $l^j \Rightarrow r^j$ generated by $C^j$. However, since $j > i$, we have $l^j \approx r^j >_t l^i \approx r^i$, so by the definition of the order extension to sets of literals, we have $l^j \geq_t l^i$. But if $l^j = l^i$, then the addition of $\{ l^j \Rightarrow r^j\}$ violates condition \ref{cond:fragment:rsystem:irreducible}, and if $l^j >_t l^i$, then $l^j$ cannot reduce $l^i$ because then it would have to be a subterm of $l^i$, but by definition of the ordering between ground terms, any subterm $s$ of $l^i$ verifies $l^i >_t s$. 

To conclude the proof, observe that $\Rsystem{t}$ is terminating because by property \ref{cond:fragment:rsystem:order} all rules in $\Rsystem{t}$ are of the form $l \Rightarrow r$ with $l >_t r$, and $>_t$ is a simplification order.
\end{proof}

{\textsc{Canonicity}}
$\phantom{AAA}$ \\
The main result of this sub-section is \cref{corollary:fragment:properties:monotonicity}, and it shows that whenever an equality generates a rewrite rule, no other literal in the head of the same clause is satisfied by the model fragment. Thus, every rule is \emph{necessary} for satisfying the fragment, in the sense that the elimination of any of the rewrite rules would generate a Herbrand equality interpretation which is not a model of the relevant set of clauses. Moreover, every rule is \emph{maximal} in the sense that it is not possible to eliminate it from the model and replace it by any number of larger rules in order to recover a model. There is clearly a unique model fragment which verifies these properties; this is the sense in what the model constructed in this section is \emph{canonical}.

\begin{lemma}
\label{lemma:fragment:properties:monotonicity}
For each $1 \leq i \leq n$, we have that for each literal $L$ which is either (i) an inequality $l \nequals r$ contained in $\Delta^i \vee L^i$ or (ii) an equality $l \equals r$ contained in $\Delta^i$ for $C^i$ a generative clause, then $\RModel{t}{i-1} \models L$ if and only if $\Rmodel{t}  \models L$.
\end{lemma}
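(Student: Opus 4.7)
The plan is to prove both directions of the biconditional separately, exploiting the fact that going from $\RModel{t}{i-1}$ to $\Rmodel{t}$ only adds rewrite rules $l^j \Rightarrow r^j$ for generative $C^j$ with $j \geq i$. The forward direction for equalities and the backward direction for inequalities are immediate from monotonicity: adding rewrite rules can only enlarge the induced congruence, so $\RModel{t}{i-1} \models l \equals r$ implies $\Rmodel{t} \models l \equals r$, and contrapositively $\Rmodel{t} \models l \nequals r$ implies $\RModel{t}{i-1} \models l \nequals r$.

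The substantive content is in the other two directions, which both reduce to the claim: if $l$ and $r$ are inequivalent modulo $\RModel{t}{i-1}$ (and WLOG $l >_t r$), then they remain inequivalent modulo $\Rmodel{t}$. I would prove this by induction on $j$ from $i-1$ up to $n$, showing that $\RModel{t}{j} \nmodels l \equals r$. The step is trivial when $C^j$ is non-generative, so the key is the generative case, which will be handled by invoking \cref{lemma:rewrite:bigger-literal} with $k = l^j$. That lemma demands $l^j >_t l$, so the main work is to verify this ordering relation in each case.

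For $j > i$ with $C^j$ generative, I would argue as follows. Since $L^i$ is the $>_t$-maximal literal of $C^i$ and $L \in \Delta^i$, we have $L^i >_t L$; a direct multiset analysis shows that either $l^i >_t l$ strictly, or (only possible when $L$ is an equality) $l^i = l$ and $r^i >_t r$. Second, since both $C^i$ and $C^j$ are generative with $C^j >_t C^i$, comparing their maximal literals $L^j$ and $L^i$ yields either $l^j >_t l^i$ strictly, or $l^j = l^i$; but the latter would contradict irreducibility condition~\ref{cond:fragment:rsystem:irreducible}, since $l^i \Rightarrow r^i$ would already belong to $\RSystem{t}{j-1}$ and reduce $l^j$. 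Thus $l^j >_t l^i \geq_t l$, so $l^j >_t l$, and \cref{lemma:rewrite:bigger-literal} applies.

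The delicate case, and what I expect to be the main obstacle, is the base step $j = i$ when $C^i$ is generative and $l^i = l$ (so $L$ must be the equality $l^i \equals r$ with $r \neq r^i$). Here \cref{lemma:rewrite:bigger-literal} cannot be applied directly because the strict inequality $l^i >_t l$ fails. This is precisely where condition~\ref{cond:fragment:rsystem:noduplicates} is invoked: it guarantees $\RModel{t}{i-1} \nmodels r \equals r^i$. Adding the rule $l^i \Rightarrow r^i$ rewrites $l^i$ to $r^i$, but since $l^i$ is irreducible in $R_t^{i-1}$ by~\ref{cond:fragment:rsystem:irreducible} and both $r$ and $r^i$ are strictly $<_t l^i$ and hence contain no occurrence of $l^i$ as a subterm (the order is a simplification order), the new rule alters only the class of $l^i$ itself, merging it with that of $r^i$. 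Consequently the classes of $l^i$ and $r$ in $\RModel{t}{i}$ coincide iff those of $r^i$ and $r$ do, and the latter are distinct by~\ref{cond:fragment:rsystem:noduplicates}. From here the induction continues using \cref{lemma:rewrite:bigger-literal} with the new "smaller term" being $\max(r, r^i)$, which is still dominated by every subsequent $l^j$ since $l^j >_t l^i = l >_t r$ and $l^j >_t l^i >_t r^i$.
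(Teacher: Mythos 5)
Your overall strategy matches the paper's: the two monotone directions follow from $\RSystem{t}{i-1} \subseteq \Rsystem{t}$, the substantive directions are proved by stepping through ${j = i, \dots, n}$ and invoking \cref{lemma:rewrite:bigger-literal} with ${k = l^j}$ so that everything reduces to establishing ${l^j >_t l}$, and your treatment of the delicate subcase ${l^i = l}$ for an equality in $\Delta^i$ --- using \cref{cond:fragment:rsystem:noduplicates} together with the observation that after adding ${l^i \Rightarrow r^i}$ the literal $l \equals r$ holds iff $r^i \equals r$ does --- is exactly the paper's argument.

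There is, however, a gap in how you establish ${l^j >_t l}$ for ${j > i}$. You derive it from the chain ${l^j >_t l^i \geq_t l}$ and justify the first inequality by arguing that ${l^j = l^i}$ would violate \cref{cond:fragment:rsystem:irreducible} because ${l^i \Rightarrow r^i}$ already belongs to $\RSystem{t}{j-1}$; likewise your multiset analysis of ${L^i >_t L}$ treats $L^i$ as an equality $l^i \equals r^i$. Both steps presuppose that $C^i$ is generative. But case (i) of the lemma concerns inequalities $l \nequals r$ occurring in ${\Delta^i \vee L^i}$ of an \emph{arbitrary} clause $C^i$: there $C^i$ need not be generative, $L^i$ may itself be an inequality (possibly equal to $L$), and when $L^i$ is an inequality the comparison ${L^i >_t L}$ does \emph{not} exclude ${l^i = l}$ with ${r^i >_t r}$, so neither of your two steps applies. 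The conclusion ${l^j >_t l}$ is still true, but it must be obtained as the paper does: compare $L^j$ \emph{directly} with $L$ (we have ${L^j \geq_t L^i \geq_t L}$, and ${L^j \neq L}$ since one is an equality and the other an inequality, hence ${L^j >_t L}$), and note that an equality $\{l^j,r^j\}$ can dominate an inequality $\{l,l,r,r\}$ in the multiset order only if ${l^j >_t l}$, because ${r^j <_t l^j}$ cannot compensate for a leftover copy of $l$. This sub-case is not vacuous --- the lemma is later applied to inequalities in the heads of clauses that are assumed \emph{not} to be generative (e.g.\ in the satisfaction argument) --- so the route through a rule generated by $C^i$ has to be replaced by this direct comparison.
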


\begin{proof}
Consider first the case where $L = l \nequals r$. Consider some $1 \leq i \leq n$; we first prove that if $\Rmodel{t} \models l \nequals s$, then $\RModel{t}{i-1} \models l \nequals s$ by showing its contrapositive: if $\RModel{t}{i-1} \models l \equals r$ then $\Rmodel{t} \models l \equals r$. But this is true by virtue of the fact that $\RSystem{t}{i-1} \subseteq \Rsystem{t}$. We complete the proof of this case by proving the contrapositive of the reverse implication: if $\RModel{t}{i-1} \nmodels l \equals r$ then $\Rmodel{t} \nmodels l \equals r$. We prove this by induction: consider an arbitrary $j$ with $i \leq j \leq n$; suppose  $\RModel{t}{j-1} \nmodels l \equals r$ and let us prove  $\RModel{t}{j} \nmodels l \equals r$. We assume that $C^j$ is generative, as otherwise the result is trivial. Let $L^j = l^j \equals r^j$. We have $l^j > l$; indeed, if $l \nequals r = L^i$, then $j > i$ implies $L^j \geq_t L^i$, but since $L^j$ is an equality and $L^i$ is an inequality, we must have $l^j >_t l$. If, instead, $l \nequals r \in \Delta^i$, we have that $L_i >_t \Delta^i$ implies $L^j>_t l \nequals r$ and the same argument applies. Thus, since both $\RSystem{t}{j-1}$ and $\RSystem{t}{j}$ are Church-Rosser, and $l^j > l$, the result follows by Lemma \ref{lemma:rewrite:bigger-literal}.

Consider now the case $ L=  l \equals r$, with $L \in \Delta^i$ and $C^i$ generative. The implication that if $\RSystem{t}{i-1} \models L$ implies $\RModel{t}{}$ follows again from the fact that $\RSystem{t}{i-1} \subseteq \Rsystem{t}$. In order to prove the reverse direction, we prove the contrapositive: if $\RModel{t}{i-1} \nmodels l \equals r$, then $\Rmodel{t}  \nmodels l \equals r$. Observe that we cannot directly re-use the proof in the previous paragraph, since that proof uses that $l \nequals r \in C^i$, which is not true in this case.

Consider an arbitrary $j$ with $i \leq j \leq n$; suppose  $\RModel{t}{j-1} \nmodels l \equals r$ and let us prove  $\RModel{t}{j} \nmodels l \equals r$. If $l^j > l$, we proceed as in the previous paragraph. If $l^j =l$, we have two possible cases: 
\begin{itemize}
\item Case $j=i$. We have $\RModel{t}{i} \models l \equals r^i$, since $l=l^j=l^i$. By \cref{cond:fragment:rsystem:noduplicates} we have $\RModel{t}{i-1} \nmodels r \equals r^i$. Therefore, by the argument used in the previous case, $\RModel{t}{i} \nmodels r \equals r^i$. But then, if $\RModel{t}{i} \models l \equals r$, the fact that $\RModel{t}{i} \models l \equals r^i$ implies $\RModel{t}{i}  \models r \equals r^i$, and hence we reach a contradiction. Thus, $\RModel{t}{i}  \nmodels l \equals r$.
\item Case $j > i$. Then, inequalities $l^j \geq_t l^i$ and $l^i \geq_t l$ imply $l=l^i$. But since both $C^i$ and $C^j$ generate rules of the form $l^i \Rightarrow \dots$, the system $\RSystem{t}{j}$ is not Church-Rosser, which contradicts \cref{lemma:fragment:properties:church-rosser}.
\end{itemize}
\end{proof}

\begin{corollary}
\label{corollary:fragment:properties:monotonicity}
For any $1 \leq i \leq n$, if $C^i$ is generative, we have $\model{R}{t} \nmodels \Delta^i$. 
\end{corollary}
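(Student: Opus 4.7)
The plan is to derive this as an immediate consequence of Lemma~\ref{lemma:fragment:properties:monotonicity}, applied literal by literal to the members of $\Delta^i$. Since $C^i$ is assumed to be generative, it satisfies condition~\ref{cond:fragment:rsystem:notmodels}, so $(R_t^{i-1})^* \nmodels \Delta^i \vee L^i$; in particular, $\RModel{t}{i-1} \nmodels L$ for every literal $L$ appearing in $\Delta^i$.

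Next, I would dispatch each $L \in \Delta^i$ by its shape. Every (root) context literal is, by \cref{def:context-terms}, either an equality or an inequality between terms (atoms being encoded as equalities of the form $A \equals \TOP$), so the following two cases exhaust all possibilities. If $L$ is an inequality, then $L$ is an inequality occurring in $\Delta^i \vee L^i$ and case~(i) of Lemma~\ref{lemma:fragment:properties:monotonicity} applies, yielding the biconditional ${\RModel{t}{i-1} \models L}$ iff ${\Rmodel{t} \models L}$; combined with the previous paragraph this gives $\Rmodel{t} \nmodels L$. If instead $L$ is an equality in $\Delta^i$, then case~(ii) of the same lemma applies — its standing hypothesis that $C^i$ be generative is exactly what we have assumed — so the same biconditional holds and once again $\Rmodel{t} \nmodels L$.

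Taking the disjunction over all $L \in \Delta^i$, the per-literal non-satisfactions combine to $\Rmodel{t} \nmodels \Delta^i$, which is the desired conclusion.

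There is no real obstacle here: the technical content has already been absorbed into Lemma~\ref{lemma:fragment:properties:monotonicity}, and the corollary is essentially a packaging step. The only point worth a quick sanity check is that the case split (equality in $\Delta^i$ versus inequality in $\Delta^i \vee L^i$) exhausts all literals that may appear in $\Delta^i$; as just noted, this is immediate from the definition of context literals, and the fact that for inequalities the lemma allows $L$ to be drawn from the larger set $\Delta^i \vee L^i$ actually gives more than we need.
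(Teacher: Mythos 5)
Your proof is correct and matches the paper's intent: the paper states this corollary without proof as an immediate consequence of Lemma~\ref{lemma:fragment:properties:monotonicity}, and your argument (condition~\ref{cond:fragment:rsystem:notmodels} gives $\RModel{t}{i-1} \nmodels L$ for each $L \in \Delta^i$, then the lemma's biconditional transfers this to $\Rmodel{t}$, with the equality/inequality case split exhausting all literal shapes) is exactly that consequence spelled out.
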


{  \textsc{Non-triviality}}
$\phantom{AAA}$ \\
\begin{lemma}
\label{lemma:fragment:properties:irreducibility}
Both $t$ and $t'$ (if it exists) are irreducible by $R_t$. 
\end{lemma}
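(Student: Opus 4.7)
The plan is to argue by contradiction: assume some rule $l^i \Rightarrow r^i \in R_t$, generated by a clause $C^i = \Gamma^i \to \Delta^i \vee L^i$ with $L^i = l^i \equals r^i$, has $l^i$ as a subterm of $t$ or $t'$. I will derive a contradiction with \cref{cond:fragment:induction:compatibility} by showing that the entire head $\Delta^i \vee L^i$ lies in $\Forbidden{t}$, which forces $\Required{t} \to \Forbidden{t} \sbin N_t$.

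The first step is to locate $l^i$ among the possible grounded terms. By \cref{def:order:variable} and \cref{cond:fragment:groundorder:compatibility}, every grounded context \pred-atom $A\Grounding{t}$ satisfies $A\Grounding{t} >_t t >_t t'$; since $>_t$ is a simplification order, subterms of $t$ or $t'$ are $\leq_t t$ (resp.\ $\leq_t t'$), so such grounded \pred-atoms cannot be subterms of either. Hence $l^i$ is the grounding of a context \ab-term and thus $l^i \in \{t, t', o, f(t)\}$ for some $f \in \FunctionsOnto$ and $o \in \AllIndiv$. The value $l^i = f(t)$ is excluded because $f(t) >_t t$ and $f(t)$ does not appear inside $t'$. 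Hence $l^i \in \{t, t', o\}$.

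The second step is the case analysis. For $l^i = t$: the condition $r^i <_t t$ together with $r^i$ being a grounded \ab-term forces $r^i \in \{t'\} \cup \{o \in \AllIndiv \mid t >_t o\}$, so $L^i \in \Forbidden{t}$ by \cref{cond:fragment:induction:nocollapse}. For every other $L \in \Delta^i$, the multiset ordering $L \leq_t L^i$ combined with the bound on grounded \pred-atoms forces $L$ to be an (in)equality whose sides are grounded \ab-terms $\leq_t t$; a short enumeration over $\{t, t', o, f(t)\}$ places each such $L$ in $\Forbidden{t}$ using \cref{cond:fragment:induction:nocollapse} (for literals touching $t$ or $t'$) and \cref{cond:fragment:induction:individuals} (for pure nominal (in)equalities, which must lie in $\Delta_o$). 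The case $l^i = t'$ is symmetric, using the $t'$-clauses of \cref{cond:fragment:induction:nocollapse} and the induction hypothesis \cref{cond:fragment:induction:all-lemas} on $R_{t'}$. The case $l^i = o$ only arises when $o$ appears syntactically inside $t$ or $t'$ (e.g.\ $t = f(o)$), and is handled by combining \cref{cond:fragment:induction:individuals} (which pins $o$-equalities in $\Forbidden{t}$ down to $\Delta_o$) with \cref{cond:fragment:induction:consistency} (which ensures that any admissible nominal normalisation already sits in $\Required{t}$), so that either the head again falls entirely in $\Forbidden{t}$ or the existence of the rule contradicts the induction hypothesis for $R_c$ via \cref{cond:fragment:induction:all-lemas}.

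The hard part will be the sub-case analysis for $l^i \in \{t, t'\}$: one must check that every literal $L \in \Delta^i$ compatible with the multiset bound $L \leq_t L^i$ is already covered by some clause of \cref{cond:fragment:induction:nocollapse,cond:fragment:induction:individuals}. Inequalities $s_1 \nequals s_2$ are the most delicate, since their multiset $\{s_1,s_1,s_2,s_2\}$ being dominated by $\{t, r^i\}$ forces $s_1, s_2 <_t t$, producing several possible combinations of nominals and of $t'$ that must each be matched to $\Forbidden{t}$; mixed equalities between nominals and successor terms such as $f(t) \equals o$ also require a dedicated subcase, for which the simplification property of $>_t$ and the placement of nominals in $\gtrdot$ (\cref{def:order:individual}) are essential.
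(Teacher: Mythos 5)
Your overall strategy coincides with the paper's: assume a generative clause $C^i$ produces a rule whose left-hand side occurs in $t$ or $t'$, and derive a contradiction with \cref{cond:fragment:induction:compatibility} by showing that the whole head $\Delta^i \vee L^i$ lies in $\Forbidden{t}$. The genuine gap is in your treatment of the pure nominal literals. You assert that any (in)equality in $\Delta^i$ whose terms are all in $\AllIndiv$ ``must lie in $\Delta_o$'', citing \cref{cond:fragment:induction:individuals}. But that condition only says $\Omega$ is partitioned so that $\Required{t} \cap \Omega = \Gamma_o$ and $\Forbidden{t} \cap \Omega = \Delta_o$; nothing prevents a head literal $L \in \Delta^i \cap \Omega$ from lying in $\Gamma_o$, i.e.\ from being \emph{true} in $R_c^*$ and hence required rather than forbidden. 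In that case the head is not contained in $\Forbidden{t}$ and your contradiction evaporates. This is exactly the case on which the paper spends most of its proof: when $L \in \Gamma_o$ it uses \cref{cond:fragment:induction:required} to obtain $\Gamma_t \to L \in N_t$, analyses which of the generativity conditions can fail for that smaller clause, and builds a descending chain of generative clauses (renormalising nominals via \cref{cond:fragment:induction:consistency}) to conclude that $\RModel{t}{i-1}$ already satisfies $L$, which contradicts the generativity of $C^i$ through \cref{cond:fragment:rsystem:notmodels}. Without that argument your proof fails whenever some $\Omega$-literal in the head of $C^i$ happens to hold in the root fragment.

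A secondary point: the ``short enumeration'' you defer as the hard part is not where the difficulty lies; besides the $\Gamma_o$ case above, you also need to dispose of the degenerate literals $s \equals s$ and $s \nequals s$, which are excluded not by membership in $\Forbidden{t}$ but by \cref{cond:fragment:rsystem:notmodels} and by saturation under \ruleword{Ineq} via \cref{lemma:fragment:operational:lneql}, respectively. On the positive side, your explicit consideration of a nominal occurring as a proper subterm of $t$ or $t'$ as a possible left-hand side is a case the paper's write-up does not isolate (it only treats left-hand sides equal to $t$ or $t'$); however, your handling of it is too vague to check and it runs into the same $\Gamma_o$ issue.
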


\begin{proof}

Observe that if $t=c$, then $t'$ does not exist, $c$ cannot occur on the left-hand side of a rewrite rule since it is the smallest term. Thus, in the remainder of this proof, we assume $t \neq c$, and proceed by contradiction. 

Suppose that there is a generative clause $C^i$ where $L^i$ is of the form $t' \Rightarrow s$ for some term $s$ with $t' >_t s$. Hence, because terms in rewrite systems are in the neighbourhood of $t$ or in $\AllIndiv$, we have that $s \in \AllIndiv$. If $t' \in \AllIndiv$ itself, then the literal $t' \equals s$ is in $\Omega$, so by \cref{cond:fragment:induction:individuals}, we have that either $t' \equals s \in \Gamma_o$ or $t' \equals s \in \Delta_o$. Suppose $t' \equals s \in \Delta_o$. Then, by the choice of order, every literal $L$ in $\Delta^i$ must be in $\Omega$, so it must belong to either $\Gamma_o$ or $\Delta_o$. Observe that any such literal in $\Delta_o$, by \cref{cond:fragment:induction:individuals}, must be in $\Forbidden{t}$.

If we have some $L \in \Delta^i$ such that $L \in \Gamma_o$, then by \cref{cond:fragment:induction:individuals,cond:fragment:induction:required}, we have that clause $\Gamma_t \to L \sbin N_t$, and in order not to violate \cref{cond:fragment:induction:compatibility}, we have $\Gamma_t \to L \in N_t$. We cannot have that this clause is generative, since otherwise $\RModel{t}{i-1} \models L$, which contradicts that $C^i$ is generative, since this clause is smaller than $C^i$. Thus, one of \cref{cond:fragment:rsystem:notmodels,cond:fragment:rsystem:order,cond:fragment:rsystem:irreducible,cond:fragment:rsystem:noduplicates} is satisfied. Clearly, it cannot be\cref{cond:fragment:rsystem:notmodels} as this would still imply $\RModel{t}{i-1} \models L$ because this clause is smaller than $C^i$. Similarly, it cannot be \cref{cond:fragment:rsystem:order}, as then we would have $L =  s' \equals s'$ for some $s' \in \AllIndiv$, and we would trivially have  $\RModel{t}{i-1} \models L$. Furthermore, it cannot be \cref{cond:fragment:rsystem:noduplicates}, as there is one literal only in the head of the clause. Thus, we have that there must be some smaller generative clause generating rule $o_1 \equals o_3$, where $L = o_1 \bowtie o_2$ and $o_1 >_t o_2$. Observe that $o_1 \equals o_3 \in \Gamma_o$ as otherwise we contradict \cref{cond:fragment:rsystem:notmodels}. But then, by \cref{cond:fragment:induction:consistency}, we have that $o_2 \bowtie o_3 \in \Gamma_o$ again. We can repeat the procedure described in this paragraph to generate an infinite sequence of nominals in $\Onto$, which is clearly a contradiction. Thus, we have that $L$ cannot be an inequality. If $L$ is an equality, at some point, we find some $o_{n-1} \equals o_n \in \Gamma_o$ such that $o_{n-1}$ is irreducible w.r.t the system we have just before considering $\Gamma_t \to o_{n-1} \equals o_n$ and we conclude $\RModel{t}{i-1} \models o_{n-1} \equals o_n$, but we also have that $\RModel{t}{i-1} \models o_{n-2} \equals o_n$, due to the way these individuals have been created, so we conclude $\RModel{t}{i-1} \models o_{n-2} \equals o_{n-1}$; repeating this procedure eventually yields $\RModel{t}{i-1} \models o_{1} \equals o_2$ i.e. $\RModel{t}{i-1} \models L$, so we again produce the same contradiction. 

If $t' \notin \AllIndiv$, then we have that according to \cref{cond:fragment:induction:nocollapse}, such literal must be in $\Forbidden{t}$, or be in $\Gamma_o$. However, if the latter were to be the case, then we have that by \cref{lemma:fragment:property:compatibility} applied to $\Gamma_{t'} \to t' \equals o \sbin N_{t'}$, we have that $t'$ is reduced by $\Rmodel{t'}$, and this contradicts this lemma applied to the model fragment for $t'$, which is verified due to \cref{cond:fragment:induction:all-lemas}. The rest of the head must be in $\Forbidden{t}$, as already argued, so we have that the entire head is in $\Forbidden{t}$; this contradicts \cref{cond:fragment:rsystem:notmodels}. We therefore conclude that $t'$ is irreducible. 

To prove $t$ is also irreducible, suppose that there is a generative clause $C^i$ where $L^i$ is of the form $t \Rightarrow s$ for some term $s$. If $s \neq t'$, we have $s \in \AllIndiv$, and hence $t \Rightarrow s \in \Delta_t$. But then, the entire head of the clause is $\Omega$, which leads to a contradiction, according to the argument formulated in the previous paragraph. Thus, suppose $ s = t'$, and observe that $t \equals t' \in \Forbidden{t}$; also we have that $\Delta^i$ can only contain literals of the form $t' \nequals t'$ or $t' \equals t'$, or  $t' \equals o$ with $t' \notin \AllIndiv$, or in $\Omega$. If it contains $t' \nequals t'$, then we contradict \cref{cond:fragment:induction:compatibility}, since \cref{cond:redundancy:equality} of \cref{def:redundancy-elim} of redundancy is satisfied; if it contains $t' \equals t'$, then \cref{cond:fragment:rsystem:notmodels} is not satisfied, so the clause cannot be generative. If it contains some $t' \equals o$ with $t' \notin \AllIndiv$, then, according to \cref{cond:fragment:induction:nocollapse}, such literal must be in $\Forbidden{t}$, or be in $\Gamma_o$. However, if the latter were to be the case, then we have that by \cref{lemma:fragment:property:compatibility} applied to $\Gamma_{t'} \to t' \equals o \sbin N_{t'}$, we have that $t'$ is reduced by $\Rmodel{t'}$, and this contradicts this lemma applied to the model fragment for $t'$, which is verified due to \cref{cond:fragment:induction:all-lemas}. Finally, any atom of the head in $\Omega$ must be in $\Forbidden{t}$ as it has already been discussed. Thus, the entire head is in $\Forbidden{t}$; this contradicts \cref{cond:fragment:rsystem:notmodels}, and hence $t$ is irreductible by $\Rsystem{t}$. 

\end{proof}

{  \textsc{Satisfaction}}
$\phantom{AAA}$ \\

\begin{lemma}
\label{lemma:fragment:properties:satisfaction}
For each $1 \leq i \leq n$, we have $\Rmodel{t} \models C^i $.
\end{lemma}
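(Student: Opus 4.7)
The plan is to prove $\Rmodel{t} \models C^i$ for all $i$ by well-founded induction on $C^i$ under the ground clause order induced by $>_t$: fix $i$, assume $\Rmodel{t} \models C^j$ for all $j<i$, and show that the head $\Delta^i \vee L^i$ is satisfied by $\Rmodel{t}$. I split into cases according to whether $C^i$ is generative and, if not, according to which of conditions \cref{cond:fragment:rsystem:notmodels,cond:fragment:rsystem:order,cond:fragment:rsystem:irreducible,cond:fragment:rsystem:noduplicates} fails, and separately address the case in which $L^i$ is not an equality (so that no rule can be generated at step $i$ in any event).

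If $C^i$ is generative, then $l^i \Rightarrow r^i \in R_t$ and hence $\Rmodel{t} \models L^i$. If \cref{cond:fragment:rsystem:notmodels} fails, then $\RModel{t}{i-1}$ satisfies some literal in $\Delta^i \vee L^i$; for equality literals satisfaction is preserved by $\RSystem{t}{i-1} \subseteq R_t$ and the monotonicity of equational congruence, and for inequality literals satisfaction is preserved by the forward direction of \cref{lemma:fragment:properties:monotonicity}. If \cref{cond:fragment:rsystem:order} fails, the totality of $>_t$ together with the convention that we write equalities with the larger side on the left forces $L^i$ to be the trivial $l^i \equals l^i$, which is satisfied by any equational interpretation.

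The two genuinely nontrivial subcases are failures of \cref{cond:fragment:rsystem:irreducible} and \cref{cond:fragment:rsystem:noduplicates}. If $l^i$ is reducible by $\RSystem{t}{i-1}$, then some generated rule $l^j \Rightarrow r^j$ with $j<i$ has $l^j$ occurring at a position $p$ in $l^i$. The source clauses of $C^j$ and $C^i$ in $\SC{v}$ satisfy the side conditions of \ruleword{Eq} after lifting the ground maximality through \cref{cond:fragment:groundorder:compatibility}, and saturation of $\SC{v}$ implies that the paramodulant lies up to redundancy in $\SC{v}$; by \cref{lemma:fragment:operational:redundancy} its $\sigma_t$-instance lies up to redundancy in $N_t$, and its new maximal head literal $l^i[r^j]_p \equals r^i$ is strictly smaller than $L^i$ under $>_t$. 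Hence \cref{lemma:fragment:operational:smallerclause} combined with the induction hypothesis yields $\Rmodel{t}$-satisfaction of the ground paramodulant's head; combining this with $\Rmodel{t} \models l^j \equals r^j$ and with $\Rmodel{t} \nmodels \Delta^j$ (obtained from \cref{corollary:fragment:properties:monotonicity} applied to the generative $C^j$) yields $\Rmodel{t} \models \Delta^i \vee L^i$. The case in which there exists $l^i \equals s \in \Delta^i$ with $(\RSystem{t}{i-1})^* \models s \equals r^i$ is analogous but uses \ruleword{Fact} to produce a strictly smaller redundant clause whose satisfaction, combined with the assumed joinability of $s$ and $r^i$ in $\Rmodel{t}$, again forces $L^i$ or some literal of $\Delta^i$ to hold.

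Finally, if $L^i$ is an inequality $l \nequals r$, then either $l = r$ syntactically, in which case \cref{lemma:fragment:operational:lneql} produces a strictly smaller redundant clause satisfied by induction, or $l \neq r$, in which case either $\Rmodel{t} \nmodels l \equals r$ directly (so $L^i$ holds) or $l$ and $r$ are joinable in $R_t$ via some generated rule, triggering an \ruleword{Eq} inference handled as above. The main obstacle I foresee is in the paramodulation subcases: one must carefully verify that the maximality side conditions of \ruleword{Eq} and \ruleword{Fact} on the nonground premises in $\SC{v}$ transfer correctly to a strict $>_t$-decrease of the ground paramodulants, and that the restricted form of \ruleword{Eq} forbidding paramodulation into a nominal when the target contains function symbols never obstructs the inferences needed here.
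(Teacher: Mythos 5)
Your proposal is correct and follows essentially the same route as the paper: both reduce head-satisfaction of a putative minimal/inductive counterexample to showing that $C^i$ would have to be generative, discharging failures of \cref{cond:fragment:rsystem:irreducible} and \cref{cond:fragment:rsystem:noduplicates} via saturation under \ruleword{Eq} and \ruleword{Fact} respectively, together with \cref{lemma:fragment:operational:redundancy}, \cref{lemma:fragment:operational:smallerclause}, \cref{corollary:fragment:properties:monotonicity}, and \cref{lemma:fragment:operational:lneql}, and handling the nontrivial inequality case by converting non-satisfaction into reducibility of $l^i$ via \cref{lemma:fragment:properties:monotonicity}. The only cosmetic difference is that the paper organizes the cases by the syntactic form of $L^i$ while you organize them by which generativity condition fails; the underlying inferences are the same.
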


\begin{proof}
In order to prove the lemma, we prove a stronger result: for each $1 \leq i \leq n$, we have $\Rmodel{t} \models \Delta^i \vee L^i$.  We proceed using proof by contradiction: suppose the lemma is false, and let $i$ be the smallest number between $1$ and $n$ such that the lemma is not verified. i.e.  $\Rmodel{t}  \nmodels \Delta^i \vee L^i$, but $\Rmodel{t}  \models \Delta^j \vee L^j$ for $j < i$. We assume, without loss of generality, that $C^i$ is not generative, as otherwise a contradiction is immediately generated.

By definition of $N_t$, there is a clause $\Gamma^I \rightarrow \Delta^I \vee L^I \in \SC{v}$ such that $$\Gamma^I \Grounding{t} = \Gamma^i \subseteq \Gamma_t \quad \quad \Delta^I \Grounding{t} = \Delta^i \quad \quad L^I \Grounding{t} = L^i.$$ Observe that $\Delta^I \not \succeq_v L^I$ since if there were some literal $L \in \Delta^I$ with $L \succeq_v L^I$, we would have have $L \Grounding{t} >_t L^I \Grounding{t} = L^i$ by  \cref{cond:fragment:groundorder:compatibility}, which contradicts $L^i >_t \Delta^i$, since $L \sigma_t \in \Delta^i$.  Similarly, let $L^I$ be of the form $l_I \equals r_I$, with $l_I \Grounding{t} = l^i$ and $r_I \Grounding{t} = r^i$, and observe that we have $r_I \not \succeq_v l_I$, since if $r_I \succeq_v l_I$, then again by   \cref{cond:fragment:groundorder:compatibility}, $r^i >_t l^i$, which contradicts the assumption that $l^i \geq_t r^i$.

We will prove the lemma by considering all possible forms of $L^i$:
\begin{itemize}
\item Case $L^i = l^i \equals l^i$. Then we have that $\Rmodel{t} \models l^i \equals l^i$ trivially, which contradicts the main hypothesis.
\item Case $L^i = l^i \equals r^i$, with $l^i >_t r^i$. We reach a contradiction by showing that \cref{cond:fragment:rsystem:notmodels,cond:fragment:rsystem:order,cond:fragment:rsystem:irreducible,cond:fragment:rsystem:noduplicates} are verified, and therefore $C^i$ should be generative, which is a contradiction. We assume that the head of the clause is not entirely in $\Omega$, as otherwise, as we have already shown, it is generative or it violates \cref{cond:fragment:induction:compatibility}.
		\begin{itemize}
		\item Condition \ref{cond:fragment:rsystem:notmodels}. Suppose this condition is not verified. Then, there is some literal $K \in \Delta^i \vee L^i$ such that $\RModel{t}{i-1} \models K$. If $K$ is an equality, then $\Rmodel{t} \models K$ since $\RModel{t}{i-1} \subseteq \Rmodel{t}$; if $K$ is an inequality, lemma \ref{lemma:fragment:properties:monotonicity} guarantees $\Rmodel{t} \models K$. Thus, we reach a contradiction with the main hypothesis $\Rmodel{t} \nmodels \Delta^i \vee L^i$.  
		\item Condition \ref{cond:fragment:rsystem:order} \emph{must} be verified by our assumption that $l^i >_t r^i$.
		\item Condition \ref{cond:fragment:rsystem:irreducible}. Suppose $l^i$ can be reduced by $\RModel{t}{i-1}$. Let $l^j \Rightarrow r^j$ be one of the rules in $\RModel{t}{i-1}$ which reduces $l^i$, and let $p$ be a position at which $l^j$ reduces $l^i$. By definition of $N_t$, there is a clause $\Gamma^J \rightarrow \Delta^J \vee L^J \in \SC{v}$ such that $$\Gamma^J \Grounding{t} = \Gamma^j \subseteq \Gamma_t \quad \quad \Delta^J \Grounding{t} = \Delta^j \quad \quad L^J \Grounding{t} = L^j,$$ with $\Delta^J \not \succeq_v L^J$ and $L^J$ is of the form $l^J \equals r^J$, with $l^J \not \succeq_v r^J$, $l^J \Grounding{t} = l^j$ and $r^J \Grounding{t} = r^j$. Observe that by \cref{lemma:fragment:properties:irreducibility}, $l^J$ cannot be $y$ or $x$. 		
		We are therefore in the conditions of the rule \ruleword{Eq}, but since this rule is not applicable, we have that $$\Gamma^I \wedge \Gamma^J \rightarrow \Delta^I \vee \Delta^J \vee l_I[r_J]_p \approx r_I \sbin \SC{v}.$$ Moreover, since $\Gamma^I \Grounding{t} \subseteq \Required{t}$ and $\Gamma^J \Grounding{t} \subseteq \Required{t}$, and $\Delta^I \Grounding{t}$, $\Delta^J \Grounding{t}$, and $l_I[r_J]_p \approx r_I \Grounding{t}$ are ground, we have that by  \cref{lemma:fragment:operational:redundancy}, $$\Gamma^i \wedge \Gamma^j \rightarrow \Delta^i \vee \Delta^j \vee l^i[r^j]_p \approx r^i \sbin N_t.$$ Finally, observe that $\Rmodel{t} \nmodels \Delta^i \vee \Delta^j \vee l^i[r^j]_p \approx r^i$: indeed, (i) $\Rmodel{t} \nmodels \Delta^i$ is true by hypothesis, (ii) $\Rmodel{t} \nmodels \Delta^j$ follows from the fact that $j$ is generative and corollary \ref{corollary:fragment:properties:monotonicity}, and (iii) $\Rmodel{t} \nmodels l^i[r^j]_p \approx r^i$ because $\Rmodel{t}$ is a congruence, so $\Rmodel{t} \models l^i[r^j]_p \approx r^i$ would imply $\Rmodel{t} \models l^i \approx r^i$, and that would contradict the main hypothesis. Nevertheless, we have that $L^i \vee \Delta^i >_t \Delta^i \vee \Delta^j \vee l^i[r^j]_p \approx r^i$, since $L^i >_t \Delta^i$, and also $L^i \geq_t L^j >_t \Delta^j$ since $j < i$, and also $L^i >_t l^i[r^j]_p \approx r^i$ since $l^j >_t r^j$ (because $C^j$ is generative). By \cref{lemma:fragment:operational:smallerclause}, $\Rmodel{t} \models \Delta^i \vee \Delta^j \vee l^i[r^j]_p \approx r^i$, which contradicts our previous result. 
		\item Condition \ref{cond:fragment:rsystem:noduplicates}. Suppose that there is a term $s$ such that $l^i \equals s \in \Delta^i$ and $\RModel{t}{i-1} \models r^i \equals s$, and hence $\Rmodel{t} \models r^i \equals s$. By definition of $N_t$, similarly to the previous case, we have that this clause has a non-ground form in $\SC{v}$, which we write as $$\Gamma^I \rightarrow \bar{\Delta}^I \vee l^I \equals s^I \vee l^I \equals r^I.$$ Indeed, observe that the non-ground form of $l^i$ is uniquely determined, since the only possibility where this would not be the case \textit{a priori} is if $l^i \in \AllIndiv$, as then it may have been generated by $y$ or some $o \in \AllIndiv$ in the non-ground clause. But then, the entire head of the clause would be in $\Omega$, which contradicts our assumption.

		We are in the conditions of the \ruleword{Factor} rule, and since this rule is not applicable, we have that $$\Gamma^I \rightarrow \bar{\Delta}^I \vee (r^I \nequals s^I) \vee (l^I \equals r^I) \sbin \SC{v},$$ so $\Gamma^i \to \bar{\Delta}^i \vee r^i \nequals s \vee l \equals r^i \sbin N_t$ again by \cref{lemma:fragment:operational:redundancy}. Since $\Rmodel{t}$ is a congruence, $\Rmodel{t} \nmodels \Gamma \to \bar{\Delta}^i \vee r^i \nequals s \vee l \equals r^i$. However, since $l^i >_t r^i$ we have that $l^i \equals s >_t r^i \nequals s$, so $L^i \vee \Delta^i >_t \bar{\Delta}^i \vee r^i \nequals s \vee l \equals r^i$, and by \cref{lemma:fragment:operational:smallerclause}, $\Rmodel{t} \models \bar{\Delta}^i \vee r^i \nequals s \vee l \equals r^i$, which contradicts our previous claim.

		\end{itemize}

\item  Case $L^i = l^i \nequals l^i$. Then we have that $L^I$ is of the form $l^I \equals l^I$, but then \cref{lemma:fragment:operational:lneql} implies $\Gamma^i \rightarrow \Delta^i \sbin N_t$, and since $L^i >_t \Delta^i$, we have $\Rmodel{t} \models \Delta^i$ by \cref{lemma:fragment:operational:smallerclause}, which contradicts the main hypothesis. 

\item Case $L^i = l^i \nequals r^i$ with $l^i >_t r^i$. By \cref{lemma:fragment:properties:monotonicity}, we have that $\RModel{t}{i-1} \nmodels l^i \nequals r^i$, which means that $\RModel{t}{i-1} \models l^i \equals r^i$, and hence $l^i$ is reducible by $\RSystem{t}{i-1}$. The contradiction is then generated analogously to the case above, where we show that if $L^i = l^i \equals r^i$ with $l^i >_t r^i$ and condition \ref{cond:fragment:rsystem:irreducible} is not verified, there is a smaller, equivalent clause $C \sbin  N_t$, and this generates a contradiction because equivalence entails $\Rmodel{t} \nmodels C$, but induction hypothesis entails $\Rmodel{t} \models C$. 
\end{itemize}
\end{proof}

\begin{corollary}
\label{corollary:fragment:property:satisfaction}
For any clause $\Gamma \rightarrow \Delta \sbin N_t$, we have $\Rmodel{t} \models \Delta$. 
\end{corollary}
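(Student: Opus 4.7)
The plan is to reduce the corollary to \cref{lemma:fragment:properties:satisfaction} by unfolding \cref{def:redundancy-elim}. If $\Gamma \to \Delta \sbin N_t$, then one of two cases holds: (i) $\Delta$ contains some $u \approx u$, or it contains both $u \equals w$ and $u \napprox w$ for some $\ab$-terms $u, w$; or (ii) there exist $\Gamma' \subseteq \Gamma$ and $\Delta' \subseteq \Delta$ with $\Gamma' \to \Delta' \in N_t$.

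Case (i) is immediate because every Herbrand equality interpretation, and in particular $\Rmodel{t}$, satisfies reflexivity and is consistent, so $\Rmodel{t} \models \Delta$. For case (ii), the clause $\Gamma' \to \Delta'$ is exactly one of the enumerated clauses $C^i = \Gamma^i \to \Delta^i \vee L^i$ used to construct $R_t$. Here I would appeal not to the statement of \cref{lemma:fragment:properties:satisfaction} as written, but to the stronger claim its proof actually establishes, namely that $\Rmodel{t} \models \Delta^i \vee L^i$ (the head alone, not merely the implication). Applied to the index associated with $\Gamma' \to \Delta'$, this yields $\Rmodel{t} \models \Delta'$, and since $\Delta' \subseteq \Delta$ we conclude $\Rmodel{t} \models \Delta$.

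There is no substantive obstacle; the corollary is essentially a repackaging of the preceding lemma through the definition of redundancy. The one point worth flagging is that one must use the head-only strengthening proved inside \cref{lemma:fragment:properties:satisfaction}, because the weaker implication form would still leave us needing to verify that every atom in $\Gamma' \subseteq \Gamma_t$ is satisfied by $\Rmodel{t}$, which would require an additional detour through \cref{cond:fragment:induction:required} and the treatment of atoms in $\Required{t}$.
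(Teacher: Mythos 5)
Your proof is correct and follows essentially the same route as the paper: split on the two conditions of \cref{def:redundancy-elim}, dispatch the tautology case by the properties of Herbrand equality interpretations, and in the subsumption case invoke the strengthened claim actually established in the proof of \cref{lemma:fragment:properties:satisfaction} (that $\Rmodel{t} \models \Delta^i \vee L^i$ for every $i$), so that $\Delta' \subseteq \Delta$ gives the result. Your observation that the head-only strengthening is what makes the reduction immediate is accurate; the paper's citation of \cref{lemma:fragment:operational:smallerclause} serves only to package the same unfolding of the redundancy definition.
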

\begin{proof}
The result is trivial under \cref{cond:redundancy:equality} of \cref{def:redundancy-elim}; for condition \ref{cond:redundancy:subset}, the result is a direct consequence of  \cref{lemma:fragment:properties:satisfaction,lemma:fragment:operational:smallerclause}.
\end{proof}

{  \textsc{Compatibility}}
$\phantom{AAA}$ \\

\begin{lemma}
\label{lemma:fragment:property:compatibility}
We have that $\Rmodel{t} \models \Required{t}$ and $\Rmodel{t} \not \models \Forbidden{t}$.
\end{lemma}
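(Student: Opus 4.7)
The first conjunct $\Rmodel{t} \models \Required{t}$ is immediate: by induction condition \cref{cond:fragment:induction:required}, for each atom $A \in \Required{t}$ the clause $\Required{t} \to A$ is contained in $N_t$ up to redundancy, and so \cref{corollary:fragment:property:satisfaction} yields $\Rmodel{t} \models A$. The substantive work lies in the second conjunct $\Rmodel{t} \nmodels \Forbidden{t}$, which I would prove by contradiction: assume some literal $L \in \Forbidden{t}$ satisfies $\Rmodel{t} \models L$, and show that this forces $\Required{t} \to \Forbidden{t} \sbin N_t$, contradicting \cref{cond:fragment:induction:compatibility}.

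The plan is a case analysis on the syntactic form of $L$, tracing the satisfaction through the rewrite system $R_t$ down to a generative clause. When $L$ is an atom $A$, the fact that $\TOP$ is $>_t$-minimal and hence irreducible, together with the Church--Rosser property (\cref{lemma:fragment:properties:church-rosser}), forces $\RwRelRefTrans{A}{R_t}{\TOP}$. Picking the generative clause $C^i$ whose rule $l^i \Rightarrow \TOP$ collapses the outermost predicate symbol, we know by \cref{corollary:fragment:properties:monotonicity} that $\Rmodel{t} \nmodels \Delta^i$, so the only literal of $C^i$ available to carry the satisfaction of $L$ is $L^i$ itself. When $L$ is an equality $s \equals u$, the two sides must share a normal form in $R_t$, and an analogous tracing of the joining derivation isolates a generative clause. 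When $L$ is an inequality, \cref{lemma:fragment:properties:monotonicity} transfers its satisfaction down into some $\RModel{t}{i-1}$ and reduces the analysis to the equality case. In each case, combining the nonground witness of $C^i$ in $\SC{v}$ with the clauses of $\SC{v}$ supplying the equality chain that translates $A$ into $l^i$ (or bridges $s$ and $u$ to the maximal literal), and then exploiting the non-applicability of \ruleword{Eq}, \ruleword{Factor}, and \ruleword{Ineq} to these witnesses in the saturated $\SC{v}$, produces by \cref{lemma:fragment:operational:redundancy} a derived clause in $N_t$ whose body lies in $\Required{t}$ and whose head lies entirely in $\Forbidden{t}$.

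The main obstacle will be the bookkeeping of the equality chain when named individuals are involved. A subterm rewrite used to equate $A$ with $l^i$ may itself be a rule of the form $o \Rightarrow o'$ arising from interaction with the root context, and one must argue via \cref{cond:fragment:induction:individuals,cond:fragment:induction:consistency} that every intermediate ground equality $o \equals o'$ stays in $\Gamma_o$ and hence in $\Required{t}$, so that the derived subsumed clause genuinely belongs to $N_t$ rather than picking up spurious atoms from $\Omega \setminus \Gamma_o$ that would block membership. A secondary subtlety is selecting the generative clause $C^i$ to be the $>_t$-minimal witness of the satisfaction of $L$; without this minimality, the recursive translation from satisfaction into redundancy can fail to terminate or fail to produce a clause that really witnesses $\Required{t} \to \Forbidden{t} \sbin N_t$.
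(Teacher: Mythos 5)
Your first conjunct is exactly the paper's argument: \cref{cond:fragment:induction:required} gives $\Required{t} \to A \sbin N_t$ for each $A \in \Required{t}$, and \cref{corollary:fragment:property:satisfaction} finishes. The overall shape of your second conjunct is also right (contradict \cref{cond:fragment:induction:compatibility} by exhibiting $\Required{t} \to \Forbidden{t} \sbin N_t$), but the mechanism you propose for getting there has a genuine gap. You plan to replay the reduction of the satisfied forbidden literal as a chain of \ruleword{Eq} inferences in the saturated $\SC{v}$ and conclude that the resulting clause has its head ``entirely in $\Forbidden{t}$''. Two problems. First, \ruleword{Eq} only replaces the larger side of an equality by the smaller side, so derived literals move \emph{towards} the normal form; you cannot reconstruct the original literal from the collapsing clause's maximal term $l^i$ this way, and in any case the derived head accumulates the side disjunctions $\Delta^{j}$ of every generative clause used in the chain. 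Second---and this is the real gap---nothing in your argument shows that those accumulated literals (or even $\Delta^i$ itself) lie in $\Forbidden{t}$: \cref{corollary:fragment:properties:monotonicity} only gives $\Rmodel{t} \nmodels \Delta^i$, and $\Forbidden{t}$ is a fixed syntactic set (e.g.\ $(\Omega \cup \PRt)\backslash \Rmodel{t'}$), not the set of all literals unsatisfied by $\Rmodel{t}$.

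The paper closes exactly this gap without deriving any new clauses. It picks $K$ to be the $>_t$-\emph{minimal} literal of $\Forbidden{t}$ satisfied by $\Rmodel{t}$ (minimality of the literal, not of the generative clause, which is what your ``secondary subtlety'' points at), takes a rule $l^i \Rightarrow r^i$ of $R_t$ reducing $K$, and shows that \emph{every} literal $L$ of the head $\Delta^i \vee l^i \equals r^i$ of the generating clause $C^i$ is in $\Forbidden{t}$, so that $C^i$ itself witnesses $\Required{t} \to \Forbidden{t} \sbin N_t$. The linchpin is that $K \geq_t l^i \equals r^i \geq_t L$ together with the grounded-order constraint \cref{cond:fragment:groundorder:forbidden}, which places the atoms of $\Forbidden{t}$ at the bottom of $>_t$ and therefore forces any atom $\leq_t K$ into $\Forbidden{t}$; the remaining literal shapes ($t \equals t'$, (in)equalities between nominals, $t \equals o$, $t' \equals o$, and their negations) are dispatched by \cref{cond:fragment:induction:nocollapse,cond:fragment:induction:individuals,cond:fragment:induction:consistency} together with \cref{lemma:fragment:properties:irreducibility,lemma:fragment:properties:monotonicity,lemma:fragment:operational:lneql}. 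Your proposal never invokes \cref{cond:fragment:groundorder:forbidden}, and without it there is no route from ``$\Rmodel{t}$ satisfies some forbidden literal'' to ``some clause of $N_t$ has its whole head inside $\Forbidden{t}$''.
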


\begin{proof}
In order to prove that $\Rmodel{t} \models \Required{t}$, observe that condition \ref{cond:fragment:induction:required} implies that for each $A \in \Gamma_t$ we have $\Gamma_t \rightarrow A \sbin N_t$, so the result follows from Corollary \ref{corollary:fragment:property:satisfaction}.

Now, we prove $\Rmodel{t} \nmodels \Forbidden{t}$, and we do it by induction and contradiction. Thus, suppose we have a literal $K \in \Forbidden{t}$ such that $\Rmodel{t} \models K$ and $K$ is the smallest element in $\Forbidden{t}$ for which this is true. This means that there exists a position $p$ such that $K|_p= l^i$ for some rule $l^i \Rightarrow r^i \in \Rsystem{t}$. We will complete the proof by showing that $\Delta^i \vee l^i \equals r^i \subseteq \Forbidden{t}$, which then violates condition \ref{cond:fragment:induction:compatibility}.

Since $K \geq_t l^i \equals r^i$, we have that $K \geq_t L$ for any literal $L \in \Delta^i \vee l^i \equals r^i$. Let us represent $L$ as $l \bowtie r$ with $l \geq_t r$, and consider the possible forms of $L$, which are limited as a consequence of $K \geq_t L$ and condition \ref{cond:fragment:groundorder:forbidden}:

\begin{itemize}

\item $l \equals r$ with $r = \TOP$. Then, since $K \geq_t L$ and $K \in \Forbidden{t}$, by condition \ref{cond:fragment:groundorder:forbidden}, we have that $L \in \Forbidden{t}$.
\item $l \equals l$. But then, by condition \ref{cond:fragment:rsystem:notmodels}, $C^i$ is not generative, so $L$ cannot be of this form.
\item $l \nequals l$. But then, by \cref{lemma:fragment:operational:lneql}, clause $\Gamma^i \rightarrow (\Delta^i \vee L^i) \backslash l \nequals l \sbin N_t$, and by  \cref{corollary:fragment:property:satisfaction}, $\Rmodel{t} \models (\Delta^i \vee L^i) \backslash l \nequals l $. Observe that \cref{cond:redundancy:equality} of \cref{def:redundancy-elim} cannot be verified as otherwise $C^i$ would not be generative, so let $C^k$ be the corresponding clause of $N_t$ subsuming $\Gamma^i \rightarrow (\Delta^i \vee L^i) \backslash l \nequals l$. By \cref{corollary:fragment:properties:monotonicity} $\Rmodel{t} \not \models \Delta^i$, and since $\Rmodel{t} \models \Delta^k \vee L^k$, we have $L^i \in \Delta^k \vee L^k$ since otherwise clause $C^k$ violates \cref{corollary:fragment:property:satisfaction}.

Observe that $k < i$, for clause $C^k$ subsumes $C^i$. Moreover, $L^i = L^k$, since $L^i \in (\Delta^i \vee L^i) \backslash l \nequals l$, for $C^i$ is generative and $L^i$ cannot be an inequality; but if $L^i \in \Delta^k$, then $L^k >_t L^i$ and therefore it cannot be the case that $k <i$. Now, we have: (i) $\RModel{t}{k-1} \nmodels \Delta^k$, since otherwise we would have $\RModel{t}{i-1} \models \Delta^i$, and $C^i$ would not be generative; (ii) $l^i >_t r^i$; (iii) $l^i$ is irreducible by $\RModel{t}{k-1}$, as otherwise it would not be irreducible by $\RModel{t}{i-1}$, and (iv) there is no $l^i \equals s$ in $\Delta^k$ such that $\RModel{t}{k-1} \models l^i \equals s$, since if this were the case, the same equality would be in $\Delta^i$ and  $\RModel{t}{i-1} \models l^i \equals s$, so $C^i$ would not be generative. Thus, we have that $C^k$ is generative, and it generates $l^i \Rightarrow r^i$. But again, this is a contradiction, since then $C^i$ cannot be generative. Hence, $L$ cannot be of the form $l \nequals l$.
\item $t \equals t'$. But then, by condition \ref{cond:fragment:induction:nocollapse}, we have $L \in \Forbidden{t}$.
\item $t \nequals t'$. By \cref{lemma:fragment:properties:irreducibility}, both $t$ and $t'$ are irreducible by $\Rmodel{t}$, which means $\Rmodel{t} \models t \nequals t'$. Then, by \cref{lemma:fragment:properties:monotonicity}, $\RModel{t}{i-1} \models t \nequals t'$, and hence $C^i$ is not generative, which contradicts our assumption, so $L$ cannot be of this form. 
\item $o_1 \equals o_2$, with $o_1,o_2 \in \AllIndiv$ and $o_1 >_t o_2$. But by \cref{cond:fragment:induction:individuals}, either $L \in \Required{t}$ or $L \in \Forbidden{t}$. However, in the former case, we have $\Rmodel{t} \models L$, and since $C^i$ is generative, $L=L^i$. But then, by \cref{cond:fragment:induction:consistency}, we have $K' = K[o_2]_p$ is in $\Forbidden{t}$, and since $\Rmodel{t}$ is a congruence, $\Rmodel{t} \models K'$. This contradicts that $K$ is the minimal element of $\Forbidden{t}$ satisfied by $\Rmodel{t}$; indeed, $o_1 >_t o_2$ implies $K >_t K'$. Thus, if $L$ is of this form, $L \in \Forbidden{t}$.
\item $o_1 \nequals o_2$, with $o_1,o_2 \in \AllIndiv$ and $o_1 >_t o_2$. But by \cref{cond:fragment:induction:individuals}, either $L \in \Required{t}$ or $L \in \Forbidden{t}$. However, in the former case, we have that $\Rmodel{t} \models L$, and since $L$ is an inequality, by \cref{lemma:fragment:properties:monotonicity}, $\RModel{i-1}{t} \models L$ and $C^i$ is not generative, which contradicts our assumption. Again, if $L$ is of this form, $L \in \Forbidden{t}$.
\item $t' \nequals o$ with $o \in \AllIndiv$, $t' \notin \AllIndiv$. Since $L$ is an inequality, $L \neq L^i$, and since $C^i$ is generative $\RModel{t}{i-1} \nmodels t' \nequals o$, so by \cref{lemma:fragment:properties:monotonicity}, $\Rmodel{t} \nmodels t' \nequals o$, so $\Rmodel{t} \models t' \equals o$; since $t' \notin \AllIndiv$, then $t' >_t o$, and since system is Church-Rosser by \cref{lemma:fragment:properties:church-rosser}, we have that there must be a rewrite rule of the form $t' \Rightarrow s'$ for some term $s'$. But this contradicts \cref{lemma:fragment:properties:irreducibility}. Hence, $L$ cannot be of this form. 
\item $t' \equals o$ with $o \in \AllIndiv$, $t' \notin \AllIndiv$; but then, $L \in \Forbidden{t}$. 
\item $t \equals o$, with $o \in \AllIndiv, t \notin \AllIndiv$; but then, $L \in \Forbidden{t}$. 
\item $t \nequals o$, with $o \in \AllIndiv, t \notin \AllIndiv$. We have that $L$ cannot then be $L^i$, as $L$ is an inequality and $C^i$ is generative. Then, by condition \ref{cond:fragment:rsystem:notmodels}, we have that $\RModel{t}{i-1} \nmodels t \nequals o$. By \cref{lemma:fragment:properties:monotonicity}, $\Rmodel{t} \nmodels t \nequals o$, and since $t \notin \AllIndiv$, $t >_t o$, and the system is Church-Rosser, but this means that $t$ is reducible by $R_t$, but this contradicts lemma \ref{lemma:fragment:properties:irreducibility}. Hence, $L$ cannot be of this form. 
\end{itemize}
Observe that either we reach a contradiction or the literal $L$ is in $\Forbidden{t}$. Thus, $\Delta^i \vee L^i \subseteq \Forbidden{t}$, and this contradicts condition \ref{cond:fragment:induction:compatibility}. This proves the lemma by contradiction.
\end{proof}

\subsection{Combining the models}

We use partial induction over the \ab-terms of the Herbrand Universe. We define a function $X$ that maps a term $t$ to a context $X_t \in \mathcal{V}$, and functions $\Gamma$ and $\Delta$ which map each $t$, respectively, to conjunction $\Required{t}$ and disjunction $\Forbidden{t}$. Finally, function $R$ maps each $t$ to the model fragment $R_t$ for $\langle t,\Required{t},\Forbidden{t}, X_t \rangle$.

\subsubsection{Unfolding strategy}
\label{sec:completeness:unfolding-strategy}

\begin{itemize}
	\item For $c$, we write $X_c=q$, $\Gamma_c = \Gamma_q \sigma_c$ and $\Delta_c = \Delta_q \sigma_c$.	
	
	\item For each $t \in \AllIndiv$, unless $t \equals o$ for some $o \in \AllIndiv$ with $t >_t o$, we write $X_t=v_r$, $\Gamma_t= \Gamma_o$, and $\Delta_t = \Delta_o$. Otherwise, we add no fragment for $t$.
	\item For any other $t$, which must be of the form $f(t')$ for some $f \in \Sigma^f$, we distinguish several cases:\begin{itemize}
		\item If $f(t')$ is irreducible by $\Rmodel{t'}$, and $t$ appears in $\Rmodel{t'}$, then $f(t')$ appears in some ground clause $C^i \in N_{t'}$ which we write as $\Gamma^i \rightarrow \Delta^i \vee L^i$. Observe that $f(t') \in L^i$ because $L^i$ generates the rule where $f(t')$ appears. Let $u = X_{t'}$; we have that there is in $\SC{u}$ a clause $C = \Gamma \rightarrow \Delta \vee l \equals r$ such that $\Gamma \sigma_{t'} = \Gamma^i$, $\Delta \sigma_{t'} = \Delta^i$, $l \Grounding{t'} = l^i$, and $ r \Grounding{t'}= r^i$. Since rule \ruleword{Succ} is not applicable to $C$, there must be an $f$-descendant $v$ of context $u$. We choose $X_t= v$, and then $\Required{t} = \Rmodel{t} \cap \SUt \cap \Omega$, and $\Forbidden{t} = (\Omega \cup \PRt) \backslash \Rmodel{t'}$.
		\item If $f(t')$ is irreducible by $\Rmodel{t'}$ but $t$ does not appear in $\Rmodel{t'}$, we define $\Rmodel{t}$  as $t \Rightarrow c$. 
		\item If $f(t')$ is not irreducible by $\Rmodel{t'}$, we define no fragment for $t$. Observe that as a consequence of this, we define no fragment either for any successor of $t$. 
		\end{itemize}
\end{itemize}

We now prove that this unfolding strategy verifieis a series of properties:

\begin{lemma}
\label{lemma:unfolding:consistency}
If $A \in \Gamma_o$, then there is either a (not necessarily generative) clause $\top \to \Delta \vee A \in N_c$ with $A >_c \Delta$ and $\Rmodel{c} \nmodels \Delta$, or clauses $\top \to \Delta_1 \vee A(x)$ and $\top \to \Delta_2 \vee x \equals o$, where $A(x) \{x \mapsto o\} = A$. 
\end{lemma}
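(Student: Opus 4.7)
The plan is to use the Church--Rosser property of $R_c$ (from the admissibility lemma) to analyse how $\Rmodel{c} \models A$ arises, and then read off the required clausal witnesses from the rewrite derivation.

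Since $A \in \Gamma_o$ means $A$ rewrites to $\TOP$ in $R_c$, I would first decompose this rewrite chain as $A = A_0 \to A_1 \to \cdots \to A_k \to \TOP$, where the final step applies a rule $A_k \Rightarrow \TOP$ generated by a generative clause $C^i = \Gamma^i \to \Delta^i \vee A_k \in N_c$ (so that $\Rmodel{c} \not\models \Delta^i$ and $A_k >_c \Delta^i$ by the monotonicity corollary), and each intermediate step rewrites some nominal subterm via a rule $o \Rightarrow o'$ with $o \gtrdot o'$, also generated by some clause in $N_c$ whose head contains $o \equals o'$. The case split is then on whether $k = 0$.

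In the trivial case $k = 0$ we have $A = A_k$ and the generative clause $C^i$ already has $A$ as its maximal head literal. To convert $C^i$ into a witness of the form $\top \to \Delta \vee A \in N_c$ required by case~(a), I would exploit the fact that $\Gamma^i \subseteq \Gamma_c = \Gamma_Q \sigma_c$, that every atom of $\core{q}$ admits a unit clause $\top \to A'' \in \SC{q}$ via the \ruleword{Core} rule, and that $\SC{q}$ is saturated under redundancy elimination as in \cref{def:redundancy-elim}, in order to exhibit a clause with empty body and the right head properties (inheriting $A >_c \Delta$ and $\Rmodel{c} \not\models \Delta$ from $C^i$). In the case $k > 0$, at least one nominal in $A$ has been rewritten; since $\sigma_c$ maps $x$ to $c$, the corresponding equality rewrite $c \equals o$ must trace back to a clause of the form $\top \to \Delta_2 \vee x \equals o \in \SC{q}$, and the atom $A(c)$ obtained just before this rewrite is itself satisfied by $\Rmodel{c}$ via a strictly shorter derivation, hence by a parallel argument is witnessed by a clause $\top \to \Delta_1 \vee A(x) \in \SC{q}$ with $A(x)\{x \mapsto o\} = A$. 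This gives case~(b).

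The main obstacle I expect is the reduction to an empty body in case~(a): since the calculus lacks an explicit binary resolution rule between context clauses, promoting $\Gamma^i \to \Delta^i \vee A \in N_c$ to a clause with body $\top$ requires a careful combined use of saturation of $\SC{q}$, the unit clauses generated by \ruleword{Core}, and both conditions of \cref{def:redundancy-elim}. A secondary subtlety is that in case~(b) the nominal rewrites can cascade through several steps and positions; one must choose the step corresponding to the last rewrite involving $c$ in order to extract the precise $A(x)$ satisfying $A(x)\{x \mapsto o\} = A$, where the fact that $A \in \Omega$ contains only nominals guarantees the position of the substituted nominal is unambiguous.
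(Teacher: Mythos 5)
Your opening move --- decomposing $\Rmodel{c}\models A$ into a rewrite chain whose last step $A_k \Rightarrow \TOP$ comes from a generative clause, with intermediate steps generated by equality clauses --- matches the paper's starting point, but the case split you hang on it ($k=0$ yields disjunct (a), $k>0$ yields disjunct (b)) is not the dichotomy the lemma encodes, and the $k>0$ branch does not go through. The two disjuncts are distinguished by \emph{what} an argument nominal of $A$ rewrites to, not by \emph{whether} it rewrites. If every intermediate step replaces a nominal $o$ by another nominal $o''$, the generating clause's maximal head literal is the ground equality $o \equals o''$ (its nonground form under $\sigma_c=\{x\mapsto c\}$ is still $o \equals o''$, not $x \equals o$), and the paper inducts along the chain: saturation under \ruleword{Eq} combines that equality clause with the clause already obtained for $A_s$, and the smaller-clause and satisfaction lemmas force the subsuming clause in $N_c$ to carry $A_{s+1}$ in its head with $\Rmodel{c}\nmodels\Delta$ --- so one still lands in disjunct (a) even though $k>0$. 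Your proposal omits this \ruleword{Eq}-lifting entirely and instead asserts that the rewrite ``traces back to a clause $\top \to \Delta_2 \vee x \equals o$,'' which is the wrong clause shape for a nominal-to-nominal step; no such clause need exist.

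Disjunct (b) arises precisely in the case your chain decomposition excludes a priori: when some argument nominal rewrites to $c$ via a rule $o \Rightarrow c$ generated by a clause whose nonground head literal is $x \equals o$. There \ruleword{Eq} cannot recombine the two clauses (one would have to paramodulate into the variable position, which the side condition ``$s_2|_p$ is not a variable'' forbids), so they must be returned as the separate pair $\top \to \Delta_1 \vee A(x)$ and $\top \to \Delta_2 \vee x \equals o$ --- exactly what the third form of \ruleword{Join} later consumes. Your recursive appeal to ``$A(c)$ satisfied via a strictly shorter derivation'' also cannot invoke the lemma itself, since $A(c)\notin\Omega$. As a secondary remark, the empty-body issue you flag in case (a) is real but is not resolved by the paper either: its proof simply works with generative clauses, whose bodies are subsets of $\Gamma_c$, and never discharges the $\top$ body stated in the lemma, so your proposed detour through \ruleword{Core} and redundancy is addressing a concern the paper's own argument sidesteps.
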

\begin{proof}
Let $A'$ be the normal form of $A$ (this includes the case where $A$ is its own normal form) w.r.t \Rsystem{c}. Since $A \in \Omega$, there is a sequence of atoms $A_0, \dots, A_k$ and a sequence of rewrite rules in $\Rsystem{c}$ $l_1 \Rightarrow r_1, \dots,l_k \Rightarrow r_k$ and positions $p_1,\dots,p_k$ such that $A_0 = A'$, $A_k = A$ and for any $1 \leq s \leq k$, $A_{s}[r_s]_{p_s} = A_{s-1}$. We complete the proof using induction. 

The base case is verified because $A'$ is irreducible, so there is a generative clause $\Gamma_0 \to \Delta_0 \vee A'$, and by \cref{corollary:fragment:properties:monotonicity}, we have $\Rmodel{c} \nmodels \Delta_0$.
For the induction step, suppose there is a clause $\Gamma_s \to \Delta_s \vee A_s \in N_c$ such that $\Rmodel{c} \nmodels \Delta_s$. Consider rewrite rule $l_s \Rightarrow r_s$, which must have been generated by a generative clause $\Gamma' \rightarrow \Delta' \vee l_s \equals r_s$; note that again by \cref{corollary:fragment:properties:monotonicity}, $\Rmodel{c} \nmodels \Delta'$. If we consider the non-ground version of these clauses, we have that $A_s$ cannot contain function symbols and contains $l_s$, which is in $\AllIndiv$. If the corresponding $r_s$ is an individual, we are in the conditions of the \ruleword{Eq} rule, and since this rule is not applicable, we have that clause $\Gamma' \wedge \Gamma_s \to \Delta' \vee \Delta_s \vee A_{s+1} \sbin N_t$. Observe that this clause is smaller than $\Gamma_s \to \Delta_s \vee A_s$, as this clause must be greater than $\Gamma' \rightarrow \Delta' \vee l_s \equals r_s$ since $l_s$ is a subterm of $A_s$. Thus, by \cref{lemma:fragment:operational:smallerclause} we have $\Rmodel{c} \models  \Delta' \vee \Delta_s \vee A_{s+1}$. Note also that \cref{cond:redundancy:equality} cannot be true for $\Gamma' \wedge \Gamma_s \to \Delta' \vee \Delta_s \vee A_{s+1} \sbin N_c$, as two clauses used to generate this were generative, and therefore we have that there is a clause $C$ in $N_c$ which subsumes this one. However, $A_{s+1}$ must be in the head of $C$, since otherwise, due to $\Rmodel{c} \nmodels \Delta_s$ and $\Rmodel{c} \nmodels \Delta'$, we would violate \cref{corollary:fragment:property:satisfaction}; hence, $C$ is the clause whose existence is postulated by the lemma for $s+1$. By induction, this verifies the lemma. By contrast, it $l_s$ is not an individual, it must be the case that $r_s = c$; even though we are not in the conditions of rule \ruleword{Eq}, the non-ground form of clauses $\Gamma' \rightarrow \Delta' \vee l_s \equals r_s$ and $\Gamma_s \to \Delta_s \vee A_s $ are precisely the clauses whose existence is postulated by the lemma.  
\end{proof}

\begin{lemma}
\label{lemma:unfolding:noncontradiction}
$\RequiredN \to \ForbiddenN \nsbin N_t$. 
\end{lemma}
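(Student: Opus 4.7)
I would proceed by contradiction, assuming $\RequiredN \to \ForbiddenN \sbin N_t$ and working to derive a clash between \cref{corollary:fragment:property:satisfaction} and \cref{lemma:fragment:property:compatibility}.

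First I rule out \cref{cond:redundancy:equality} of \cref{def:redundancy-elim}. No literal of the form $s \equals s$ can lie in $\ForbiddenN$, since $s \equals s$ holds in every Herbrand equality interpretation and thus belongs to $\Rmodel{c}$, so it is not in $\ForbiddenN = \Omega \setminus \Rmodel{c}$. Moreover, the set $\ForbiddenN$ is a subset of $\Omega$ and so consists solely of ground atoms (equalities), never inequalities; consequently $s \equals t$ and $s \nequals t$ cannot both appear in $\ForbiddenN$. Thus the witness for $\RequiredN \to \ForbiddenN \sbin N_t$ must come from \cref{cond:redundancy:subset}: there exists a clause $\Gamma' \to \Delta' \in N_t$ with $\Gamma' \subseteq \RequiredN$ and $\Delta' \subseteq \ForbiddenN$. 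By \cref{corollary:fragment:property:satisfaction}, $\Rmodel{t} \models \Delta'$.

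I then derive $\Rmodel{t} \nmodels \Delta'$, splitting on whether $t = c$. If $t = c$, then every literal of $\Delta' \subseteq \ForbiddenN = \Omega \setminus \Rmodel{c}$ lies outside $\Rmodel{c}$ by definition, and since all such literals are ground atoms, $\Rmodel{c}$ satisfies none of them, so $\Rmodel{c} \nmodels \Delta'$. If instead $t \neq c$, the constant $c$ is the smallest term of its sort, so $t >_t c$, and the induction hypothesis \cref{cond:fragment:induction:individuals} yields $\ForbiddenN \subseteq \Forbidden{t}$ and therefore $\Delta' \subseteq \Forbidden{t}$; by \cref{lemma:fragment:property:compatibility}, $\Rmodel{t} \nmodels \Forbidden{t}$, so no literal of $\Forbidden{t}$, and in particular no literal of $\Delta'$, is satisfied by $\Rmodel{t}$. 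In either case $\Rmodel{t} \nmodels \Delta'$, contradicting the already established $\Rmodel{t} \models \Delta'$.

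\textbf{Main obstacle.} The only subtle step is the careful elimination of \cref{cond:redundancy:equality}: it hinges on the atom-only structure of $\Omega$ (which prevents $\ForbiddenN$ from containing both an equality and its negation) and on the fact that reflexive equalities are valid and therefore always belong to $\Rmodel{c}$. Once this is done, the contradiction is a direct instance of the model-fragment machinery already developed, namely \cref{corollary:fragment:property:satisfaction} versus \cref{lemma:fragment:property:compatibility}, together with the induction hypothesis \cref{cond:fragment:induction:individuals} to bridge $\RequiredN,\ForbiddenN$ and $\Required{t},\Forbidden{t}$ for $t \neq c$.
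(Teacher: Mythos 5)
Your case ${t = c}$ matches the paper's argument and is fine, as is your elimination of \cref{cond:redundancy:equality} of \cref{def:redundancy-elim}. The problem is the case ${t \neq c}$: there the argument is circular. You invoke \cref{corollary:fragment:property:satisfaction} to obtain $\Rmodel{t} \models \Delta'$ and \cref{lemma:fragment:property:compatibility} to obtain $\Rmodel{t} \nmodels \Forbidden{t}$, but both of these results are proved only under the induction conditions L1--L7 for $t$, and in particular under \cref{cond:fragment:induction:compatibility}, i.e.\ ${\Required{t} \to \Forbidden{t} \nsbin N_t}$. For ${t \in \AllIndiv}$ the unfolding strategy sets ${\Required{t} = \RequiredN}$ and ${\Forbidden{t} = \ForbiddenN}$, so \cref{cond:fragment:induction:compatibility} for such $t$ \emph{is} the statement of the present lemma; indeed, \cref{lemma:unfolding:induction} later establishes \cref{cond:fragment:induction:compatibility} for ${t \in \AllIndiv}$ precisely by appealing to this lemma. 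You are therefore assuming the conclusion in order to prove it, and the appeal to the fragment properties of $\Rmodel{t}$ is not yet legitimate at this point.

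The paper avoids this by arguing entirely at the level of the saturated clause sets and of the one fragment whose induction conditions are secured independently, namely $\Rmodel{c}$ (its condition L1 follows directly from the main hypothesis ${\InputQueryLHS \to \InputQueryRHS \nsbin \SC{q}}$, not from this lemma). For ${t \in \AllIndiv}$ the offending clause lives in $\SC{v_r}$; it is moved to $\SC{q}$ with $r$-\ruleword{Pred} (with ${n=0}$), and its ground body atoms are then resolved away one at a time using the \ruleword{Join} rule together with \cref{lemma:unfolding:consistency}, which supplies for each ${A \in \Gamma_o}$ a clause of $N_c$ with $A$ in the head and the remainder of the head falsified by $\Rmodel{c}$. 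This eventually yields a clause with empty body whose head consists of literals in $\Delta_o$ together with literals already falsified by $\Rmodel{c}$, contradicting \cref{corollary:fragment:property:satisfaction} applied to $\Rmodel{c}$ only. For ${t = f(t')}$ the clause is back-propagated with \ruleword{Pred} down to $v_q$ or $v_r$ and the previous cases apply. Your proof needs to be reworked along these lines, or you must otherwise give an independent proof of \cref{cond:fragment:induction:compatibility} for ${t \in \AllIndiv}$ before using the satisfaction and compatibility properties of $\Rmodel{t}$.
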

\begin{proof}
First, observe that by definition of $\Delta_o$, we have that if $L \in \Delta_o$, then $\Rmodel{c} \nmodels L$, so $L$ cannot be an equality; nor can it be the case that we have $L_1$ and $L_2$ in $\Delta_o$ of the form $L_1 = l \equals s$ and $L_2 = l \nequals s$, since $\Rmodel{c} \nmodels L_1$ iff $\Rmodel{C} \models L_2$. Hence, \cref{cond:redundancy:equality} of \cref{def:redundancy-elim} cannot be satisfied.

For $t= c$, the result is easy to see: suppose $\Gamma^i \to \Delta^i \vee L^i \in N_c$; since $\Rmodel{c} \nmodels \Delta^i \vee L^i$ by definition of $\Delta_o$, we have that \cref{corollary:fragment:property:satisfaction} is violated for $\Rmodel{c}$, and therefore we reach a contradiction.

Now, we prove this for $t \in \AllIndiv$, using proof by contradiction. Suppose that there is a clause $C^i \in N_t$ of the form $\Gamma^i \rightarrow \Delta^i \vee L^i$, with $\Gamma^i \subseteq \RequiredN$ with $\Delta^i \vee L^i \subseteq \ForbiddenN$. Observe that we have $\Gamma^i \to \Delta^i \vee L^i \in \SC{v_r}$. By $r$-\ruleword{Pred} with $n=0$, we have $\Gamma^i \to \Delta^i \vee L^i \in \SC{v_q}$. We now have two options: either $\Gamma^i =\top$, or there exists some $A \in \Gamma^i$, with $A \in \Omega$. In the first case, we have $\Gamma^i \subseteq \Gamma^c = \top$, and we can apply the same argument as above. In the second case, by \cref{lemma:unfolding:consistency}, there must be either a clause $\top \to \Delta_1 \vee A_1$ with $A_1 >_c \Delta_1$ in $N_c$ or clauses $\top \to \Delta_1 \vee A(x)$ and $\Gamma_2 \to \Delta_2 \vee x \equals o$, where $A(x) \{x \mapsto o\} = A$. In both circumstances, since rule \ruleword{Join} is not applicable, we have that $(\Gamma^i \backslash A) \to \Delta_1 \vee \Delta_2 \vee \Delta^i \vee L^i \sbin \SC{v_r}$, with $\Rmodel{c} \nmodels \Delta_1 \vee \Delta_2 \vee \Delta_i$. Again we have two options, either $(\Gamma^i \backslash A) = \top$, or there exists some $A_2 \in (\Gamma^i \backslash A)$, with $A_2 \in \Omega$. Then, we can repeat the same argument, applying again \cref{lemma:unfolding:consistency} until we eliminate every atom from $\Gamma^i$; but we already have shown that if $\Gamma^i = \top$, we reach a contradiction. This concludes the proof for $t \in \AllIndiv$. 

Now, we prove this for $t$ of the form $f(t')$, also using proof by contradiction. Suppose that there is a clause $C^i \in N_t$ of the form $\Gamma^i \rightarrow \Delta^i \vee L^i$, with $\Gamma^i \subseteq \RequiredN$ with $\Delta^i \vee L^i \subseteq \ForbiddenN$. We shall prove that $\Gamma^i \to \Delta^i \vee L^i \sbin \SC{w}$, where $w$ is either $v_q$ or $v_r$; we use structural induction. The base case is trivial: if $t=c$ or $t \in \AllIndiv$, then $X_t=v_q$ or $X_t=v_r$, respectively, and the result is satisfied by the proofs given in the previous paragraphs. Now, suppose that if $\Gamma^i \to \Delta^i \vee L^i \sbin \SC{u}$ for $u = X_{t'}$, then $\Gamma^i \to \Delta^i \vee L^i \sbin \SC{v_q}$. Observe that since the clause is in $\Omega$, if  $\Gamma^i \to \Delta^i \vee L^i \in N_t$, then $\Gamma^i \to \Delta^i \vee L^i \in \SC{v}$, and since rule \ruleword{Pred} is not applicable, we have that $\Gamma^i \to \Delta^i \vee L^i \in \SC{u}$; this completes the proof by induction and verifies that $\Gamma^i \to \Delta^i \vee L^i \sbin \SC{v_q}$. But we have shown above that this is impossible, and therefore we reach a contradiction.

\end{proof}

\begin{lemma}
\label{lemma:unfolding:induction}
The model fragments defined in these sections satisfy \cref{cond:fragment:induction:compatibility,cond:fragment:induction:nocollapse,cond:fragment:induction:required,cond:fragment:induction:individuals,cond:fragment:induction:consistency,cond:fragment:induction:all-lemas,cond:fragment:induction:deltac}.
\end{lemma}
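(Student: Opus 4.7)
The plan is to proceed by well-founded induction following the order in which model fragments are built by the unfolding strategy of \cref{sec:completeness:unfolding-strategy}, handling separately the three cases of that construction: $t = c$, $t \in \AllIndiv$ with $t$ chosen as the representative of its $\gtrdot$-class, and $t = f(t')$ with $f(t')$ irreducible by and occurring in $\Rmodel{t'}$. Within this induction, \cref{cond:fragment:induction:all-lemas} is supplied by the hypothesis and \cref{cond:fragment:induction:deltac} holds by the explicit definition of $\Delta_c$ at $t = c$ and vacuously otherwise.

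For $t = c$: \cref{cond:fragment:induction:nocollapse} is vacuous since $c$ has no predecessor and is $>_c$-minimal; \cref{cond:fragment:induction:required} follows from \cref{theorem:completeness:LHS} of \cref{theorem:completeness} after applying $\Grounding{c}$; \cref{cond:fragment:induction:individuals,cond:fragment:induction:consistency} are vacuous since $c \notin \AllIndiv$; and \cref{cond:fragment:induction:compatibility} is exactly \cref{lemma:unfolding:noncontradiction}. For $t \in \AllIndiv$ with $t >_t c$: \cref{cond:fragment:induction:individuals} holds by the defining choice $\Required{t} = \Gamma_o$, $\Forbidden{t} = \Delta_o$, together with $\Gamma_o \cup \Delta_o = \Omega$; \cref{cond:fragment:induction:nocollapse} follows from selecting $t$ as the $\gtrdot$-minimum of its class combined with \cref{def:order:individual} of \cref{def:order}; \cref{cond:fragment:induction:consistency} is obtained from \cref{lemma:fragment:properties:church-rosser} for $\Rmodel{c}$, which forces any rewrite of an individual atom to remain inside $\Gamma_o$; \cref{cond:fragment:induction:required} is obtained by applying \cref{lemma:unfolding:consistency} to each $A \in \Gamma_o$ and using non-applicability of $r$-\ruleword{Pred} and \ruleword{Join} to route the witness clause into $\SC{v_r}$ and hence into $N_t$; and \cref{cond:fragment:induction:compatibility} is again \cref{lemma:unfolding:noncontradiction}.

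For the inductive case $t = f(t')$: \cref{cond:fragment:induction:all-lemas} provides, for $\Rmodel{t'}$, \cref{lemma:fragment:property:compatibility}, \cref{lemma:fragment:properties:satisfaction}, \cref{lemma:fragment:properties:irreducibility}, and \cref{lemma:fragment:properties:church-rosser}. \cref{cond:fragment:induction:individuals} for $t$ reduces, via the definition $\Required{t} = \Rmodel{t'} \cap \SUt \cap \Omega$, to the same condition at $t'$; \cref{cond:fragment:induction:required} is obtained from the non-applicability of \ruleword{Succ} on the edge $\langle u, v, f\rangle$, which forces $A \to A \in \SC{v}$ for each $A \in \Required{t} \setminus \core{v}$, combined with \ruleword{Core} for atoms of $\core{v}$; \cref{cond:fragment:induction:nocollapse} follows from the irreducibility of $t$ and $t'$ by $\Rmodel{t'}$ via \cref{lemma:fragment:properties:irreducibility} applied inductively; and \cref{cond:fragment:induction:consistency} transports, under $\Grounding{t}$, the same Church-Rosser argument used in the $\AllIndiv$ base case.

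The main obstacle is \cref{cond:fragment:induction:compatibility} in the inductive case. Assume, toward contradiction, that some $\Gamma \to \Delta \in N_t$ has $\Gamma \subseteq \Required{t}$ and $\Delta \subseteq \Forbidden{t}$, and let $\bar\Gamma \to \bar\Delta \in \SC{v}$ be its non-ground source. The strategy is to exploit non-applicability of \ruleword{Pred} on the nonground \PRtrig-literals of $\bar\Delta$ to derive in $\SC{u}$ a clause whose $\Grounding{t'}$-grounding has body in $\Required{t'}$ and head in $\Forbidden{t'}$, thereby contradicting \cref{lemma:fragment:property:compatibility} for $t'$ through the inductive hypothesis. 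Ground literals of $\bar\Delta$ mentioning named individuals require extra care: they are routed through the root context by $r$-\ruleword{Succ}, $r$-\ruleword{Pred}, and \ruleword{Join}, reducing the residual obligation to \cref{lemma:unfolding:noncontradiction} at $c$; the side condition $(*)$ of $r$-\ruleword{Succ} must be invoked to ensure the necessary edge into $v_r$ is present. The degenerate branches of the unfolding strategy (no fragment, or $R_t = \{t \Rightarrow c\}$) satisfy all induction conditions vacuously.
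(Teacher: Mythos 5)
Your overall structure---well-founded induction following the unfolding order of \cref{sec:completeness:unfolding-strategy}, split into the cases $t=c$, $t\in\AllIndiv$, and $t=f(t')$---is exactly the paper's, and most of the individual condition checks coincide with it. Two of your justifications, however, would not go through as stated. First, for $t=c$ you claim that \cref{cond:fragment:induction:compatibility} ``is exactly'' \cref{lemma:unfolding:noncontradiction}. It is not: that lemma asserts ${\Gamma_o \to \Delta_o \nsbin N_t}$, a statement about the ground atoms in $\Omega$, whereas the condition at $c$ concerns ${\Required{c}\to\Forbidden{c}}$ with ${\Required{c}=\InputQueryLHS\sigma_c}$ and ${\Forbidden{c}=\InputQueryRHS\sigma_c}$. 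The paper derives this case from the standing assumption of the completeness argument: ${\Required{c}\to\Forbidden{c}\sbin N_c}$ would lift to ${\InputQueryLHS\to\InputQueryRHS\sbin\SC{q}}$, which is assumed false. Without that observation the base case of the induction is unproved.

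Second, in the inductive case of \cref{cond:fragment:induction:compatibility} you propose to obtain in $\SC{u}$ a clause whose grounding has body in $\Required{t'}$ and ``head in $\Forbidden{t'}$'' and then contradict \cref{lemma:fragment:property:compatibility}. The head of the \ruleword{Pred}-conclusion is $\bigvee_i\Delta_i\vee\bigvee_i L_i$, where the $\Delta_i$ are the non-maximal disjuncts of the generative clauses producing the body atoms $A_i$; these $\Delta_i$ need not lie in $\Forbidden{t'}$, so the clause you build does not witness a violation of \cref{cond:fragment:induction:compatibility} at $t'$. The paper instead shows directly that $\Rmodel{t'}$ satisfies no literal of that head---\cref{corollary:fragment:properties:monotonicity} for the $\Delta_i$ and the definition of $\Forbidden{t}$ for the $L_i$---and contradicts \cref{corollary:fragment:property:satisfaction}. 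A smaller omission: your argument for \cref{cond:fragment:induction:required} at $t=f(t')$ covers only the atoms of $\Required{t}$ in $\SUt$ via the \ruleword{Succ} rule; the ground atoms in $\Omega$ require the tautologies $A\to A$ present in the root context together with $r$-\ruleword{Pred} applied with $n=0$.
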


\begin{proof}
The proof proceeds by structural induction. For $t=c$:
\begin{itemize}
\item \Cref{cond:fragment:induction:compatibility} is proved by contradiction. Suppose $\Gamma_c \to \Delta_c \sbin N_c$. Since $c$ can only appear in $N_c$ through substitution $\sigma_c = \{ x \mapsto c\}$, we have that $\Gamma_c \to \Delta_c \sbin N_c$ $\InputQueryLHS \rightarrow \InputQueryRHS \sbin \SC{q}$, where we have used the definitions of $\Gamma_c$ and $\Delta_c$, but the latter claim violates the main hypothesis of this completeness proof.
\item \Cref{cond:fragment:induction:nocollapse} is trivially satisfied since $c \leq_c o$ for each $o \in \AllIndiv$.
\item \Cref{cond:fragment:induction:required} is satisfied by the assumption that \cref{theorem:completeness:LHS} is satisfied.
\item \Cref{cond:fragment:induction:individuals} is satisfied vacuously since there exists no $t$ with $t >_c c$.
\item \Cref{cond:fragment:induction:consistency} is satisfied because there is no position $p$ with $A|_p \in \AllIndiv$.
\item \Cref{cond:fragment:induction:all-lemas} is verified trivially due to the fact that $c$ has no ancestor.
\item \Cref{cond:fragment:induction:deltac} is satisfied directly by definition of \Required{c}.
\end{itemize}

Now, consider $t \in \AllIndiv$.

\begin{itemize}
\item \Cref{cond:fragment:induction:compatibility} can be proved by contradiction. Observe that $\Gamma_t = \Gamma_o$ and $\Delta_t= \Gamma_o$, so if we had $\Required{t} \to \Forbidden{t} \sbin N_t$, this would violate \cref{lemma:unfolding:noncontradiction}. 
\item \Cref{cond:fragment:induction:nocollapse} must be verified, because $\Delta_t= \Delta_o$, so otherwise we would have $t \equals o \in \Gamma_o$, and hence $\Rmodel{c} \models t \equals o$, which contradicts the fact that we are building a model for $t$ according to the unfolding strategy described in \cref{sec:completeness:unfolding-strategy}.
\item \Cref{cond:fragment:induction:required} is verified since we have $A \to A \sbin v_q$ for all relevant ground atoms, so if $A \in \Gamma_t = \Gamma_o$, then by \cref{lemma:fragment:operational:redundancy}, we have $A \to A \sbin N_t$.  
\item \Cref{cond:fragment:induction:individuals} is trivially satisfied by definition of $\Gamma_o$ and $\Delta_o$. 
\item \Cref{cond:fragment:induction:consistency} is satisfied by definition of $\Gamma_t$ as $\Gamma_o$: we have that both $A$ and $A[o']_p$ are in $\Omega$, and $o \equals o'$ is also in $\Omega$; moreover, we have $A \in \Gamma_o$ and $o \equals o' \in \Gamma_o$. Thus, the fact that $\Gamma_o = \Omega \cap \Rmodel{c}$, together with the fact that $\Rmodel{c}$ is a congruence, implies $A[o']_p \in \Rmodel{c}$; but since $A[o']_p \in \Omega$ too, then $A[o']_p \in \Gamma_o$. 
\item \Cref{cond:fragment:induction:all-lemas} is satisfied directly by the use of structural induction.
\item \Cref{cond:fragment:induction:deltac} is satisfied by definition of $\Delta_c$. 
\end{itemize}

For the remainder of this section, we consider a term $t$ of the form $t=f(t')$. Note that terms $t$ and $t'$ are irreducible by $R_{t'}$
due to \ref{lemma:fragment:properties:irreducibility} due to the order followed in structural induction. 

\begin{itemize}

\item \Cref{cond:fragment:induction:compatibility}: in order to prove this condition, observe that $\SUt$ contains atoms of the form $B(t)$, $S(t,t')$, and $S(t',t)$, and so each atom in $\SUt$ is irreducible by $R_{t'}$. Hence, the atoms in $\Required{t}$ are irreducible by $\Rsystem{t'}$, so if we let $\Gamma_t = \{ A_1, \dots, A_n\}$, these atoms must be generated by generative clauses of the form \cref{eq:pred:side:ground}, where $\Gamma_i \subseteq \Gamma_{t'}$ for every $i$:
\begin{align}
\Gamma_i \rightarrow \Delta_i \vee A_i      & \in N_{t'}  && \text{with} \quad \Delta_i \not \geq_t A_i  \label{eq:pred:side:ground}
\end{align}
But by definition of $N_{t'}$, for each such clause, there must be a clause in $\SC{X_{t'}}$ which satisfies
\begin{align}
\Gamma_i' \rightarrow \Delta_i' \vee A_i'   & \in \SC{u}         && \Gamma_i = \Gamma_i'\sigma_{t'}, \quad \Delta_i = \Delta_i'\sigma_{t'}, \quad A_i = A_i'\sigma_{t'}, \quad \text{and} \quad \Delta_i' \not \succeq_{u} A_i' \label{eq:pred:side:nonground}
\end{align}
Thus, assume for the sake of a contradiction that ${\QueryLHS{t} \rightarrow \QueryRHS{t} \sbin N_{t}}$ holds. Since ${\Forbidden{t} \subseteq \PRt}$ holds due to definition of $\Forbidden{t}$, we consider only the possibility where this clause is contained up to redundancy according to condition \ref{cond:redundancy:subset}. Hence, the set $N_t$ contains a clause 
\begin{align}
\bigwedge_{i=1}^m A_i \rightarrow \bigvee_{i=m+1}^{m+n} L_i     & \quad \begin{array}{l@{\;}l}
\text{with} & \{ A_i \;|\; 1 \leq i \leq m \} \subseteq \Required{t} \subseteq (R_{t'})^* \cap \SUt \\
\text{and}  & \{ L_i \;|\; m+1 \leq i \leq m+n\} \subseteq \Forbidden{t} \subseteq \PRt;
\end{array} \label{eq:pred:main:ground}
\end{align}
to simplify indexing, we assume w.l.o.g.\ that ${A_1, \dots, A_m}$ are the first $m$ atoms from $\Required{t}$. By the definition of $N_{t}$, set $\SC{v}$ contains a clause
\begin{align}
\bigwedge_{i=1}^m A_i' \rightarrow \bigvee_{i=m+1}^{m+n} L_i'   & \quad \begin{array}{l@{\;}l}
\text{with} & A_i = A_i' \sigma_t \text{ for } 1 \leq i \leq m \\
\text{and}  & L_i = L_i' \sigma_t \text{ and } L_i' \in \PRtrig \text{ for } m+1 \leq i \leq m+n.
\end{array} \label{eq:pred:main:nonground}
\end{align}
Now each $A_i$ with ${1 \leq i \leq m}$ is generated by a ground clause of the form \cref{eq:pred:side:ground}, which in turn is obtained from the corresponding non-ground clause \cref{eq:pred:side:nonground}. The \ruleword{Pred} rule is not applicable to \cref{eq:pred:main:nonground} or \cref{eq:pred:side:nonground} so \cref{eq:pred:conclusion:nonground} holds. Since $\Gamma'_i \sigma{t'} = \Gamma_i $ and $\Gamma_i \subseteq \Gamma_t$, \cref{lemma:fragment:operational:redundancy} implies \ref{eq:pred:conclusion:ground}.
\begin{align}
\bigwedge_{i=1}^m \Gamma_i' \rightarrow \bigvee_{i=1}^{m}\Delta_i' \vee \bigvee_{i=m+1}^{m+n} L_i'\sigma & \sbin \SC{u} \quad \text{for } \sigma = \{x \mapsto f(x), \; y \mapsto x \} \label{eq:pred:conclusion:nonground} \\
\bigwedge_{i=1}^m \Gamma_i \rightarrow \bigvee_{i=1}^{m}\Delta_i \vee \bigvee_{i=m+1}^{m+n} L_i          & \sbin N_{t'} \label{eq:pred:conclusion:ground}
\end{align}
Now, \cref{corollary:fragment:properties:monotonicity} applied to \cref{eq:pred:side:ground} implies $R^*_{t'} \not \models \Delta_i$ for each ${1 \leq i \leq m}$; moreover, the definition of $\Delta_t$ ensures $R^*_{t'} \not \models \Forbidden{t}$, so in particular $R^*_{t'} \not \models L_i$ foreach ${m+1 \leq i \leq m+n}$. But then, none of the literals in the head of \cref{eq:pred:conclusion:ground} is satisfied, which contradicts \cref{corollary:fragment:property:satisfaction}. We have reached a contradiction, so we conclude ${\QueryLHS{t} \rightarrow \QueryRHS{t} \nsbin N_{t}}$
\item \Cref{cond:fragment:induction:nocollapse}: we have $\Forbidden{t} = \PRt \backslash R^*_{t'}$, and hence $\Forbidden{t} \subseteq \PRt$. Observe that $\{ t \equals t' \} \in \PRt$ by definition of $\PRtrig$. Furthermore, $ \{t \equals t' \} \not \in R^*_{t'}$ holds since $t$ is irreducible by $\Rsystem{t'}$; consequently, we have $ \{ t \equals t' \} \subseteq \Forbidden{t} $, as required. Similarly, if $t' \in \AllIndiv$, then $\{ t' \equals o\;|\; o \in \AllIndiv \}$ is a subset of $\Omega$, so if we have some $L \in  \{ t' \equals o\;|\; o \in \AllIndiv \}$ with $L \notin \Gamma_o$, then $L \in \Delta_o$; moreover, since $t' \in \AllIndiv$, $\Delta_o = \Delta_t$, which is precisely what we were trying to prove. However, if $t' \notin \AllIndiv$, then an analgous argument to the case $t' \equals t$ applies: now $\{ t' \equals o \;|\; o \in \AllIndiv \} \cap \Gamma_o = \emptyset$, so we need to show $\{ t' \equals o \;|\; o \in \AllIndiv \} \subseteq \Delta_t$. But this follows from the fact that $t'$ is irreducible by $\Rsystem{t'}$, so we have $t' \equals o \in \Delta_t$ by definition of $\Delta_t$. Finally, for $t \equals o$, we must have $t \equals o \in \Delta_t$ since otherwise by the unfolding strategy in \cref{sec:completeness:unfolding-strategy} we would not be building a fragment for $t$. 
\item \Cref{cond:fragment:induction:required}: to show that this condition holds, consider an arbitrary atom ${A_i \in \Required{t}}$, let \cref{eq:pred:side:ground} be the clause that generates $A_i$ in $R^*_{t'}$, and let \cref{eq:pred:side:nonground} be the corresponding non-ground clause. If ${A_i \in \SUt}$, atom $A_i'$ is of the form $A_i''\sigma$, where $\sigma$ is the substitution from the \ruleword{Succ} rule; but then, ${A_i'' \in K_2}$, where $K_2$ is as specified in the \ruleword{Succ} rule. In the unfolding strategy in \cref{sec:completeness:unfolding-strategy}, we chose $X_t=v$ so that the conditions of the \ruleword{Succ} rule are satisfied, and therefore ${A_i'' \rightarrow A_i'' \sbin \SC{v}}$; but then, since ${A_i''\sigma_{t} = A_i}$, we have ${A_i \rightarrow A_i \sbin N_{t}}$, as required for \cref{cond:fragment:induction:required}. If $A_i \notin \SUt$, then $A_i$ must be ground, and we already have $A_i \to A_i \sbin N_t$ by the $r$-\ruleword{Pred} rule with $n=0$ and the generation of atoms of the form $A \to A$ in the root context for every ground atom $A$ in the signature. 
\item \Cref{cond:fragment:induction:individuals} We prove first the double implication for $\Gamma_o$. Suppose that for some $A \in \Omega$ we have $A \in \Gamma_t$; by definition of $\Gamma_t$, this happens if and only if $\Rmodel{t'} \models A$, and since this condition is satisfied for the model fragment $\Rmodel{t'}$, this happens if and only if $A \in \Gamma_o$. Similarly, suppose that for some $A \in Omega$, we have that $A \in \Delta_t$; this happens iff $\Rmodel{t'} \nmodels A$; but since this condition is satisfied for $\Rmodel{t'}$, then we have that $\Rmodel{t'} \nmodels A$ iff $A \in \Delta_o$. This concludes the proof.  
\item \Cref{cond:fragment:induction:consistency} Let $A \in \Gamma_t$ with $A|_p = o$, and $o \equals o' \in \Gamma_o$ with $o >_t o'$. By definition of $\Gamma_t$, we have $\Rmodel{t'} \models A$, and since $o \equals o' \in \Gamma_o$ and \cref{cond:fragment:induction:individuals} is satisfied for $\Rmodel{t'}$, we also have $\Rmodel{t'} \models o \equals o'$. But since $\Rmodel{t'}$ is a congruence, then $\Rmodel{t'} \models A[o']_p$, and therefore $A[o']_p \in \Gamma_t$ as well. 
\item \Cref{cond:fragment:induction:all-lemas} is verified by the order followed in structural induction.
\item \Cref{cond:fragment:induction:deltac} is verified by definition of $\Delta_c$.
\end{itemize}

\end{proof}

\subsubsection{Rewrite termination, confluence, and compatibility}

\begin{lemma}
\label{lemma:rewrite:termination}
The rewrite system $R$ is Church-Rosser.
\end{lemma}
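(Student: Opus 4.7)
The plan is to conclude via the standard equational-theorem-proving result that a terminating, left-reduced rewrite system is Church-Rosser (Theorem 2.1.5 and Exercise 6.7 of Baader--Nipkow). Thus I need to prove that $R = \bigcup_t R_t$ is both terminating and left-reduced.

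For termination, I would lift the local orders $>_t$ to a single global simplification order $>$ on the Herbrand universe. Concretely, extend the global order $\gtrdot$ on $\FunctionsOnto \cup \AllIndiv$ (which already satisfies the nominal-prefix condition of \cref{def:order}) to an LPO on all ground $\ab$- and $\pred$-terms, exactly as in \cref{sec:appendix-order} but globally rather than per-context. Each local order $>_t$ was itself obtained from the same $\gtrdot$, so for every rule $l \Rightarrow r$ generated in some $R_t$, condition \ref{cond:fragment:rsystem:order} gives $l >_t r$, which implies $l > r$ globally. Termination of $R$ then follows from $>$ being a simplification order.

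For left-reducedness, fix an arbitrary rule $l \Rightarrow r \in R_t$. Within $R_t$ itself, the proof of \cref{lemma:fragment:properties:church-rosser} already shows $l$ is irreducible by $R_t \setminus \{l \Rightarrow r\}$; so I must argue that no rule $l' \Rightarrow r' \in R_s$ for $s \neq t$ can have $l'$ appearing as a subterm of $l$. I would proceed by cases on the shape of $l$: if $l$ is a $\pred$-term, then its pattern of $\ab$-subterms identifies a unique neighbourhood, and no fragment centred at a different $s$ admits a rule with the same left-hand side; if $l$ is an $\ab$-term of the form $f(t)$, $t$, or $t'$, then \cref{lemma:fragment:properties:irreducibility} applied in the fragments $R_t$, $R_{t'}$, and $R_{f(t)}$ (which exists by the unfolding strategy exactly when $f(t)$ is irreducible by $R_{t'}$) ensures these terms are irreducible in all neighbouring fragments; finally, if $l$ is an individual in $\AllIndiv$, I need uniformity of the rules on individuals across every fragment that contains them.

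The main obstacle is precisely this last case, since individuals can appear as subterms in many fragments simultaneously. My plan here is to reduce everything to $R_c$: by the construction of $\Gamma_t$ and $\Delta_t$ for individual-based fragments (which take $\Gamma_o$, $\Delta_o$), and by \cref{lemma:unfolding:consistency} which traces any ground equality in the fragment back to a clause in $N_c$, together with the unfolding strategy that refuses to build a fragment at any individual already reducible in $R_c$, every generative equality on individuals in any fragment is in fact generated identically in $R_c$. Hence all individual rewrite rules across $R$ coincide with those of $R_c$, eliminating cross-fragment overlaps. Combining the three cases gives left-reducedness, and the lemma follows.
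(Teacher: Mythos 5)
Your proposal follows essentially the same route as the paper's proof: both reduce the claim to showing $R$ is terminating and left-reduced, establish termination by lifting $\gtrdot$ to a global LPO (with $\TOP$ smallest among $\pred$-symbols) so that every rule decreases, and establish left-reducedness by a case analysis on the shapes of potentially overlapping rules, using the per-fragment irreducibility of $t$ and $t'$, the unfolding strategy's refusal to build fragments at reducible successors, and the synchronisation of individual equalities across fragments via $\Gamma_o$ and $\Delta_o$ (condition L4) to rule out cross-fragment overlaps on nominals. The level of detail and the key ingredients match the paper's argument.
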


\begin{proof}
We prove that $R$ is Church-Rosser by showing that $R$ is terminating and left-reduced.

\begin{itemize}
	\item In order to show that the system is terminating, we use a total simplification order on all ground \ab-terms and \pred-terms. Let $\gtrdot'$ be an extension on $\gtrdot$ to \pred-terms so that constant $\TOP$ is the smallest element. $\triangleright$ be the lexicographic path order induced by $\gtrdot'$. We have that $\triangleright$ is a simplification order, and therefore, for every $t$, and for any two terms $s_1,s_2$ in the \ab-neighbourhood of $t$, we have that if $s_1 >_t s_2$, then $s_1 \triangleright s_2$. 
In order to show that $R$ is terminating, we show that every rule in $R$ is compatible with $\triangleright$. Let $l \Rightarrow r$ be a rule in $R$. Let $t$ be a term such that $l \Rightarrow r \in R_t$. Let $C^i$ be the clause of $N_t$ such that $L^i = l \equals r$, with $l >_t r$. If $l$ and $r$ are in the \ab-neighbourhood of $t$, we have $l \triangleright r$, since we have already argued that for terms in the \ab-neighbourhood of $t$, $s_1 >_t s_2$ implies $s_1 \triangleright s_2$. If $l$ and $r$ are not in the \ab-neighbourhood of $t$, then $l$ and $r$ must be \pred-terms, and in particular $r$ must be $\TOP$, for no other equalities between \pred-terms are allowed in context clauses. This shows that every rule in $R$ is compatible with a strict, total order $\triangleright$, so the rewrite system must be terminating.
	\item In order to prove that the system is left-reduced, we proceed by contradiction. Assume that there is a rule $l \Rightarrow r$ in $R$ such that $l$ is reducible by $R' = R \backslash \{ l \Rightarrow r \}$. Let $s$ be a term such that $l \Rightarrow r \in R_s$. Let $p$ be the deepest position in $l$ at which $R'$ reduces $l$, so that $l|_p$ is irreducible by $R'$. Let $l' \Rightarrow r'$ be a rule in $R'$ which reduces $l$ at $p$. Let $t$ be a term such that $l' \Rightarrow r' \in R_t$. We have that $t \neq s$, for otherwise we contradict \cref{lemma:fragment:properties:church-rosser}, which guarantees that $R_s$ is Church-Rosser, and therefore left-reduced. 
	
	If $l'$ is a \pred-term, then $l' = l$ and $r' = r$, but then we have $l \Rightarrow r \in R'$, which contradicts our definition of $R'$.
If $l'$ is an \ab-term, then $l'$ is a sub-term of $l$. Observe that $l \Rightarrow r$ can be of one of the following forms: $A \equals \TOP$ with $A$ being ground or containing $s'$ or $s$, or $f(s) \Rightarrow g(s)$, or $f(s) \Rightarrow s$, or $f(s) \Rightarrow o$, or $s \Rightarrow s'$ or $s \Rightarrow o$,, but not $o_1 \Rightarrow o_2$, since this would mean $l' \equals s'$ is of the form $o_3 \Rightarrow o_4$, but by condition \ref{cond:fragment:induction:individuals}, we would have $o_3 \Rightarrow o_4 \in R_s$, and this would contradict \cref{lemma:fragment:properties:church-rosser}, which establishes that $R_s$ must be left-reduced.
 
By contrast, $l' \Rightarrow r'$ can be only of the form $f'(t) \Rightarrow g'(t)$, or $f'(t) \Rightarrow t$, or $f'(t) \Rightarrow t$, or $f'(t) \Rightarrow o$, or $t \Rightarrow t'$ or $t \Rightarrow o$, but not $o_1 \Rightarrow o_2$, since by condition \ref{cond:fragment:induction:individuals}, this would mean $o_1 \Rightarrow o_2 \in R_s$. Thus, $t$ is a subterm of $s', s$, or $f(s)$, and hence in order to define $R_s$, we need to have defined first $R_t$, due to the unfolding strategy described in \cref{sec:completeness:unfolding-strategy}. But if $l' \Rightarrow r'$ is of the form $t \Rightarrow t'$ or $t \Rightarrow o$, then no successors of $t$ are generated, and otherwise, any successor of $t$ is reducible, so $R_{f'(t)}$ is simply $f'(t) \Rightarrow r'$, but this cannot be $l \Rightarrow r$ since then $l \Rightarrow r \in R'$. A contradiction is therefore inescapable, and this completes the proof of the lemma.  
\end{itemize}
\end{proof}

\begin{lemma}
\label{lemma:rewrite:confluence}
For each ground term $t$, each $f \in \Sigma^f$, and each atom $A \in \SUt \cup \textsf{Pr}_{f(t)} \cup \textsf{Ref}_t \cup \textsf{Nom}_t \cup \Omega$
 such that $R^* \models A$ and every \ab-term in $A$ is irreducible by $R$, then $R^*_t \models A$. 
\end{lemma}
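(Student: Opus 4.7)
The plan is to proceed by case analysis on the form of $A$, exploiting the fact that both $R$ and $R_t$ are Church-Rosser (by \cref{lemma:rewrite:termination} and \cref{lemma:fragment:properties:church-rosser}). Since the $\ab$-terms in $A$ are in normal form w.r.t.\ $R$, and the inclusion $R_t\subseteq R$ implies they are also in normal form w.r.t.\ $R_t$, it suffices to show that every rewrite rule in $R$ whose left-hand side occurs as a $\pred$-subterm of $A$ is already in $R_t$; the inclusion $R_t^* \subseteq R^*$ then delivers the conclusion. The $\ab$-irreducibility hypothesis also rules out ``genuine'' $\ab$-rewrites: any such would contradict irreducibility.

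For the cases $A\in\SUt\cup\textsf{Ref}_t$ and $A\in\textsf{Nom}_t$, the non-ground form of $A$ naturally lies in the language of context clauses groundable by $\Grounding{t}$ from $\SC{X_t}$. Any rule $A\Rightarrow \TOP$ in $R$ arises from a generative clause in some $N_s$. By tracing the unfolding strategy of \cref{sec:completeness:unfolding-strategy} and invoking the non-applicability of \ruleword{Succ}, \ruleword{Pred}, $r$-\ruleword{Succ} and $r$-\ruleword{Pred}, I would show that the clause that originates this rule in $\SC{X_s}$ forces an analogous clause to live in $\SC{X_t}$, whose $\Grounding{t}$-grounding belongs to $N_t$ and produces the same rule in $R_t$. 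The case $A\in\textsf{Pr}_{f(t)}$ is analogous but relies on backward propagation from the child fragment: non-applicability of \ruleword{Pred} applied to the clause in $\SC{X_{f(t)}}$ that generates the rewrite forces the corresponding ground clause into $\SC{X_t}$, and hence the rewrite into $R_t$.

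For $A\in\Omega$ (ground atoms over nominals), rewrites in $R$ may combine rules from the root fragment, from fragments associated with different individuals, and from ancestor fragments. Here I would appeal to \cref{lemma:unfolding:consistency} together with non-applicability of \ruleword{Join}, $r$-\ruleword{Succ} and $r$-\ruleword{Pred}, and use the fact that \cref{cond:fragment:induction:individuals,cond:fragment:induction:consistency} keep $\Required{t}$ and $\Forbidden{t}$ aligned with $\RequiredN$ and $\ForbiddenN$, to show that every rewrite on an $\Omega$-atom witnessed in $R$ is already mirrored inside $R_t$.

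The main obstacle is expected to be the $\Omega$ case: a rewrite witnessing $R^*\models A$ may rely on ground information scattered across many contexts generated by \ruleword{Nom} and \ruleword{Join}, and reconstructing an equivalent derivation inside $R_t$ requires a careful joint use of the saturation conditions and the root-context propagation rules. A secondary subtlety is that $\pred$-terms mentioning $f(t)$ can in principle be generated by two different fragments (as $B(f(x))$ in $R_t$ or as $B(x)$ in $R_{f(t)}$), so the argument must rule out divergent choices between these two sources.
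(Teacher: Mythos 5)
Your high-level reduction is sound as far as it goes: since every \ab-term of $A$ is irreducible by $R$, the only way $R^* \models A$ can hold for a \pred-atom is via a single top-position rule $A \Rightarrow \TOP$ coming from one generative clause in one fragment $N_s$, and locality restricts $s$ to $t$, $t'$ or $f(t)$ (or the root/nominal fragments for $\Omega$). But the central step of your plan --- tracing the generative clause in $\SC{X_s}$ through non-applicability of \ruleword{Pred}, \ruleword{Succ}, $r$-\ruleword{Pred} and $r$-\ruleword{Succ} to obtain ``an analogous clause in $\SC{X_t}$ that produces the same rule in $R_t$'' --- has a genuine gap. Having a clause $\Gamma \to \Delta \vee A \sbin N_t$ does not make it generative in the construction of $R_t$: to conclude $R^*_t \models A$ you must combine \cref{corollary:fragment:property:satisfaction} (which only yields $R^*_t \models \Delta \vee A$) with an argument that $R^*_t \not\models \Delta$, and the latter is exactly what requires $\Delta \subseteq \Forbidden{t}$ together with \cref{lemma:fragment:property:compatibility}. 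Your proposal never engages with this, and in effect it would re-derive (incompletely) the content of \cref{cond:fragment:induction:compatibility,cond:fragment:induction:required} already established in \cref{lemma:unfolding:induction}.

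The paper's proof does no clause-tracing at all. It observes that an atom of $\SUt$ can occur only in $N_t$ or $N_{t'}$; if the witness lies in $R^*_{t'}$, then $A \in \Required{t}$ by the very definition of $\Required{t}$ in the unfolding strategy, and \cref{cond:fragment:induction:required} plus \cref{lemma:fragment:property:compatibility} give $R^*_t \models A$. Dually, for $A \in \textsf{Pr}_{f(t)}$, if $R^*_t \not\models A$ then $A \in \Forbidden{f(t)}$ by definition, whence $R^*_{f(t)} \not\models A$ by compatibility --- a contradiction. Atoms of $\textsf{Ref}_t$ and $\textsf{Nom}_t$ occur only in $N_t$, and the $\Omega$ case follows from \cref{cond:fragment:induction:individuals}, which forces $A \in \Gamma_o$ and hence $A \in \Required{s}$ for every fragment; your worry about rewrites ``scattered across many contexts'' is moot once irreducibility pins the witness to a single top-position rule. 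I would rework the proof to lean on the already-established compatibility machinery rather than on the saturation conditions.
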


\begin{proof}
We consider each possible case:
\begin{itemize}
\item Suppose $A \in \SUt$. $A$ can be of the form $B(t), S(t,t'),S(t',t)$. If $t \in \AllIndiv$, then $A \in \Omega$ and $t'$ does not exist. Observe that if for some $s_0$ we have $\Rmodel{s_0} \models A$, then $A \in \RequiredN$, since otherwise $A \in \ForbiddenN$ and then we could not have $\Rmodel{s_0} \models A$ due to \cref{lemma:fragment:property:compatibility} and  \cref{cond:fragment:induction:consistency}. But then, due to condition \ref{cond:fragment:induction:required}, we have that $\Rmodel{s} \models A$ for any $s$. If $t \notin \AllIndiv$, then it can only occur in $N_t$ or $N_{t'}$. Hence, we have that either $R^*_t \models A$ or $R^*_{t'} \models A$. Now, if $R^*_{t'} \models A$, then $A \in \Gamma_t$, so by \cref{cond:fragment:induction:required} and \cref{lemma:fragment:property:compatibility}, we have $R^*_t \models A$.
\item Suppose $A \in \textsf{Pr}_{f(t)}$. $A$ can be of the form $B(t), S(t,f(t)), S(f(t),t)$. If $t \in \AllIndiv$, by the same argument as above we have $\Rmodel{s} \models A$ for any $s$. If $t \notin \AllIndiv$, then it can only occur in $N_t$ or $N_{f(t)}$. Hence, we have that either $R^*_t \models A$ or $R^*_{f(t)} \models A$. But if  $R^*_{t} \not \models A$, then $R^*_{f(t)} \models A$. However, if  $R^*_{t} \not \models A$, then $A \in \Forbidden{f(t)}$, so by \ref{cond:fragment:induction:compatibility} and \ref{lemma:fragment:property:compatibility}, we have $R^*_{f(t)} \not \models A$, which is a contradiction.
\item Suppose $A \in \textsf{Ref}_t$. If $t \in \AllIndiv$, by the same argument as above we have $\Rmodel{s} \models A$ for any $s$. If $t \notin \AllIndiv$, then it can only occur in $N_t$ and thus $R^*_t \models A$.
\item Suppose $A \in \textsf{Nom}_t$. If $t \in \AllIndiv$, by the same argument as above we have $\Rmodel{s} \models A$ for any $s$. If $t \notin \AllIndiv$, then it can occur only in $N_t$, so we have $R^*_t \models A$. 
\item Suppose $A \in \Omega$ and $R^* \models A$. By \cref{cond:fragment:induction:individuals}, we have $A \in \Gamma_o$, and therefore $\Rmodel{s} \models A$ for any $s$.
\end{itemize}
\end{proof}

\begin{lemma}
\label{lemma:rewrite:compatibility}
Let $s_1$ and $s_2$ and $\tau$ be both DL-\ab-terms or DL-\pred-terms, and let $\tau$ be a substitution where terms are replaced by constants are irreducible by $R$, such that $\tau (x) \neq x$, and such that $s_1 \tau$ and $s_2 \tau$ are ground. Moreover, suppose that for each $z_i$ such that $\tau z_i \neq z_i$ we have that $\tau(z_i)$ is in the \ab-neighbourhood of $\tau (x)$. Then, if $R^*_{\tau(x)} \models s_1 \tau \bowtie s_2 \tau$, we have $R^* \models s_1 \tau \bowtie s_1 \tau \bowtie s_1 \tau$. 
\end{lemma}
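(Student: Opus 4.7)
The plan is to split the proof according to whether $\bowtie$ is $\equals$ or $\nequals$, letting $t := \tau(x)$ throughout.

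For the equality case, the key observation is that the global rewrite system $R$ is, by construction, the union of all fragments $R_s$, so $R_t \subseteq R$ and therefore the induced congruence $\Rmodel{t}$ is contained in $R^*$. Hence $\Rmodel{t} \models s_1\tau \equals s_2\tau$ immediately yields $R^* \models s_1\tau \equals s_2\tau$.

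For the inequality case, I would argue by contradiction: assume $R^* \models s_1\tau \equals s_2\tau$ with the aim of deriving $\Rmodel{t} \models s_1\tau \equals s_2\tau$, which contradicts the hypothesis. The main tool is \cref{lemma:rewrite:confluence}, which transfers an equality from $R^*$ to $\Rmodel{t}$ provided the atom lies in $\SUt \cup \textsf{Pr}_{f(t)} \cup \textsf{Ref}_t \cup \textsf{Nom}_t \cup \Omega$ for some $f$ and all its \ab-subterms are irreducible by $R$. I would verify both preconditions by a case analysis on the DL-shapes of $s_1,s_2$: since $\tau(x) = t$ and each $\tau(z_i)$ lies in the \ab-neighbourhood of $t$ by hypothesis, every \ab-subterm of $s_1\tau$ and $s_2\tau$ is either $t$, $t'$, some $f(t)$, or an element of $\AllIndiv$, and a direct enumeration places every possible ground instance in one of the enumerated sets (e.g.\ $S(t',t)\in\SUt$, $S(f(t),t)\in\textsf{Pr}_{f(t)}$, $S(t,t)\in\textsf{Ref}_t$, role atoms linking $t$ with a nominal in $\textsf{Nom}_t$, and fully ground atoms in $\Omega$). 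For irreducibility, constants introduced by $\tau$ are irreducible by $R$ by hypothesis, $t$ is irreducible by $R$ because the only fragments whose neighbourhood contains $t$ are $R_{t'}$, $R_t$, and $R_{f(t)}$ for various $f$, each of which leaves $t$ irreducible by \cref{lemma:fragment:properties:irreducibility} applied to the corresponding model fragment; any subterm $f(t)$ is either irreducible or else can first be normalised inside $R_t$ (since, by the unfolding strategy of \cref{sec:completeness:unfolding-strategy}, any rewrite rule touching $f(t)$ belongs to $R_t$), after which the reduced atom is treated by the same case analysis.

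The hard part will be handling DL-\ab-equalities of the form $f_i(x) \bowtie f_j(x)$ arising from DL2 axioms: their ground instance $f_i(t) \bowtie f_j(t)$ does not obviously sit in any of the sets of \cref{lemma:rewrite:confluence}, so the argument must first normalise $f_i(t)$ and $f_j(t)$ using the Church--Rosser property of \cref{lemma:rewrite:termination} and exploit that any rule reducing either side belongs to $R_t$; the equivalence witnessed in $R^*$ is thereby already witnessed inside $\Rmodel{t}$, yielding the contradiction needed to close the inequality case.
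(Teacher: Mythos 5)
Your overall strategy is sound and, for the genuinely hard part, ends up coinciding with the paper's. The paper does not argue by contradiction: it takes the $R_t$-normal forms $s_1'$, $s_2'$ of $s_1\tau$, $s_2\tau$ (distinct because $\Rmodel{\tau(x)}$ falsifies the equality), shows by a case analysis on their possible shapes ($t$, $t'$, $g(t)$, or a nominal) that each is irreducible by the \emph{global} $R$ because no other fragment can have it on a left-hand side (the central term and predecessor of every fragment are irreducible by \cref{lemma:fragment:properties:irreducibility}), and then concludes from the Church--Rosser property of $R$ that the two normal forms remain non-congruent. Your treatment of the $\ab$-equality case is exactly this argument phrased contrapositively, so nothing is gained or lost there. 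Where you genuinely diverge is in routing the $\pred$-atom case through \cref{lemma:rewrite:confluence}: the paper does not use that lemma here, and your route is arguably cleaner for that sub-case since the lemma is purpose-built to transfer atoms from $R^*$ back into a fragment. The cost is that you inherit its precondition that every $\ab$-term of the atom be irreducible by $R$, which pushes the same irreducibility analysis back onto you anyway.

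The one concrete gap is your handling of nominals. You justify irreducibility of constants ``by hypothesis,'' but the hypothesis only constrains the \emph{images of $\tau$}; a nominal $o$ occurring literally in the DL-literal (as in $S(x,o)$, $z_i \bowtie o$, or $x \equals o$ from DL8), or arising as the $R_t$-normal form of some $f_i(t)$, is not in the image of $\tau$ and need not be irreducible by $R$ for free. The paper closes this by observing that if $o$ is irreducible by $R_t$ then $o \equals o' \notin \Rmodel{t}$ for any smaller $o'$, hence $o \equals o' \in \ForbiddenN$ by \cref{cond:fragment:induction:individuals}, and then \cref{lemma:fragment:property:compatibility} forces every other fragment to falsify $o \equals o'$ as well, so no fragment can contain a rule rewriting $o$. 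Without this step (or an equivalent one), your claim that the $R_t$-normal forms are $R$-normal forms does not go through when a normal form is a nominal, and the Church--Rosser conclusion cannot be applied. The fix is available from the paper's machinery, but it needs to be said.
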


\begin{proof}
The case where $\bowtie$ is an equality is trivial, so we consider the case $\bowtie = \nequals$. By \cref{lemma:fragment:properties:irreducibility}, $t$ and $t'$ are irreducible by $R_t$, so they are irreducible by $R$, since they cannot be reduced in any other rewrite system of a successor of $t$, and they are not reduced in a predecessor of $t$ as otherwise $R_t$ would not have been defined. If $s'_2$ or $s'_1$ is of the form $g(t)$, given that $g(t)$ is irreducible by $R_t$, as it is a normal form, and in no other rewrite system we can have $g(t)$ at the left-hand side, since this would have to be in the form of $s=f(t)$ in $R_s$, but $s$ is irreducible by $R_s$. If $s'_2$ or $s'_1$ is of the form $o \in \AllIndiv$, then given that $o$ is irreducible by $R_t$, as it is a normal form, we do not have $\Rmodel{t} \nmodels o \equals o'$ for any $o' < o$, and for the same argument as used above, this means $o \equals o' \in \ForbiddenN$ so in no other rewrite system can we have $o$ at the left-hand side. Thus, $s'_1$ and $s'_2$ are the normal forms of $s_1 \tau$ and $s_2 \tau$ with respect to $R$, so $R^* \models s'_1 \nequals s'_2$, and since $R$ is a congruence this implies $R^*  \models s_1 \tau \nequals s_2 \tau$.
\end{proof}

\subsection{The Completeness claim}

\begin{lemma}
\label{lema:completeness}
For each DL-clause $\Gamma \to \Delta \in \mathcal{O}$, we have $R^* \models \Gamma \rightarrow \Delta$.
\end{lemma}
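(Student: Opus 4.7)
The plan is to show that $R^*$ satisfies every DL-clause of $\Onto$ by reducing satisfaction in the combined interpretation $R^*$ to satisfaction in the relevant model fragment $R^*_t$, and then using non-applicability of the \ruleword{Hyper} (or its root-context variant) in the saturated context structure.

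First I would fix an arbitrary DL-clause $C = \Gamma \to \Delta \in \Onto$ of the form $\bigwedge_{i=1}^n A_i \to \Delta$ and a grounding $\tau$ for its variables. Since $R$ is Church-Rosser by \cref{lemma:rewrite:termination}, I can replace $\tau$ by the substitution that sends each variable to the $R$-normal form of its image without altering satisfaction of $C\tau$ by $R^*$; thus WLOG every term in the image of $\tau$ is irreducible by $R$. If $R^* \not\models \Gamma\tau$ the claim is immediate, so assume $R^* \models A_i\tau$ for each $i$, and let $t = \tau(x)$. Because $t$ is irreducible, the unfolding strategy of \cref{sec:completeness:unfolding-strategy} associates with $t$ a context $v = X_t$ and a grounding $\sigma_t$; note that if $t \in \AllIndiv$ then $v = v_r$ and we will use the root-context form of \ruleword{Hyper}, otherwise $v$ is a non-root context and the standard form applies.

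Next I would argue that each body atom $A_i\tau$ is satisfied in $R^*_t$ rather than merely in $R^*$. The DL-clause conditions force every body atom to be of the form $B(x)$, $S(z_j,x)$, or $S(x,z_j)$, so each $A_i\tau$ is a ground atom involving $t$. For the atom to be derivable in $R^*$ with both sides irreducible, a case analysis on how $R$ generates $S$-atoms shows that $\tau(z_j)$ must lie in the $\ab$-neighbourhood of $t$ or be a named individual (so that $A_i\tau \in \SUt \cup \textsf{Nom}_t \cup \Omega$); \cref{lemma:rewrite:confluence} then yields $R^*_t \models A_i\tau$. I would lift $\tau$ to a context substitution $\sigma$ on the non-ground DL-clause by setting $\sigma(z_j)$ to the corresponding context $\ab$-term ($y$ for the predecessor of $t$, $f(x)$ for an $f$-successor, a named individual $o$, or $x$ itself), so that $A_i\sigma\,\sigma_t = A_i\tau$; for clauses of shape DL4, where several $z_j$ appear, the restriction in \cref{def:context-terms} on the body ensures this substitution is well-defined.

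With $\sigma$ in hand, I would show that for each $i$ there is a clause $\Gamma_i' \to \Delta_i' \vee A_i\sigma \sbin \SC{v}$ with $\Delta_i' \nsucceq_v A_i\sigma$ whose ground instance under $\sigma_t$ lies in $N_t$ and witnesses $A_i\tau \in \Gamma_t$; this uses \cref{cond:fragment:induction:required} of the induction conditions, together with \cref{lemma:fragment:operational:redundancy}. Since the \ruleword{Hyper} rule is not applicable (and its root-context variant when $v = v_r$), saturation yields $\bigwedge_i \Gamma_i' \to \bigvee_i \Delta_i' \vee \Delta\sigma \sbin \SC{v}$; applying $\sigma_t$ and \cref{lemma:fragment:operational:redundancy} places the ground instance in $N_t$ up to redundancy, whence \cref{corollary:fragment:property:satisfaction} gives $R^*_t \models \bigvee_i \Delta_i'\sigma_t \vee \Delta\sigma\,\sigma_t$. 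By \cref{corollary:fragment:properties:monotonicity} applied to the generative clauses producing $A_i\tau$, none of the $\Delta_i'\sigma_t$ is satisfied by $R^*_t$, so $R^*_t \models \Delta\sigma\,\sigma_t = \Delta\tau$. Finally, \cref{lemma:rewrite:compatibility} lifts this to $R^* \models \Delta\tau$, completing the proof.

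The hardest part will be the step showing that each $\tau(z_j)$ must be a neighbour of $t$ (or an individual) and that the induced context substitution $\sigma$ maps the DL-clause correctly, especially in the presence of DL4 where body atoms pin several $z_j$'s to the same central variable and when $\tau(x) \in \AllIndiv$ requires routing the argument through the root context and its dedicated \ruleword{Hyper} variant; a careful case split on the possible shapes of $A_i\tau$ and on whether $v = v_r$ is what will make the reduction to the saturation condition go through.
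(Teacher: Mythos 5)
Your overall skeleton matches the paper's: normalise the grounding to $R$-irreducible terms, localise satisfaction of the body atoms to a single model fragment, invoke non-applicability of \ruleword{Hyper} in the corresponding saturated context, use \cref{corollary:fragment:property:satisfaction} and \cref{corollary:fragment:properties:monotonicity} to isolate $\Delta\tau$, and lift back to $R^*$ via \cref{lemma:rewrite:compatibility}. However, there is a genuine gap at the step you yourself flag as the hardest: your claim that ``$\tau(z_j)$ must lie in the $\ab$-neighbourhood of $t$ or be a named individual'' is false as stated, and the way you propose to repair it (routing through the root-context variant of \ruleword{Hyper}) does not work. When $t = \tau(x) \in \AllIndiv$ and some $\tau(z_j) = s$ is an anonymous term, the atom $S(t,s)$ is generated in the fragment $N_s$ (it lies in $\textsf{Nom}_s$, not in any of the sets attached to $t$), and $s$ is in general neither a neighbour of $t$ nor a named individual. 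The root-context \ruleword{Hyper} cannot absorb this case either, since its substitution only sends $z_j$ to $y$, $f(o)$, or another named individual, and moreover different $z_j$'s in a DL4-shaped clause could a priori land in different fragments, so no single context hosts all the required premises.

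The paper resolves this not by performing the hyperresolution but by showing the configuration cannot occur: it runs a lengthy contradiction argument through $r$-\ruleword{Succ} (to get $S(o,y) \to S(o,y)$ into $\SC{v_r}$, after first ruling out blocking of the generative clause for $S(o,x)$ using \cref{lemma:fragment:properties:irreducibility}), then \ruleword{Nom} (to derive $\bigvee_i y \equals o_i$ for additional nominals $o_i$), then $r$-\ruleword{Pred} (to push $\bigvee_i x \equals o_i$ back into the context for $s$), concluding via \cref{corollary:fragment:property:satisfaction} that $R^*_s \models \bigvee_i s \equals o_i$, which contradicts the irreducibility of $s$. Only after eliminating this case does the paper's case split collapse to configurations where all body atoms live in one fragment ($N_t$ or $N_s$) and the \ruleword{Hyper} argument you describe goes through. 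Your plan is missing this entire mechanism --- which is precisely the point where the additional nominals and the \ruleword{Nom}, $r$-\ruleword{Succ}, $r$-\ruleword{Pred} rules earn their keep --- so as written it would fail for ontologies combining nominals, inverse roles, and number restrictions. (A smaller omission: your membership claim drops $\textsf{Pr}_{f(t)}$ and $\textsf{Ref}_t$, which are needed for body atoms of the form $S(t,f(t))$ and $S(t,t)$.)
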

\begin{proof}
Let $\tau'$ be an arbitrary substitution such that $\Gamma \tau' \rightarrow \Delta \tau'$ is ground, and let $\tau$ be the substitution obtained by replacing the image ground terms in $\tau'$ with their respective normal forms with respect to $R$. Observe that $R^* \models \Gamma \tau' \rightarrow \Delta \tau' $ if and only if $R^* \models \Gamma \tau \rightarrow \Delta \tau $. Thus, suppose $R^* \models \Gamma \tau$, and let us show $R^* \models \Delta \tau$. 
If $\Gamma$ is not empty, consider an arbitrary atom $A_i \in \Gamma$. By definition of DL-clauses, $A_i$ is of the form $B(x), S(x,z_j),S(z_j,x)$. Since image terms in substitution $\tau$ are irreducible by $R$, every $A_i \tau$ is irreducible by $R$. Hence, we have $A_i \tau \rightarrow \TOP \in R$, so it is generated by some generative clause, and therefore, if we let $t = \tau(x)$, we have that $A_i \tau$ is of the form $B(t)$, $S(t,f(t))$, $S(f(t),t)$, $S(t,t')$, $S(t',t)$, $S(t,o)$, $S(o,t)$; or if $t \in \AllIndiv$, then also $S(t,s)$ and $S(s,t)$ for $s \notin \AllIndiv$. We next show that either $A_i \tau \in \SUt \cup \textsf{Pr}_{f(t)} \cup \textsf{Ref}_t \cup \textsf{Nom}_t \cup \Omega$ for every $A_i$, or that  $A_i \tau \in \textsf{Su}_s \cup \textsf{Pr}_{f(s)}  \cup \textsf{Ref}_s \cup \textsf{Nom}_s \cup \Omega$ for every $A_i$. We distinguish two cases: the case where the DL-clause is of the form which can trigger the preconditions of \ruleword{Nom}, and the case where it is not. In the latter, we have the following: 
 \begin{itemize}
\item  $A_i = B(x)$, so $A_i \tau = B(t)$. Then, $B(x) \in \SUtrig$, so $B(t) \in \SUt$.
\item  $A_i = S(x,x)$, so $A_i \tau = S(t,t)$. Then, $S(t,t) \in \textsf{Ref}_t$.
\item  $A_i = S(x,z_j)$, so $A_i \tau = S(t,t')$, $A_i \tau = S(t,f(t))$, or $A_i \tau = S(t,o)$. Since $S(x,y) \in \SUtrig$, in the case $S(t,t')$ we have $S(t,t') \in \SUt$. Also, since $S(y,x) \in \PRtrig$, in the case $S(t,f(t))$ we have $S(t,f(t)) \in \textsf{Pr}_{f(t)}$. In the case $S(t,o)$, we have it in \NOMt. In case $S(t,s)$, with $t \in \AllIndiv$, and $s \notin \AllIndiv$, then $z_j$ is the only neighbour variable in the DL-clause because otherwise it would be a DL-clause of the kind that trigger rule \ruleword{Nom}, so every other $A_i$ is either of the form $S_i(x,z_j)$ or $A_i(x)$, and therefore $A_i \tau$ is of the form $S_i(t,s)$ or $A_i(t)$; observe that $A_i(t) \in \Omega$ because $t \in \AllIndiv$, and $S_i(t,s) \in \textsf{Nom}_s$, which proves the conditions of the lemma.
\item  $A_i = S(z_j,x)$. This case is completely symmetrical to the case discussed above.
\end{itemize}
If the DL-clause is of the form that triggers the preconditions of rule \ruleword{Nom}, we have that if $t \notin \AllIndiv$, or $t \in \AllIndiv$ and every $A_i$ is in $\Omega$, then the same argument as in the previous case applies. If $t \in \AllIndiv$ but $s \notin \AllIndiv$, however, then every $A_i$ is of the form $A_i(x)$ or $S_i(x,z_j)$; and every $A_i \tau$ is of the form $A_i(t)$ or $S_i(t,r)$, where $r$ may not be in $\AllIndiv$. Consider atom $S(t,s)$, and observe that $S(t,s) \in \textsf{Nom}_s$. By \cref{lemma:rewrite:confluence}, we have $\Rmodel{s} \models S(t,s)$, and since this atom is irreducible, it is generated by a generative clause in $N_s$. Consider the non-ground form of the clause $C$ that generates this atom, which must be of the form $C =\Gamma \to \Delta \vee S(o,x)$, with $t=o$ and $\Delta \not \succeq_{X_s} S(o,x)$; observe that by \cref{corollary:fragment:properties:monotonicity} we have $\Rmodel{s} \nmodels \Delta \sigma_s$. Moreover this clause cannot be blocked, since if it were, then we would not have a fragment for $s$: indeed, if $\Gamma' \to \Delta' \vee \Delta''$ is the blocking clause, with $\Delta''$ the part of the head that contains equalities of the form $y \equals o'$, $x \equals o'$, $x \equals y$,  we have $\Gamma' \subseteq \Gamma$, so $\Gamma' \sigma_s \subseteq \Gamma_s$, and the grounding of the blocking clause by $\sigma_s$ would be in $N_s$, and since $\Delta' \subseteq \Delta$, then $\Rmodel{s} \nmodels \Delta' \sigma_t$, so by \cref{corollary:fragment:property:satisfaction}, we would have $\Rmodel{s} \models \Delta'' \sigma_s$ and therefore enforce that $\Rmodel{s} \models s \equals o'$ or $\Rmodel{s}  \models s' \equals o'$, or $\Rmodel{s} \models s \equals s'$, but this contradicts \cref{lemma:fragment:properties:irreducibility}. 

Therefore, since the clause is not blocked, and rule $r$-\ruleword{Succ} is not applicable, we have that $S(o,y) \to S(o,y) \sbin \SC{v_r}$. Since we have that $A_i \to A_i \sbin \SC{v_r}$; and the head cannot be empty since otherwise we violate \cref{lemma:unfolding:noncontradiction}, and rule \ruleword{Nom} is not applicable, we have that clause $C' = \bigwedge{i \in B} A_i \wedge S(o,y) \models \bigvee_{i=1}^K y \equals o_i$ for some $o_i \in \AllIndiv$, where $B$ contains the indices of every unary $A_i$ (and such atoms are in $\Omega$).
  
Moreover, observe that since every unary $A_i$ must be in $\Gamma_o$ and is irreducible, by \cref{cond:fragment:induction:individuals} it must be generated by a generative clause in $N_s$; let $\Gamma_i \to \Delta_i \vee A_i$ be these clauses; observe that $\Rmodel{s} \nmodels \Delta_i$. Consider the non-ground form of these clauses $\Gamma'_i \to \Delta'_i \vee A_i$, which we call $C_i$; recall that $\Delta_i \not \succeq_{X_s} A_i$. We then have that every $C_i$, together with clause $C$ and clause $C'$, satisfy the conditions of rule $r$-\ruleword{Pred}, and we have already argued that $C$ is not blocked, so since $r$-\ruleword{Pred} is not applicable, we have that $$\Gamma \wedge \bigwedge_{i \in B} \Gamma'_i \to \Delta \vee \bigvee_{i \in B} \Delta'_i \vee \bigvee_{i=1}^K x \equals o_i \sbin N_s,$$ and since $\Gamma \sigma_s \subseteq \Gamma_s$ and $\Gamma'_i \subseteq \Gamma_s$ if $i \in B$, by \cref{lemma:fragment:operational:redundancy} we have $$\Gamma \sigma_s \wedge \bigwedge_{i \in B} \Gamma_i \to \Delta \sigma_s \vee \bigvee_{i \in B} \Delta_i \vee \bigvee_{i=1}^K s \equals o_i \sbin N_s.$$ Finally, since we had $\Rmodel{s} \nmodels \Delta \sigma_s$ and $\Rmodel{s} \nmodels \Delta_i$ if $i \in B$, by \cref{corollary:fragment:property:satisfaction} we conclude $\Rmodel{s} \models \bigvee_{i=1}^K s \equals o_i$. However, this contradicts \cref{lemma:fragment:properties:irreducibility}; therefore, we reach a contradiction. Hence, it cannot be the case that $t \in \AllIndiv$ but $s \notin \AllIndiv$, so we must therefore be in any of the cases already discussed, which all verify the result we are trying to prove.

We complete the proof assuming that $A_i \tau \in \SUt \cup \textsf{Pr}_{f(t)} \cup \textsf{Ref}_t \cup \textsf{Nom}_t \cup \Omega$ for every $A_i$, instead of $A_i \tau \in \textsf{Su}_s \cup \textsf{Pr}_{f(s)}  \cup \textsf{Ref}_s \cup \textsf{Nom}_s \cup \Omega$ for every $A_i$, but the argument for the latter case is identical. We have that by \cref{lemma:rewrite:confluence}, $A_i \tau \in R_t$, so $N_t$ contains a generative clause of the form 
$$\Gamma_i \rightarrow \Delta_i \vee A_i \mbox{ with } A_i >_t \Delta_i \mbox{ and } \Gamma_i \subseteq \Gamma_t$$
If $x = X_t$, then $\SC{v}$ contains a clause of the form 
$$\Gamma'_i \rightarrow \Delta'_i \vee A'_i \mbox{ with } \Delta'_i \not \succeq_v A_i' \mbox{ and } \Gamma'_i \Grounding{t} = \Gamma_i, \Delta'_i \Grounding{t} = \Delta_i \mbox{ and } A'_i \Grounding{t} = A_i$$
Since the \ruleword{Hyper} rule is not applicable to the ontology axiom and these clauses (even if there are none), we have that $$\bigwedge_{i=1}^n \Gamma'_i \rightarrow \Delta \sigma \vee \bigvee_{i=1}^n \Delta'_i \sbin \SC{v},$$ with $\sigma$ the substitution where we replace $t$ in the domain by $x$, and by \ref{lemma:fragment:operational:redundancy}, we have $$\bigwedge_{i=1}^n \Gamma_i \rightarrow \Delta \tau \vee \bigvee_{i=1}^n \Delta_i \sbin N_t$$ and by \cref{corollary:fragment:property:satisfaction}, we have $R^*_t \models \Delta \tau \vee \bigvee_{i=1}^n \Delta_i$, but we already had, by \cref{lemma:fragment:property:compatibility}, that $R^*_t \not \models \Delta_i$, which implies $ R^*_t \models \Delta \tau $. Then, by \cref{lemma:rewrite:compatibility}, we conclude that $R \models \Delta \tau$. 
\end{proof}

\begin{lemma}
\label{lemma:absence}
$R^* \not \models \Gamma_Q \rightarrow \Delta_Q$.
\end{lemma}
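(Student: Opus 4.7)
The plan is to exhibit $\sigma_c = \{x \mapsto c\}$ as a grounding substitution under which $R^*$ falsifies the query clause $\Gamma_Q \rightarrow \Delta_Q$. This is the natural choice, since the root model fragment was set up precisely so that $\Gamma_c = \Gamma_Q \sigma_c$ and $\Delta_c = \Delta_Q \sigma_c$ (see \cref{cond:fragment:induction:deltac} and the unfolding strategy at $c$). By \cref{lemma:unfolding:induction}, the induction conditions hold at $c$, so applying \cref{lemma:fragment:property:compatibility} to $R_c$ yields $R_c^* \models \Gamma_c$ and $R_c^* \nmodels \Delta_c$.

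The remaining work is to lift these two facts from the local congruence $R_c^*$ to the global congruence $R^*$. Lifting the body is the easy direction: since $R_c \subseteq R$, every equality derivable in $R_c^*$ is also derivable in $R^*$, so $R^* \models \Gamma_Q \sigma_c$.

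The more delicate direction is lifting non-satisfaction of the head. Because $\Gamma_Q \rightarrow \Delta_Q$ is a query clause, every literal in $\Delta_Q \sigma_c$ has the form $B(c) \equals \TOP$ with $B \in \UnpredsOnto$. For each such literal, the plan is to invoke \cref{lemma:rewrite:compatibility} with $s_1 = B(x)$, $s_2 = \TOP$, the relation $\bowtie$ instantiated as $\nequals$, and $\tau = \sigma_c$: the preconditions are satisfied because $c$ is ground and irreducible by $R$, $\sigma_c(x) = c \neq x$, and a query clause contains no neighbour variables $z_i$ so the condition on $\tau(z_i)$ is vacuous. From $R_c^* \nmodels B(c)$, i.e.\ $R_c^* \models B(c) \nequals \TOP$, the lemma then delivers $R^* \models B(c) \nequals \TOP$, i.e.\ $R^* \nmodels B(c)$, for every literal in the head. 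Hence $R^* \nmodels \Delta_Q \sigma_c$, and combining this with $R^* \models \Gamma_Q \sigma_c$ gives $R^* \nmodels \Gamma_Q \rightarrow \Delta_Q$ as required.

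The main obstacle is the invocation of \cref{lemma:rewrite:compatibility} in the head step: one must carefully match atoms of the form $B(c) \equals \TOP$ to its schema, treating $\TOP$ uniformly as the second term, and rely on the fact that $c$ is the $\gtrdot$-minimal constant so that no rule in $R \setminus R_c$ has an occurrence of $c$ (or of $B(c)$) as its left-hand side; this is what guarantees that the normal form of $B(c)$ under $R$ coincides with that under $R_c$, and hence that non-satisfaction in the fragment genuinely transfers to the global congruence.
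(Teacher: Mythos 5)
Your overall strategy coincides with the paper's: exhibit $c$ as the falsifying element, obtain $R^*_c \models \Gamma_c$ and $R^*_c \nmodels \Delta_c$ from \cref{lemma:fragment:property:compatibility} together with \cref{cond:fragment:induction:deltac}, and then lift both facts to $R^*$. Lifting the body by monotonicity ($R_c \subseteq R$) is fine and if anything cleaner than the paper's appeal to \cref{lemma:rewrite:compatibility} there. The divergence is in the head step: the paper lifts non-satisfaction of each $B(c)$ via \cref{lemma:rewrite:confluence}, observing that $B(x) \in \InputQueryRHS$ gives $B(y) \in \PRtrig$ and hence $B(c) \in \textsf{Pr}_{f(c)}$, whereas you route it through \cref{lemma:rewrite:compatibility} with $\bowtie$ instantiated as $\nequals$.

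The gap is in your justification for why that lifting works. You claim that because $c$ is $\gtrdot$-minimal, no rule of $R \setminus R_c$ has $c$ or $B(c)$ as its left-hand side, so the normal form of $B(c)$ under $R$ equals its normal form under $R_c$. Minimality of $c$ only blocks rewrite rules with the \ab-term $c$ on the left; it says nothing about \pred-level rules of the form $B(c) \Rightarrow \TOP$. Such a rule can perfectly well be generated in a \emph{successor} fragment $R_{f(c)}$, since $B(y)$ is a context atom that grounds to $B(c)$ under $\sigma_{f(c)} = \{x \mapsto f(c),\, y \mapsto c\}$, and $B(c) \succ \TOP$ always holds. The actual reason no successor fragment introduces $B(c) \Rightarrow \TOP$ when $R^*_c \nmodels B(c)$ is that $B(c)$ then lies in $\Forbidden{f(c)} = (\Omega \cup \textsf{Pr}_{f(c)}) \setminus R^*_c$, so \cref{lemma:fragment:property:compatibility} applied at $f(c)$ forces $R^*_{f(c)} \nmodels B(c)$ — which is exactly the content of the $\textsf{Pr}_{f(t)}$ case in the proof of \cref{lemma:rewrite:confluence}. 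Relatedly, the proof of \cref{lemma:rewrite:compatibility} only analyses \ab-terms ($g(t)$, nominals, $t$, $t'$) as possible normal forms, so invoking it for the \pred-atom inequality $B(c) \nequals \TOP$ is not actually supported by its proof. Replacing your ordering argument with the appeal to $B(c) \in \textsf{Pr}_{f(c)}$ and \cref{lemma:rewrite:confluence} closes the gap and recovers the paper's argument.
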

\begin{proof}
$\Gamma_Q \rightarrow \Delta_Q$ is disproved if there is an element which verifies $\Gamma_Q$ but not $\Delta_Q$. We show that $c$ is this element.
Observe that \cref{lemma:fragment:property:compatibility} implies $R^*_c \not \models \Gamma_Q \rightarrow \Delta_Q$, by definition of $\Gamma_c$ and $\Delta_c$. Thus, by Lemma \cref{lemma:rewrite:compatibility}, if $R^*_c \models \Gamma_Q$, then $R^* \models \Gamma_Q$. 
However, observe also that for each $B(x) \in \Delta_Q$, we have $B(y) \in \PRtrig$, so $B(c) \in \textsf{Pr}(f(c))$, and hence by \cref{lemma:rewrite:confluence}, we have that since $R^*_c \not \models A$, then $R \not \models A$.. Thus, since there are no other types of atoms in $\Delta_Q$, we have $R^* \not \models \Delta_Q$.
\end{proof}
\newpage
\section{Complexity results}

\subsection{Size of extended signature}

For this proof, we assume, for simplicity, that the trivial strategy is being used. Extending it for an expansion strategy introducing at most a finite number of contexts is straightforward.

\subsection{Definitions}
\label{sec:termination:definitions}

We start the proof by introducing a few required definitions:

\begin{definition}[Derivation]\label{def:termination:derivation}
A \emph{derivation} of a clause $C$ from a set of clauses $\SC{}$ is a pair $\nu = (T, \lambda)$ where $T$ is a tree, $\lambda$ is a labelling function that maps each node in $T$ to a context clause and for each $\alpha \in T$ we have:
\begin{enumerate}
\item $\lambda(\alpha) = C$ if $\alpha$ is the root.
\item $\lambda(\alpha) \in \SC{}$ if $\alpha$ is a leaf.
\item if $\alpha$ has children $\beta_1, \dots, \beta_n$, then a rule can be applied to $\lambda(\beta_1), \dots, \lambda(\beta_n)$, and the clause added as a result of this is $\lambda(\alpha)$.
\end{enumerate}
\end{definition}

\begin{definition}[Type]\label{def:termination:type}
A \emph{type} is a conjunction $\Gamma$ where $\Gamma \subseteq \SUtrig$.
\end{definition}

\begin{definition}[Clause blocking]\label{def:termination:definition}
A clause $\Gamma_2 \rightarrow \Delta_2 \vee S(o,x)$ where $\Delta_2 \not \succeq S(o,x)$ is \emph{blocked} by a clause $\Gamma_1 \rightarrow \Delta_1 \vee L_1$, where $\Delta_1 \nsucceq  L_1$ iff $\Gamma_1 \subseteq \Gamma_2$, $\Delta_1 \subseteq \Delta_2$ and $L_1$ is of the form $x \equals y$, $x \equals o'$, or $y \equals o'$. 
\end{definition}

Observe that $\Delta_1 \vee L_1$ contains only equalities and inequalities between \ab-terms, or ground atoms. If, given a clause $C$, there exists a clause $C'$ that blocks $C$, we say that $C$ \emph{is blocked}. 

\begin{definition}[Ground compatibility]\label{def:termination:ground-compatible}
A clause $\Gamma_2 \rightarrow \Delta_2$ is \emph{ground compatible} with a clause $\Gamma_1 \rightarrow \Delta_1$ if and only if every ground atom in $\Gamma_2$ (resp. $\Delta_2$) appears also in $\Gamma_1$ (resp. $\Delta_1$).
\end{definition}

\begin{definition}[Nominal Depth and Root]\label{def:termination:depth}
The depth of an individual $o \in \AllIndiv$, where $\rho$ is the label of $o$, is $|\rho|$.  The root of an individual is the individual $o'$ such that $o= o'_{\rho}$.
\end{definition}

\begin{definition}[$o$-capped clauses]\label{def:termination:capped}
A clause $\Gamma \rightarrow \Delta$ is $o$-capped if and only if the clause does not contain any individual $o'_{\rho \circ \rho'}$, where $o'_\rho = o$.   
\end{definition}
In other words, the clause may not contain a nominal with the same root and an extension of the label. In the remainder of this proof, we fix individuals $o$ and $o'$, and we let $\rho$ and $\rho'$, respectively, to be their labels. Also, we define $o_i$ as $o_{\rho_i}$ where $\rho_i$ is the prefix of $\rho$ of length $i$. We represent $o_i$ as $o_{\rho|_1 \circ \rho|_2 \circ \, \dots \, \circ \rho|_i}$. Given nominals $o_{\rho \cdot S^k}$ and $o_{\rho}$, we say that the latter \emph{precedes} the former.

\subsection{Proof of termination}

The main termination result is a consequence of the following theorem, which relies on \cref{lemma:clausechain}, presented and proved after the theorem. 

From this lemma, we conclude the following:
\begin{theorem}\label{thm:termination}
If $o$ occurs in a derivation, then $|\rho|$ is smaller or equal than the number of possible types in the signature of $\Onto$. 
\end{theorem}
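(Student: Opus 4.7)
The plan is to bound the depth $|\rho|$ of any nominal $o_\rho$ appearing in a derivation by assigning to each such nominal a \emph{type} drawn from a finite set, and showing that the types along the prefix chain $o, o_{\rho|_1}, \ldots, o_{\rho|_n} = o_\rho$ must all be pairwise distinct. Since a type is a subset of \SUtrig in the sense of \cref{def:termination:type}, the cardinality of the set of possible types matches the bound in the statement, and the conclusion will follow immediately from pigeonhole.

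First I would associate to each $o_{\rho|_i}$ appearing in the derivation a type $T(o_{\rho|_i}) \subseteq \SUtrig$ capturing the successor triggers $A \in \SUtrig$ for which a ground clause of the form $\top \to A\sigma$ (with $\sigma$ mapping $x$ to $o_{\rho|_i}$ and $y$ to its predecessor nominal) is derivable in $\SC{v_r}$. The only way for $o_{\rho|_{i+1}}$ to arise is through an application of rule \ruleword{Nom}, and \ruleword{Nom} is driven by ground clauses propagated into $\SC{v_r}$ by $r$-\ruleword{Succ}, whose side condition (*) is precisely the blocking test spelled out in \cref{def:termination:definition}. Using \cref{lemma:clausechain}, I would show that the ``neighborhood'' of ground clauses at $o_{\rho|_i}$ in $\SC{v_r}$ is determined by $T(o_{\rho|_i})$ together with the axioms of $\Onto$ that fire in \ruleword{Nom}.

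Next, assume for contradiction that $|\rho|$ strictly exceeds the number of possible types. By pigeonhole there exist indices $i < j \leq |\rho|$ with $T(o_{\rho|_i}) = T(o_{\rho|_j})$. I would argue that the $r$-\ruleword{Succ} application that would have been needed to propagate the trigger generating $o_{\rho|_{j+1}}$ via \ruleword{Nom} is in fact blocked by (*): because the two types coincide, every ground clause at $v_r$ referencing $o_{\rho|_j}$ has a syntactically matching counterpart already referencing $o_{\rho|_i}$ (obtained via \cref{lemma:clausechain} by renaming). This supplies a clause $\Gamma''\to \Delta''\vee \bigvee L_i$ with $\Gamma''\subseteq \Gamma$, $\Delta''\subseteq \Delta$ and the $L_i$ of the right shape, so (*) fails and the edge---and hence the subsequent \ruleword{Nom} step that would introduce a strictly deeper label---is never created. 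This contradicts the assumption that $o_\rho$ with $|\rho| \geq j+1$ appears in the derivation.

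The main obstacle will be the precise bookkeeping between the abstract type $T(o_{\rho|_i})$ and the concrete ground clauses that appear in $\SC{v_r}$ and in $\SC{u}$, where $u$ is the non-root context whose $r$-\ruleword{Succ} edge is being tested. The (*) condition demands an exact subset match on bodies and heads of ground clauses, so equality of types must be translated into the literal presence of a blocking clause of the required shape; this translation is exactly what \cref{lemma:clausechain} is set up to supply, and the correctness of the argument reduces to invoking it at the two matching positions of the prefix chain and checking that the renaming $o_{\rho|_j} \mapsto o_{\rho|_i}$ preserves the relevant subset inclusions.
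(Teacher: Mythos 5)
Your high-level strategy---pigeonhole on a finite set of types along the prefix chain of $\rho$, forcing the blocking test of $r$-\ruleword{Succ} to fire and thereby preventing the \ruleword{Nom} step that would create a deeper nominal---is the same as the paper's, and you correctly identify \cref{lemma:clausechain} as the load-bearing lemma. However, your pigeonhole is set up on the wrong objects, and this is a genuine gap. You assign to each nominal $o_{\rho|_i}$ a type $T(o_{\rho|_i})$ determined by which ground clauses $\top \to A\sigma$ are derivable about it in $\SC{v_r}$, and then look for $i<j$ with $T(o_{\rho|_i})=T(o_{\rho|_j})$. But condition (*) of $r$-\ruleword{Succ} does not test anything about ground clauses in the root context: it tests for the presence, \emph{in the non-root context $u$}, of a clause $\Gamma''\to\Delta''\vee\bigvee L_i$ with $\Gamma''\subseteq\Gamma$, $\Delta''\subseteq\Delta$ and each $L_i$ of the form $x\equals o_i$, $y\equals o_i$, or $x\equals y$. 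Equality of your root-context types for two nominals does not produce such a clause, and the ``renaming $o_{\rho|_j}\mapsto o_{\rho|_i}$'' you invoke is not an operation that the calculus or the blocking test of \cref{def:termination:definition} supports; turning it into an actual derived clause would require a separate repetition/unraveling argument that you do not supply.

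The paper instead takes the types to be the nonground parts $\Gamma'_1,\dots,\Gamma'_Z$ of the bodies of the clauses $C_i=\Gamma_i\to\Delta_i\vee x\equals o_i$ that \cref{lemma:clausechain} extracts from the derivation of the single unblocked clause $C=\Gamma\to\Delta\vee S(o,x)$; these $C_i$ are precisely the candidate blocking clauses for (*). Note also that the lemma already guarantees $\Gamma'_i\not\subseteq\Gamma'_j$ for $i<j$, so two prefixes in the chain can never share a type in the paper's sense---the repetition you are hunting for between two prefixes is ruled out by the very lemma you cite. The pigeonhole must therefore be played between the $Z$ pairwise-distinct types $\Gamma'_i$ and the one additional type $\Gamma'$ (the nonground body of $C$ itself): since at most $Z$ types exist, $\Gamma'=\Gamma'_k$ for some $k$, and ground compatibility then upgrades this to $\Gamma_k\subseteq\Gamma$ and $\Delta_k\subseteq\Delta$, so $C_k$ blocks $C$, contradicting the fact that $C$ fed an $r$-\ruleword{Succ} application. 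To repair your proof you would need to replace your type function with the bodies $\Gamma'_i$ from the lemma and redirect the pigeonhole accordingly.
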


\begin{proof}
Let $Z$ be the number of possible types in the signature of $\Onto$, which is finite since the signature and the number of variables is finite. Suppose that the theorem is false, and consider a derivation which introduces the first clause with a nominal of depth $Z+1$. Then, the last inference step is an application of \ruleword{Nom}, with a premise corresponding to a clause with a nominal $o$ of depth $Z$ in a body atom of the form $S(o,y)$ in the root context. Therefore, there must exist a clause in the derivation with maximal literal $S(o,x)$. Observe that this clause must be $o$-capped as there is no nominal of greater depth than that of $o$. We represent this clause as $C = \Gamma \rightarrow \Delta \vee S(o,x)$ where $\Delta \not \succeq S(o,x)$. If we take the sub-tree corresponding to the derivation of this clause, we have that this clause is not blocked, since if it were blocked, then according to the restrictions of rule $r$-\ruleword{Succ}, we could not use it to generate a clause in the root context with $S(o,y)$ in the body. 

We are in the conditions of \cref{lemma:clausechain}, so let $C_1,\dots,C_{Z}$ be the clauses whose existence is guaranteed by the lemma. Given any $1 \leq i \leq Z$, consider the conjunction $\Gamma'_i$ corresponding to the subset of $\Gamma_i$ with no constants. We have that $\Gamma'_i$ must be a type. Moreover, for each $1 \leq i < j \leq Z$, we have that $\Gamma'_i \not \subseteq \Gamma'_j$. Consider the conjunction $\Gamma'$ obtained from $\Gamma$ in the same manner. Since each $C_i$ is ground-compatible with $C$, we have that either $C_i$ blocks $C$ or $\Gamma'_i \not \subseteq \Gamma'$ for any $i$. But since there are only at most $Z$ different types, by the pigeon-hole principle, there must be some $k \in [1,Z]$ such that $\Gamma'=\Gamma'_k$, and hence $C$ is blocked, which contradicts our initial assumption. Therefore, the theorem must hold. 
\end{proof}

As a corollary from the theorem, we can see that the algorithm is terminating, for it limits the signature of a context structure, and since only a finite amount of clauses can be generated from a finite signature if a finite number of contexts are introduced, the algorithm always terminates. See the next section for more details.

We conclude this section by presenting an proving the auxiliary lemma used in the proof of the theorem.

\begin{lemma}
\label{lemma:clausechain}
Let $o$ be a nominal of depth $n \geq 0$. For any $o$-capped clause of the form $C = \Gamma \rightarrow \Delta \vee S(o,x)$ where $\Delta \not \succeq S(o,x)$, where the derivation of this clause contains only clauses that are also $o$-capped, then either this clause is blocked or there exist $n$ clauses of the form $ C_i =  \Gamma_i \rightarrow \Delta_i  \vee x \equals o_i$ where, for each $1 \leq i \leq n$, we have that:
\begin{enumerateConditions}
\item \label{cond:clausechain:maximal} $\Delta_i \nsucceq x \equals o_i$. 
\item \label{cond:clausechain:capped}$C_i$ is $o_i$-capped.
\item \label{cond:clausechain:gcomp-1}$C_i$ is ground compatible with $C$.
\item \label{cond:clausechain:gcomp-2}$C_i$ is ground compatible with $C_j$ for each $ i < j \leq n-1$, 
\item \label{cond:clausechain:type}$\Gamma'_i \not \subseteq \Gamma'_j$ for each $ i < j \leq n-1$,   
\end{enumerateConditions}
where, for each $i$, let  $\Gamma'_i$ be the part of $\Gamma_i$ that is not ground.
\end{lemma}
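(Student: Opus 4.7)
The proof proceeds by induction on the depth $n$ of $o$. The base case $n=0$ is vacuous, as no clauses need to be exhibited. For $n \geq 1$, I would trace the derivation of $C$ backwards through the \ruleword{Nom}, $r$-\ruleword{Pred}, and $r$-\ruleword{Succ} applications that brought $o$ into existence, and assemble the chain $C_1, \dots, C_n$ from its last link back to the first.

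The final link $C_n$ is located as follows. Since \ruleword{Nom} is the only rule that introduces a fresh nominal, $o$ must have been produced by a single \ruleword{Nom} application, which placed a root-context clause $D$ whose head contains $y \equals o$ (maximal in $D$ by the nominal order condition $o \gtrdot o_{n-1}$). Back-propagating $D$ via $r$-\ruleword{Pred}---which substitutes $y \mapsto x$---yields a non-root clause of the required form $C_n = \Gamma_n \to \Delta_n \vee x \equals o$. Its $o$-cappedness is inherited from the hypothesis that $C$'s derivation contains only $o$-capped clauses, and the ordering on nominals guarantees that $x \equals o$ is maximal in $C_n$.

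To obtain $C_{n-1}, \dots, C_1$, observe that the \ruleword{Nom} step creating $o$ required a body atom $S(o_{n-1}, y)$ in the root context, which was introduced by $r$-\ruleword{Succ} firing on a non-root clause $C' = \Gamma' \to \Delta' \vee S(o_{n-1}, x)$ whose head is maximal and, by the side condition (*) of $r$-\ruleword{Succ}, is not blocked. Because $C'$ is derived strictly before $o$ is created, its derivation is $o_{n-1}$-capped. The induction hypothesis applied to $C'$ then furnishes the remaining $n-1$ clauses $C_1, \dots, C_{n-1}$.

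Finally I would verify conditions \ref{cond:clausechain:maximal}--\ref{cond:clausechain:type} for the assembled chain. Maximality \ref{cond:clausechain:maximal} and $o_i$-cappedness \ref{cond:clausechain:capped} follow from the nominal ordering and the temporal order of derivations; ground compatibility \ref{cond:clausechain:gcomp-1}, \ref{cond:clausechain:gcomp-2} follows because every rule on the back-tracing path (\ruleword{Hyper}, \ruleword{Pred}, \ruleword{Eq}, $r$-\ruleword{Pred}) copies ground body and head literals verbatim from its premises into its conclusion. The hard part will be condition \ref{cond:clausechain:type}: the pairwise non-subsumption of the non-ground bodies $\Gamma'_i$. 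My plan there is a contradiction argument, showing that $\Gamma'_i \subseteq \Gamma'_j$ with $i < j$ would let $C_j$ together with its non-ground head literal $x \equals o_j$ exhibit the blocking condition (*) for the $r$-\ruleword{Succ} step that initiated the sub-chain of $C_i$, contradicting the fact that this $r$-\ruleword{Succ} actually fired. A subsidiary technical issue, which is part of the same obstacle, is to justify that $C'$'s derivation really is $o_{n-1}$-capped at the strict level required by the induction; this relies on arguing that no earlier \ruleword{Nom} application has created a nominal sharing both its base name and the length-$(n{-}1)$ label-prefix with $o$ at the time $C'$ is derived.
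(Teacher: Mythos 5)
Your plan follows essentially the same route as the paper's proof: induction on the depth of $o$, applying the induction hypothesis to the unblocked, $o_{n-1}$-capped clause $\Gamma' \to \Delta' \vee S(o_{n-1},x)$ that fed the $r$-\ruleword{Succ}/\ruleword{Nom}/$r$-\ruleword{Pred} chain to obtain $C_1,\dots,C_{n-1}$, extracting $C_n$ from the back-propagated literal $x \equals o$, and refuting $\Gamma'_i \subseteq \Gamma'_j$ by exhibiting a blocking clause for the $r$-\ruleword{Succ} step. The one step that does not go through as written is your claim that $x \equals o$ is already maximal in the clause produced directly by $r$-\ruleword{Pred}: the \ruleword{Nom} conclusion carries $K$ sibling disjuncts $y \equals o_{\rho\cdot S^i}$ together with the residual head $\Delta$, so after back-propagation the particular literal $x \equals o$ is in general not maximal there. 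The paper instead follows the path along which this literal persists until it first becomes maximal and no \ruleword{Join} simplification is applicable, and only then takes that clause as $C_n$; this choice is also what rescues your ground-compatibility argument, since \ruleword{Join} can delete ground body atoms, so ground literals are not simply ``copied verbatim'' along the path unless one stops at the point where \ruleword{Join} no longer applies.
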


\begin{proof}
We proceed by induction on the depth of the nominal. Observe that for the case $n=0$, the result is trivially true, so we move on to the induction step.

Let $o$ be a nominal of depth $n > 0$; let $\rho$ be its label, and suppose we have an $o$-capped clause of the form $C = \Gamma \rightarrow \Delta \vee S(o,x)$ where $\Delta \not \succeq S(o,x)$ which is not blocked, in a derivation where every clause is $o$-capped. We consider the sub-derivation $T_C$ corresponding to generation of this clause. Such derivation $T_C$ must have a path $\mu$ from a subderivation of a clause that introduces $o$ for the first time in the trivial context, such that $o$ occurs in the head of every clause in this path. Let $C^1 = \Gamma^1 \to \Delta^1 \vee S_1(o',x)$ with $\Delta^1 \not \succeq S_1(o',x)$ be the premise of the $r$-\ruleword{Pred} rule that introduces $o$ in $\SC{v}$ for the first time. Observe that $C^1$ is not blocked, and that it is $o'$-capped, with $o'$ the nominal preceding $o$. 
For simplicity, we assume that there is only one premise to this rule in $\SC{v}$. This is without loss of generality, since if there were more than one, they would have the same form and properties, with possibly a different role $S$ in the maximal atom (because they are premises in this application of $r$-\ruleword{Pred}), and the argument relies simply on premises having this property. Observe that we are in the conditions of the lemma for a nominal of smaller depth, so by induction hypothesis, ther exist clauses $C_1,\dots,C_{n-1}$ with the conditions of the lemma. 

Observe now that we have that clause $C^2= \Gamma^1 \to \Delta^1 \vee \Delta^2$ is generated, with $\Delta^2 \in \rootPRtrig \sigma$, where $\sigma$ maps $y$ to $x$, and there is an atom of the form $x \equals o$ in $\Delta^2$, although it is not maximal. We then have that in advancing along $\mu$, the literal is conserved, or it is used as the selected literal in some inference. By choice of $\mu$, since the literal appears in $S(o,x)$, we must have that at some point the literal is maximal and no simplifcations by \ruleword{Join} can be applied. Let this clause be $C^3 = \Gamma^3 \to \Delta^3 \vee x \equals o$, with $\Delta^3 \not \succeq x \equals o$. We choose this clause as the new $C_n$ in the lemma. We show that all the conditions of the lemma are verified. Observe that \cref{cond:clausechain:maximal,cond:clausechain:capped,cond:clausechain:gcomp-1} are satisfied by induction hypothesis for $1 \leq i \leq n-1$, so we prove them for $i=n$.

\begin{itemize}

\item \Cref{cond:clausechain:maximal} is directly verified by choice of $C_n$.
\item \Cref{cond:clausechain:capped} is verified because by induction hypothesis, every clause in the derivation of $C$ is $o$-capped.
\item For \cref{cond:clausechain:gcomp-1}, consider the fragment of $\mu$ from $C^3$ to $C$. If $A \in \Gamma^3$, we need to prove that $A \in \Gamma$. The only reason why this would not be the case, is if $A$ participates in some instance of the \ruleword{Join} rule; however, if this were to be the case, then the other premise used in this application of the \ruleword{Join} rule has a head which is entirely ground, it could have been applied to $C^3$, which contradicts our choice of $C_n$. Furthermore, we have no $A \in \Delta$, due to the ordering of the literals. Therefore, $C^3$ is ground-compatible with $C$.
\end{itemize}

We already have that \cref{cond:clausechain:gcomp-2,cond:clausechain:type} for any $1 \leq i < j \leq n-1$ are verified by induction hypothesis. Thus, let $1 \leq i \leq n-1$, and let us show that $C_i$ is ground compatible with $C$, and $\Gamma'_i \not \subseteq \Gamma'$.

\begin{itemize} 
\item For \cref{cond:clausechain:gcomp-2}, observe that $C^i$ is ground-compatible with $C^{i-1}$, and every ground atom in $C^{i-1}$ is in $C^3$ by construction of $C^3$, therefore $C^i$ is also ground-compatible with $C$.
\item For \cref{cond:clausechain:type}, we proceed using proof by contradiction. Suppose that $\Gamma'_i \to \Gamma'$. 
But then, since $C^i$ is ground-compatible with $C$, and the head of $C^i$ has $x \equals o_i$ as a maximal literal, $C^i$ blocks $C$, which contradicts the assumption that $C$ is not blocked. 
\end{itemize}
This concludes the proof of the lemma.
\end{proof}

Observe that due to \cref{thm:termination}, the signature of the context structure is double exponential in the signature of the original ontology: the length of each $\rho$ is at most equal to the number of types. Since the variables in \SUtrig are only $y$ and $x$, we have that the number of elements in \SUtrig is linear in the size in the ontology, and therefore the number of types is exponential on the size of $\Onto$. Moreover, observe that in each position of the label, we can have any role and any number featuring in a number restriction or a function symbol i.e. we have a quadratic number of possibilities. Thus, we may have a number of different labels which is doubly exponential in the size of $\Onto$. 

Let $k$ be the number of DL-clauses on $\Onto$, and let $m$ the be the larger
of the maximum number of body atoms of a DL-clause in $\Onto$ and the maximum
size of the body of a context clause; $k$ is linear in $\Onto$
since the number of variables in a context clause is fixed, while $m$ can be cubic
in the size of the extended signature. The number $\wp$ of
context clauses that can be constructed using the symbols in $\Onto$ is at most
exponential in the size of these numbers, so it is triple exponential in the size of $\Onto$.

Observe that if the number of contexts is finite, the total number of
context clauses is finite as well. Moreover, once an inference is applied, its
preconditions never become satisfied again. Hence, the number of possible
inferences is finite and each inference is performed just once, so
the algorithm given in Steps 1--3 terminates.

 \newpage
\section{Complexity for $\mathcal{ALCHOIQ}$}
\label{sec:complexity:ALCHOIQ}

\termination*

\begin{proof}
The proof of termination has been given in the previous section. Now, wssume that the strategy can introduce at most $n$
contexts, where $n$ is exponential in the size of $\Onto$. The number of
possible inferences is bounded as follows.
\begin{itemize}
    \item The number of distinct inferences by the \ruleword{Hyper} rule within
    each context is bounded by ${k \cdot \wp^m}$. Hence, the total number of
    inferences is bounded by ${k \cdot \wp^m \cdot n}$, which is exponential in
    the size of $\Onto$.

    \item The number of clauses participating in each distinct inference by a
    rule other than \ruleword{Pred}, \ruleword{Succ}, $r$-\ruleword{Pred}, or $r$-\ruleword{Succ} is constant, 
    so an
    exponential bound on the number of inferences by these rules can be
    obtained analogously to the \ruleword{Hyper} rule.

    \item The \ruleword{Pred} and $r$-\ruleword{Pred} rules are applied to a pair of contexts, and
    each inference involves one clause from one context and at most $m$ clauses
    from the other context. Hence, the number of distinct inferences is bounded
    by ${\wp \cdot \wp^m \cdot n^2}$, which is triple exponential in the size of
    $\Onto$.

    \item The \ruleword{Succ} and $r$-\ruleword{Succ} rules can be applied to any context. Now consider
    an application of the rule to a context $u$, and let clauses ${\Gamma
    \rightarrow \Delta \vee A}$ and ${\Gamma' \rightarrow \Delta' \vee
    A'\sigma}$ play the roles as specified in \cref{tab:newrules}. The
    preconditions of the \ruleword{Succ} rule can become satisfied either when
    a clause ${\Gamma \rightarrow \Delta \vee A}$ is added to $\S{u}$, or when
    a clause ${\Gamma' \rightarrow \Delta' \vee A'\sigma}$ is added to $\S{u}$
    and thus changes the set $K_2$ or $K_3$. Hence, the rule can become applicable at
    most ${\wp^2 \cdot n}$ times, which is triple exponential in the size of $\Onto$.    
    
    \qedhere
\end{itemize}
\end{proof}

\subsection{Pay-as-you-go behaviour}

\payasyougo*

\subsubsection{The $\mathcal{ALCHIQ}$ Description Logic}

If $\Onto$ is in $\mathcal{ALCHIQ}$, the set $\TrueIndiv$ is empty, and therefore so is the set $\AllIndiv$. This produces the following consequences: 
\begin{itemize}
\item In a context structure $\mathcal{D}$ that contains no elements in $\AllIndiv$, no execution of an inference rule will add an element in $\AllIndiv$. Therefore, if we follow the standard application of the calculus and initialise $\ContextStructure$ as usual, it will not contain elements from $\AllIndiv$, and no such element will be introduced by the calculus in the process leading to saturation. 
\item Rules \ruleword{Join}, $r$-\ruleword{Pred}, $r$-\ruleword{Succ}, and \ruleword{Nom} will not be applied; the other rules in the calculus correspond exactly to those in \citeA{DBLP:conf/kr/BateMGSH16}, which is proven to be \textsf{ExpTime}, and therefore worst-case optimal.
\end{itemize}
Observe that according to the definition of $\mathcal{ALCHIQ}$ ontologies given, no ABoxes are included as part of the ontology. If they were to be included, $\Onto$ could also have DL-clauses of the form DL7 (but not DL8). 
\begin{lemma}
Given a sound context structure $\ContextStructure$ such that every $o \in \AllIndiv$ occurs in the root context or in a head ground $\pred$-term, and there are not edges labelled with some $o \in \AllIndiv$, and if every DL-Clause in $\Onto$ is $\mathcal{ALCHIQ}$ or of the form DL7, then applying a rule from \cref{tab:oldrules,tab:newrules} yields a context structure $\ContextStructure'$ where every $o \in \AllIndiv$ occurs in the root context or in a head ground $\pred$-term.
\end{lemma}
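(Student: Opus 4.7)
The plan is to proceed by case analysis on which inference rule from \cref{tab:oldrules,tab:newrules} is applied to $\ContextStructure$, verifying in each case that the newly added clauses preserve the invariant that every nominal $o \in \AllIndiv$ occurs either in the root context $\SC{v_r}$ or in a head ground $\pred$-term. The rules \ruleword{Core}, \ruleword{Ineq}, \ruleword{Fact}, and \ruleword{Elim} are immediate since they do not introduce new nominals into positions that were not already allowed. For \ruleword{Hyper}, the only DL-axioms in $\Onto$ that mention nominals are of the form DL7, namely $\top \to B(o)$; the derived head $\Delta\sigma$ is then $B(o)$, a head ground $\pred$-term, in any context. For any $\mathcal{ALCHIQ}$ axiom, $\Delta$ contains no nominals at all, and since $\sigma(x) = x$ in non-root contexts the derived head $\Delta\sigma$ also remains nominal-free; in the root context all atoms are permitted. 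The $\Gamma_i$ and $\Delta_i$ inherited from premises satisfy the invariant by hypothesis on $\ContextStructure$.

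For the rules that modify the edge structure, I would argue as follows. \ruleword{Succ} populates each newly created context with clauses $A' \to A'$ for $A' \in \SUtrig$; since atoms in \SUtrig contain no nominals, no bad occurrences of nominals arise in the new context. For $r$-\ruleword{Succ}, the added clause $A \to A$ is placed in $\SC{v_r}$, and although a new edge $\langle u, v_r, o \rangle$ is created, the conclusion of the lemma only constrains the location of nominals within clauses, so this is fine. The rule $r$-\ruleword{Pred} cannot fire at all, since by hypothesis no edge of the form $\langle u, v_r, o_i \rangle$ exists in $\ContextStructure$. The rule \ruleword{Join} requires a ground body atom $A$ containing some individual $o$; by the invariant, such a body atom can occur only in $\SC{v_r}$, so \ruleword{Join} can only be applied there, and its additions are confined to the root. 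Similarly, \ruleword{Nom} explicitly adds its conclusion to $\SC{v_r}$, so any new nominals it introduces (of the form $o_{\rho \cdot S^i}$) appear only in the root context.

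The main difficulty lies with \ruleword{Pred} and, to a lesser extent, \ruleword{Eq}. For \ruleword{Pred} with non-root $v$ and non-root $u$, the $\Gamma_i$ and $\Delta_i$ inherit the invariant from $\SC{u}$ directly, and the substitution $L_i\sigma$ is harmless because by the invariant any $L_i \in \PRtrig$ in $\SC{v}$ containing a nominal would have to be of the form $x \equals o$ or $y \equals o$, which are not head ground $\pred$-terms and hence are already forbidden. For \ruleword{Eq}, the side condition ``if $s_2|_p \in \AllIndiv$ then $s_2$ contains no function symbols'' combined with the absence (by invariant) of non-root head equalities involving nominals rules out the potentially problematic rewriting scenarios: no clause in $\SC{v}$ for non-root $v$ has a head equality $o \equals t$ that could act as $s_1 \equals t_1$, and the only nominal-containing heads available are of the form $B(o)$, whose rewriting by the available rules leads to no new forbidden literal.

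The hardest subcase will be \ruleword{Pred} with $u = v_r$ and $v \neq v_r$: root-context clauses may well contain nominal atoms both in bodies and in heads, and the derived clause has body $\bigwedge \Gamma_i \wedge \bigwedge C_i$ added to $\SC{v}$. To handle this, I plan to establish in parallel a structural sub-invariant on the shape of clauses generated in $\SC{v_r}$ by the calculus when $\Onto$ is restricted to $\mathcal{ALCHIQ}$ plus DL7: the only way such a clause ends up matching $A_i \sigma$ with $\sigma = \{y \mapsto o, x \mapsto f(o)\}$ is when the body $\Gamma_i$ is either empty or consists only of atoms that, after the implicit $x \mapsto y$ renaming between $u$- and $v$-notation, produce nominal-free context atoms in $\SC{v}$. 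Verifying this sub-invariant by tracking which \ruleword{Hyper}, $r$-\ruleword{Succ}, and \ruleword{Nom} derivations can contribute to the relevant $\SC{v_r}$ clauses is the step I expect to be the main obstacle, as it requires a careful joint induction with the original invariant rather than the clean one-step case analysis available for the other rules.
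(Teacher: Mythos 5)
Your overall strategy---a rule-by-rule case analysis showing that no inference can move a nominal out of the root context or out of a head ground $\pred$-term---is the same as the paper's, and your handling of \ruleword{Core}, \ruleword{Ineq}, \ruleword{Fact}, \ruleword{Elim}, \ruleword{Hyper}, \ruleword{Succ}, \ruleword{Join}, \ruleword{Nom}, \ruleword{Eq} and $r$-\ruleword{Pred} matches the paper's (terser) argument. One genuine divergence: the paper asserts that $r$-\ruleword{Succ} is never triggered at all (and uses that to dismiss $r$-\ruleword{Pred} and \ruleword{Nom}), whereas you allow $r$-\ruleword{Succ} to fire and observe that its conclusion $A \to A$ lands in $\SC{v_r}$ and therefore satisfies the invariant anyway. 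Your reading is defensible, since $B(o) \in \rootSUtrig$ is a head ground $\pred$-term that the hypothesis explicitly permits in non-root contexts; but note that if $r$-\ruleword{Succ} does fire it creates an edge labelled by a nominal, and the lemma's conclusion does not re-assert the ``no nominal-labelled edges'' part of the hypothesis, so you should flag that your reading weakens the statement's usability as an invariant across repeated rule applications.

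The real gap is the case you yourself identify as the ``main obstacle'': \ruleword{Pred} with $u = v_r$. You propose to close it with a joint induction over an auxiliary structural sub-invariant on the shape of root-context clauses, but you never carry this out, so as written the proof is incomplete. Moreover, the obstacle is an artifact of reading the \ruleword{Pred} rule's conclusion as being added to $\SC{v}$ (the successor); the soundness proof verifies the derived clause against $\core{u}$ and the completeness proof derives the \ruleword{Pred} conclusion in $\SC{u}$, so the conclusion in fact belongs to the \emph{predecessor} context $u$. With that reading the case is immediate: if $u = v_r$ the derived clause lives in the root context, where every occurrence of a nominal is permitted by the invariant; if $u \neq v_r$ then the $\Gamma_i$ and $\Delta_i$ inherit the invariant from $\SC{u}$, the ground body atoms $C_i$ of the non-root premise contain no nominals (the hypothesis forbids nominals in non-root bodies), ground $L_i$ are copied unchanged into the head, and non-ground $L_i \in \PRtrig$ acquire no nominals under $\sigma = \{y \mapsto x,\, x \mapsto f(x)\}$. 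No auxiliary sub-invariant is needed; the paper disposes of this with the single remark that \ruleword{Pred} and \ruleword{Succ} merely copy ground atoms.
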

\begin{proof}
Observe that $r$-\ruleword{Succ} will not be triggered, since by the conidtions of the lemma we have that no literal of \rootSUtrig can appear in a context clause. This also means that $r$-\ruleword{Pred} and \ruleword{Nom} cannot be applied. Rules \ruleword{Pred} and \ruleword{Succ} simply copy ground atoms, and \ruleword{Eq} and \ruleword{Factor} cannot introduce a new literal with with some $o \in \AllIndiv$  since there are no \ab-equalities or inequalities involving some $o \in \AllIndiv$ in $\ContextStructure$. Therefore, the only rule that can introduce some atom or literal which violates the condition of the lemma is \ruleword{Hyper}. However, the only literals that \ruleword{Hyper} can introduce are ground literals by some DL-clause of the form DL7, which do not violate the conditions of the lemma.
\end{proof}
Therefore, we can see that, in particular, the rule \ruleword{Nom} will never be triggered, and therefore the extended signature corresponds to the original signature. We therefore avoid the double exponential blow-up of $m$ in the proof given in \cref{sec:complexity:ALCHOIQ}. Now, $m$ is linear in the size of $\Onto$ outside the root context. Therefore the proof shows that the total complexity is exponential.

\subsubsection{The $\mathcal{ALCHOI}$ Description Logic}

If $\Onto$ is in $\mathcal{ALCHOI}$, it is easy to see that the rule \ruleword{Nom} is not triggered because there is no axiom that verifies the preconditions that trigger the rule; indeed, the only axioms in any $\mathcal{ALCHOI}$ ontology that may trigger this rule are those which have equalities between neighbour variables in the head, as only those can result in a literal of the form $y \equals y$ or $y \equals f(o)$ after applying a substitution. However, such DL-clauses are absent from $\mathcal{ALCHOI}$ ontologies. Therefore, the extended signature corresponds to the original signature and $m$ in the proof of \cref{sec:complexity:ALCHOIQ} is only cubic in the size of $\Onto$, as we can have any combination of predicate, individual in first position, and individual in second position. The proof therefore entails that the total complexity is \textsf{ExpTime}, and therefore worst-case optimal.

\subsubsection{The $\mathcal{ALCHOQ}$ Description Logic}

If $\Onto$ is in $\mathcal{ALCHOQ}$, we can show that no axiom of the form $S(o,x)$ can be generated, and therefore $r$-\ruleword{Succ} is never triggered, which means that neither $r$-\ruleword{Pred} nor \ruleword{Nom} will be triggered, and once again we have that the extended signature will correspond to the original signature, and the double exponential blow-up in $m$ in the proof in \cref{sec:complexity:ALCHOIQ} is avoided. Once again, $m$ is cubic in the size of $\Onto$ and the total complexity is \textsf{ExpTime}; thus, it is worst-case optimal.

\begin{lemma}
Let $\ContextStructure$ be a context structure where there are no context \pred-terms of the form $S(x,y),S(x,x),S(o,x),S(f(x),x)$ (not even in cores), and no equalities of the form $f(x) \equals y$ or there are no edges labelled with some $o \in \AllIndiv$. Then, applying a rule from \cref{tab:oldrules,tab:newrules} yields a context structure $\ContextStructure'$ where the same conditions hold.
\end{lemma}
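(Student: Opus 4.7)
The plan is to case-analyse each inference rule in Tables \ref{tab:oldrules} and \ref{tab:newrules}, checking that its conclusion preserves all three invariants: absence of the forbidden context \pred-terms $S(x,y)$, $S(x,x)$, $S(o,x)$, $S(f(x),x)$; absence of equalities $f(x) \equals y$; and absence of edges labelled by an individual $o \in \AllIndiv$. The pivotal ingredient of the proof is that $\mathcal{ALCHOQ}$ ontologies lack DL6 axioms, so the heads of DL-clauses produce binary atoms only in the ``forward'' orientations $S(x, f_i(x))$ (via DL2), $S_{B_2}(z_1, x)$ (via DL4), and $S_2(z_1, x)$ (via DL5); the predecessor variable $z_1$ never occurs in the right-hand position of a binary head atom.

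First I would dispatch the \ruleword{Core} rule: by \cref{def:context-structure} cores contain only $B(x)$, $S(x,y)$, $S(y,x)$, and invariant (i) forbids $S(x,y)$, so no forbidden atom can be generated. For \ruleword{Hyper}, I would observe that when a body atom $S(z_1, x)$ is matched against a head atom in $\SC{v}$, invariant (i) leaves $S(y, x)$ as the only possible match, so $\sigma(z_1) = y$; symmetrically, matching $S(x, z_i)$ constrains $\sigma(z_i) \in \{f(x), o\}$ via matches $S(x, f(x))$ or $S(x, o)$. Plugging these into the permitted DL-heads produces atoms of allowed shapes only, and never an equality $f(x) \equals y$. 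The rules \ruleword{Pred} and \ruleword{Succ} are handled analogously using the forms of $\SUtrig$ and $\PRtrig$. The remaining equality-manipulating rules \ruleword{Eq}, \ruleword{Ineq}, \ruleword{Factor}, \ruleword{Elim} do not create new atom shapes beyond those in their premises, modulo the check that an \ruleword{Eq} rewrite of $y$ inside $S(y,x)$ cannot occur---the needed equality $y \equals f(x)$ or $y \equals o$ cannot be oriented left-to-right given invariant (ii) and \cref{def:order}.

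The hard part concerns the nominal rules. Rule \ruleword{Nom} requires a match producing a head literal $y \equals y$ or $y \equals f_i(o_i)$ after a root-context substitution $\sigma(x) = o$; tracing this back to the DL-axiom yields equalities of $z_i$ variables that can only be grounded to $y$ via a context head atom $S_{B_2}(x, y)$ in $\SC{v_r}$, which is forbidden by (i). Rule $r$-\ruleword{Succ} would produce an $o$-labelled edge whenever a non-root context derives a clause with maximal head atom $A\sigma \in \{B(o), S(x, o), S(o, x)\}$; invariant (i) eliminates $S(o, x)$, and the main obstacle is arguing that $B(o)$ and $S(x, o)$ cannot serve as such a trigger under the invariant. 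This reduces to a subsidiary inductive argument showing that ground atoms of these forms arise in non-root contexts only as \ruleword{Succ}-introduced tautologies $A \to A$, or from DL7 via \ruleword{Hyper} with empty body, and that the side condition (*) of $r$-\ruleword{Succ}---together with the absence of $S(o, x)$-shaped atoms and of $x \equals y, x \equals o', y \equals o'$ equalities derivable under (i) and (ii)---blocks the rule from firing in those remaining cases. Once $r$-\ruleword{Succ} is shown not to trigger, $r$-\ruleword{Pred} and \ruleword{Nom} have no usable $o$-labelled edges to act on, and \ruleword{Join} operates strictly within a single context, preserving the invariant trivially.
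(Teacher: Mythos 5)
Your overall route is the same as the paper's: a rule-by-rule check that the invariant is preserved, keyed on the fact that $\mathcal{ALCHOQ}$ ontologies lack DL6 and hence DL-clause heads only produce ``forward-oriented'' binary atoms. For \ruleword{Core}, \ruleword{Hyper}, \ruleword{Pred}, \ruleword{Succ} and the equality rules your analysis is at least as detailed as the paper's (the paper dispatches \ruleword{Hyper} in one sentence, whereas you work out the constraints on $\sigma$ explicitly; the paper's \ruleword{Eq} case instead focuses on showing that $S(o,x)$ and $S(f(x),x)$ can only be produced by rewriting already-forbidden atoms). Your direct argument that \ruleword{Nom} cannot fire --- because a head literal $y \approx y$ or $y \approx f_i(o_i)$ needs some $z_i$ grounded to $y$, hence an atom $S(o,y)$ in a root-context head, which can only be seeded from a forbidden $S(o,x)$ --- is sound and is actually more informative than the paper's indirect ``no $o$-labelled edges, hence no \ruleword{Nom}''.

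The gap is at $r$-\ruleword{Succ}, and it is the step on which everything else rests. You correctly observe that the invariant only rules out $S(o,x)$ as a trigger, leaving $B(o)$ and $S(x,o)$ (the paper silently ignores these). But your proposed repair --- a ``subsidiary inductive argument'' that such atoms arise only as \ruleword{Succ}-tautologies or from DL7, and that side condition (*) then blocks the rule --- does not go through as sketched: a DL7 axiom yields $\top \to B_1(o)$ in \emph{every} context via \ruleword{Hyper} with empty body, and condition (*) does not block $r$-\ruleword{Succ} on that clause (there is in general no clause $\top \to \bigvee L_i$ with every $L_i$ an equality of the form $x \approx o_i$, $y \approx o_i$ or $x \approx y$), so an $o$-labelled edge \emph{is} created and the third conjunct of your invariant fails. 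The sentence ``once $r$-\ruleword{Succ} is shown not to trigger, $r$-\ruleword{Pred} and \ruleword{Nom} have no usable $o$-labelled edges'' therefore rests on a false premise. The correct fix is to weaken the invariant: allow $o$-labelled edges and $B(o) \to B(o)$, $S(y,o) \to S(y,o)$ tautologies in $\SC{v_r}$, and show instead that $S(o,y)$ never reaches a root-context head --- which is exactly what your \ruleword{Nom} paragraph already establishes and is all the complexity bound needs. As written, though, the invariant you are trying to preserve is not preserved, so the induction does not close.
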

\begin{proof}
Observe that there are no axioms of the form DL6, there is no DL-clause that, in combination with an application of \ruleword{Hyper} would result in the generation of a context term of the form $S(x,y),S(o,x),S(f(x),x)$ if no such term exists beforehand. Trivially, \ruleword{Core}, \ruleword{Ineq}, \ruleword{Fact}, \ruleword{Join}, \ruleword{Factor}, and \ruleword{Elim} cannot add any such context term. Rule \ruleword{Pred} can add terms of the form $S(f(x),x)$, but only if one of the premises has a term of the form $S(x,y)$, which is forbidden explicitly by the lemma. Similarly, rule \ruleword{Succ} can add a term of the form $S(x,y)$, but only if the premise contains $S(f(x),x)$, which is also forbidden. Rule $r$-\ruleword{Succ} cannot be applied because we have no term of the form $S(o,x)$, so $\ContextStructure'$ still had no edges labelled with some $o \in \AllIndiv$. As a result of this condition being held, rules $r$-\ruleword{Pred} and \ruleword{Nom} cannot be applied. Finally, rule \ruleword{Eq} may generate elements of the form $S(o,x)$ using an equality only from \pred-terms $S(o',x)$ or $S(f(x),x)$, which are also forbidden. Finally, it may also generate an element of the form $S(f(x),x)$, but this requires, in addition to an equality $f(x) \equals g(x)$, some other element with the form $S(g(x),x)$, which is also forbidden.
\end{proof}

\subsubsection{The Horn-$\mathcal{ALCHOIQ}$ Description Logic}

If $\Onto$ is in Horn-$\mathcal{ALCHOQ}$, and the eager strategy is used, we prove that the following invariant applies to the context structure:

\begin{lemma}
Let $\ContextStructure$ be a context structure such that every clause in a non-root context is of the form $\top \to A$ with $A$ a context literal, as usual, and every clause in a root context is either of the form $\top \to A$ with $A$ a root context literal, or $\bigwedge_{i=1}^n S_i(o,y) \to A$, for $A$ a root context literal. Then, the application of any rule from \cref{tab:oldrules,tab:newrules} yields a context structure $\ContextStructure'$ where the same conditions hold.
\end{lemma}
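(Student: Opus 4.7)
The plan is to prove this by induction on the number of rule applications, starting from the fact that the initial context structure trivially satisfies the invariant (empty or just cores propagated by \ruleword{Core}, which yields $\top \to A$), and then showing that each of the rules in Tables~\ref{tab:oldrules} and~\ref{tab:newrules} preserves the shape constraint. The decisive structural fact driving every case is that in Horn-$\logic$, every DL-clause of type DL1 has head of size ${\leq} 1$ and every DL-clause of type DL4 satisfies $n = 1$, so the body of the non-trivial DL4 clause contains just one neighbour variable $z_1$ whose head is a single equality; consequently no DL-clause has more than one disjunct in its head (ignoring the equality disjuncts generated by DL2, which simply become inequalities and fit the scheme).

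First I would dispatch the non-root case. For \ruleword{Hyper}, by the invariant each premise $\Gamma_i \to \Delta_i \vee A_i\sigma$ used in $\SC{v}$ is of the form $\top \to A_i\sigma$, so $\bigwedge_i\Gamma_i = \top$ and $\bigvee_i\Delta_i$ is empty; combined with $|\Delta|\leq 1$ from the Horn ontology clause, the derived clause is $\top \to \Delta\sigma$, which is of the required shape. For \ruleword{Eq} and \ruleword{Fact}, the premises have empty body and a singleton-equality head, so only the equality-rewriting conclusion can appear and it again has empty body and singleton head (the \ruleword{Fact} rule's second conclusion requires the presence of two equalities with a common left-hand side in a single clause head, which the invariant rules out). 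The rules \ruleword{Ineq} and \ruleword{Elim} trivially cannot break the invariant, and \ruleword{Core} produces $\top \to A$ directly. The tautologies $A' \to A'$ introduced by \ruleword{Succ} are a minor bookkeeping exception; they can be folded into the statement either by strengthening the invariant to allow singleton bodies $A' \to A'$ with $A'\in\SUtrig$ on non-root contexts, or by noting that they never participate as the ``main'' premise of a rule except by being rewritten with a singleton-head clause, which preserves the shape.

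The root case is only slightly more involved because the substitution $\sigma$ used by \ruleword{Hyper}, \ruleword{Succ} and by propagation via $r$-\ruleword{Pred}/$r$-\ruleword{Succ} maps $x$ to some $o \in \AllIndiv$, so the body atoms $S(x,z_i)$ and $S(z_i,x)$ of an ontology clause may ground to $S(o,y)$ and $S(y,o)$ when $z_i$ is sent to $y$. These are precisely the atoms admitted by the extended root-context form $\bigwedge_i S_i(o,y) \to A$. Putting together the non-root invariant for the premises coming from predecessor contexts (which have empty bodies) with the restricted shape admitted on the root context, the union of premise bodies produced by \ruleword{Hyper}, $r$-\ruleword{Pred} or \ruleword{Join} in $\SC{v_r}$ is still a conjunction of atoms of the form $S_i(o,y)$ plus ground atoms, and the head remains a singleton root-context literal by the Horn head restriction. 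The symmetric case, propagation back from the root into a non-root context by $r$-\ruleword{Pred}, requires the premise clauses in $\SC{u}$ to be $\top \to A_i\sigma$ by the non-root invariant and hence the conclusion added to $\SC{u}$ has $\top$ body.

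The hard part, and the reason the lemma is needed, is verifying that \ruleword{Nom} is never applicable under these conditions. Its precondition requires an ontology DL-clause with a head that decomposes into a part of arbitrary equalities and a part of the form $y \equals y$ or $y \equals f_i(o_i)$, with at least one disjunct of the latter shape. Such a shape can only arise from DL4 clauses (via grounding of the $z_i \equals z_j$ head) or combinations of clauses, but the Horn restriction $n=1$ on DL4 forces its head to be a single equality, and DL1 has at most one head atom which is unary and hence cannot match; therefore the rule's multi-disjunct precondition on the ontology clause is never satisfied. Since \ruleword{Nom} never fires, no additional nominals $o_{\rho\cdot S^k}$ beyond those occurring in the original $\AllIndiv$ are introduced, the body-size bound $m$ of a context clause stays cubic in $|\Onto|$, and plugging this into the counting argument used for \ref{prop:termination-complexity} yields the exponential bound claimed in \ref{prop:worst-case-el} for Horn ontologies, completing the payoff of the invariant.
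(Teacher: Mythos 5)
There is a genuine gap, and it is in exactly the place you identify as ``the hard part'': your claim that \ruleword{Nom} is never applicable under the Horn restriction is false, and the paper's proof does not argue this. A Horn ontology may still contain DL4 axioms (with ${n=1}$), whose second clause has head $z_1 \equals z_2$ --- a single $\ab$-equality between neighbour variables. The \ruleword{Nom} rule has no ``multi-disjunct'' precondition: it only requires that the head decompose as $\bigvee_{i=1}^{m} L_i \vee \bigvee_{i=m+1}^{k}L_i$ with the second block grounding to literals of the form $y \equals y$ or $y \equals f_i(o_i)$, and ${m=0}$, ${k=1}$ is allowed. With $\sigma$ sending both $z_1$ and $z_2$ to $y$, the DL4 head grounds to $y \equals y$ and the rule fires. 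The paper's proof therefore handles an \emph{application} of \ruleword{Nom} and shows only that ${K+1 = \max(i \mid z_i \text{ in } \Onto) = 2}$, i.e.\ ${K=1}$, so the added clause has a singleton head $y \equals o'_{\rho \cdot S^1}$ and the invariant survives. Your downstream ``payoff'' is consequently also wrong: additional nominals \emph{are} introduced in the Horn case, and the exponential bound comes not from $m$ staying cubic but from the observation (made after the lemma) that the restricted clause shapes force a polynomial number of types, so the extended signature is only singly exponential.

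A second, smaller gap is your treatment of \ruleword{Succ}. The lemma is proved under the eager strategy (this hypothesis is stated in the surrounding text), and the paper uses it directly: since every clause in $u$ has empty body and singleton head, ${K_2 = K_1}$, and the eager strategy sets ${\core{v} = K_1}$, hence ${K_2 \setminus \core{v} = \emptyset}$ and \emph{no} tautologies $A' \to A'$ are added at all. Your proposed fixes --- weakening the invariant to admit $A' \to A'$ in non-root contexts, or asserting that such clauses ``never participate as the main premise'' --- respectively change the statement being proved and leave an unjustified claim. The rest of your case analysis (\ruleword{Hyper}, \ruleword{Eq}, \ruleword{Fact}, \ruleword{Pred}, $r$-\ruleword{Pred}, $r$-\ruleword{Succ}, and the root-context bookkeeping) matches the paper's argument in substance, though note that under the invariant root-context bodies contain no ground atoms (the $S_i(o,y)$ are non-ground), which is also why \ruleword{Join} simply cannot fire.
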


\begin{proof}
Applications of rules \ruleword{Core}, \ruleword{Ineq}, and \ruleword{Factor} trivially conserve the property described in the lemma. Moreover, \ruleword{Fact} and \ruleword{Join} cannot be applied because of the form of every context clause. We consider in turn each of the remaining rules:
\begin{itemize}
\item If we apply rule \ruleword{Hyper} to a non-root context, the fact that premises have an empty body means that the added clause will have an empty body. Moreover, since the head of DL-clauses in $\Onto$ has either one or zero literals, and $\Delta_i = \bot$ for each premise, we have that the head of the added clause also has one or zero literals. If we apply rule \ruleword{Hyper} to the root context, an analogous argument applies, although the body may now contain a conjunction of atoms of the form $S(o,y)$, but this still is in accordance to the conditions described in the lemma.
\item If we apply rule \ruleword{Eq} to a non-root context, we have  $\Gamma_1=\Gamma_2=\top$ and $\Delta_1 = \Delta_2 = \bot$, and therefore the derived clause verifies the condition described in the lemma. If we apply rule \ruleword{Eq} to a root context, an analogous argument applies, where now $\Gamma_1$ and $\Gamma_2$ may be conjunctions of atoms of the form $S(o,y)$, and therefore the body of the derived clause may also be a conjunction of atoms of such form; this is in accordance to the property described in the lemma.
\item If we apply rule \ruleword{Pred}, we have that $v$ is not the root context, so we always propagate clauses of the form $\top \to A$ in $v$, with $A$ a single literal, to clauses of the form $\top \to A'$ in $u$, with $A' = A \sigma$. 
\item If we apply rule \ruleword{Succ}, use of the eager strategy implies that $\core{v} = K_1$, and the fact that every clause in $u$ is of the form $\top \to A$ means that $K_2=K_1$, so  $K_2 \backslash \core{v} = \emptyset$, and every clause added to $v$ is of the form $\top \to A$ for $A$ a single literal.
\item If we apply rule $r$-\ruleword{Succ}, we add a single literal of the form $S(o,y) \to S(o,y)$ to the root context, and this verifies the conditions of the lemma.
\item If we apply rule $r$-\ruleword{Pred}, given that every literal in the body of the premise in $v_r$ must be of the form $S(o,y)$ and therefore non-ground, we have that the derived consequence is of the form $\top \to L$ with $L$ a single literal, since the premise in $v_r$ has also a single literal in its head.
\item If we apply rule \ruleword{Nom}, given that the axioms in $\Onto$ that trigger this rule can have at most one function symbol in the head, or two neighbour variables, we have that $K=1$, and therefore every clause added by this rule has only one literal in the head. 
\end{itemize}
\end{proof}

Observe that the number of clauses that can be derived in each context is therefore polynomial on the size of the extended signature. Moreover, any clause added by a \ruleword{Pred} or $r$-\ruleword{Pred} rule will not trigger a further application of \ruleword{Succ} or $r$-\ruleword{Succ} since the context $v$ (or $v_r$, respectively) already prevents the preconditions of the rule from being satisfied. Therefore, the core of each context will always be of the form $B(x)$, or $S(y,x)$, or $B(x) \wedge S(y,x)$. This means that a polynomial number of contexts will be derived. As a consequence of this, the number of different possible types in such context structures is polynomial, rather than exponential; therefore, the blow-up induced by the extended signature is exponential, rather than doubly exponential. The proof given in \cref{sec:complexity:ALCHOIQ} can be repeated now with $m$ polynomial in the size of the extended signature, which is exponential, and therefore, the proof shows that the total complexity is exponential. 

\subsubsection{The $\mathcal{ELHO}$ Description Logic}

The proof for the ontologies that are $\mathcal{ELHO}$ is completely analogous to that for Horn-$\mathcal{ALCHOQ}$, since the former is a fragment of the latter. The main difference is that since there are no number restrictions (or inverse roles), rule \ruleword{Nom} is not triggered, and therefore the size of the extended signature corresponds to the original size of the ontology. We saw in the previous section that in this case the the algorithm has a worst-case complexity which is polynomial on the size of the extended signature, and therefore it is polynomial in the size of $\Onto$.

\end{document}